\def\bbE{\mathbb{E}}
\def\bbR{\mathbb{R}}
\def\bbP{\mathbb{P}}
\def\cB{\mathcal{B}}
\def\cD{\mathcal{D}}
\def\cG{\mathcal{G}}
\def\cN{\mathcal{N}}
\def\cU{\mathcal{U}}
\def\cP{\mathcal{P}}
\def\cS{\mathcal{S}}
\def\cX{\mathcal{X}}
\def\cZ{\mathcal{Z}}
\def\btheta{\bm{\theta}}
\DeclareMathOperator*{\argmin}{argmin}
\def\dkl{D_{\text{KL}}}
\def\pto{\overset{p}{\to}}
\newcommand{\revise}[1]{{\textcolor{black}{#1}}}
\newcommand{\rr}[1]{{\textcolor{black}{#1}}}
\newcommand{\rrr}[1]{{\textcolor{black}{#1}}}
\def\PA{\textit{pendulum\_angle}}
\def\LA{\textit{light\_angle}}
\def\SL{\textit{shadow\_length}}
\def\SP{\textit{shadow\_position}}
\newtheorem{assu}{Assumption}
\begin{document}

\title{Weakly Supervised Disentangled Generative Causal Representation Learning}

\author{\name Xinwei Shen \email xshenal@ust.hk \\
       \addr Department of Mathematics \\
       The Hong Kong University of Science and Technology\\
       Hong Kong, China
       \AND
       \name Furui Liu \email liufurui@zhejianglab.com \\
       \addr Zhejiang Laboratory\\
       Hangzhou, China
       \AND
       \name Hanze Dong \email hdongaj@ust.hk \\
       \addr Department of Mathematics \\
       The Hong Kong University of Science and Technology\\
       Hong Kong, China
       \AND
       \name Qing Lian \email qlianab@ust.hk \\
       \addr Department of Computer Science \\
       The Hong Kong University of Science and Technology\\
       Hong Kong, China
       \AND
       \name Zhitang Chen \email chenzhitang2@huawei.com \\
       \addr Huawei Noah's Ark Lab\\
       Shenzhen, China
       \AND
       \name Tong Zhang \email tongzhang@ust.hk \\
       \addr Department of Computer Science and Mathematics\\
       The Hong Kong University of Science and Technology\\
       Hong Kong, China}

\editor{Yoshua Bengio}

\maketitle

\begin{abstract}
This paper proposes a Disentangled gEnerative cAusal Representation (DEAR) learning method \rrr{under appropriate supervised information}. Unlike existing disentanglement methods that enforce independence of the latent variables, we consider the general case where the underlying factors of interests can be causally related. We show that previous methods with independent priors fail to disentangle causally related factors \rrr{even under supervision}. Motivated by this finding, we propose a new disentangled learning method called DEAR that enables causal controllable generation and causal representation learning. The key ingredient of this new formulation is to use a structural causal model (SCM) as the prior distribution for a bidirectional generative model. The prior is then trained jointly with a generator and an encoder using a suitable GAN algorithm incorporated with supervised information \revise{on the ground-truth factors and their underlying causal structure}. We provide theoretical justification on the identifiability and asymptotic convergence of the proposed method. We conduct extensive experiments on both synthesized and real data sets to demonstrate the effectiveness of DEAR in causal controllable generation, and the benefits of the learned representations for downstream tasks in terms of sample efficiency and distributional robustness. 
\end{abstract}

\begin{keywords}
  disentanglement, causality, representation learning, deep generative model
\end{keywords}

\section{Introduction}

Consider the observed data $x$ from a distribution $q_x$ on $\mathcal{X}\subseteq\mathbb{R}^d$ and the latent variable $z$ from a prior $p_z$ on $\mathcal{Z}\subseteq\mathbb{R}^k$. In bidirectional generative models (BGMs), we are normally interested in learning an \textit{encoder} $E:\cX\to\cZ$ to infer latent variables and a \textit{generator} $G:\cZ\to\cX$ to generate data, to achieve both representation learning and data generation. Classical BGMs include Variational Autoencoder (VAE)~\citep{kingma2013auto} and BiGAN~\citep{Donahue2017AdversarialFL,Dumoulin2017AdversariallyLI}. 
 In representation learning, it was argued that an effective representation for downstream learning tasks should disentangle the underlying factors of variation~\citep{bengio2013representation}. In generative modeling, it is highly desirable if one can control the semantic generative factors by aligning them with the latent variables such as in StyleGAN~\citep{karras2019style}. 
Both goals can be achieved with the disentanglement of latent variable $z$, which informally means that each dimension of $z$ measures a distinct factor of variation in the data~\citep{bengio2013representation}. 

Earlier unsupervised disentanglement methods mostly regularized the VAE objective to encourage independence of learned representations~\citep{Higgins2017betaVAELB,Burgess2018UnderstandingDI,Kim2018DisentanglingBF,Chen2018IsolatingSO,Kumar2017VariationalIO}. Later, \citet{Locatelloetal19} showed that unsupervised learning of disentangled representations is impossible: many existing unsupervised methods are brittle, requiring careful supervised hyperparameter tuning or implicit inductive biases. To promote identifiability, recent work resorted to various forms of supervision~\citep{locatello2019disentangling,shu2019weakly,locatello2020weakly}. In this work, we also incorporate supervision on the ground-truth factors in the form \revise{of a certain number of annotated labels} as described in Section~\ref{sec:sup}. \revise{We will present experimental results showing that our method remains competitive with a small amount of labeled data (a minimum of around 100 samples).} 

Most of the existing methods, including those mentioned above, are built on the assumption that the underlying factors of variation are mutually independent. However, in many real-world cases, the semantically meaningful factors of interests are not independent~\citep{Bengio2020A}. Instead, such high-level variables are often causally related, i.e., connected by a causal graph. 

In this paper, we prove formally that methods with independent priors fail to disentangle causally related factors. Motivated by this observation, we propose a new method to learn disentangled generative causal representations called DEAR. The key ingredient of our formulation is a structural causal model (SCM)~\citep{pearl2000models} as the prior for latent variables in a bidirectional generative model. 
\revise{As discussed in Section~\ref{sec:learn_a}, we assume that a super-graph of the underlying causal graph is known a priori, which ranges from the causal ordering of the nodes in the graph to the true causal structure.} 
The causal model prior is then learned jointly with a generator and an encoder using a suitable GAN~\citep{goodfellow2014generative} algorithm.  
Moreover, we establish theoretical guarantees for DEAR \rr{on how it resolves the unidentifiability issue of many existing methods as well as on the asymptotic convergence of the proposed algorithm.} 

An immediate application of DEAR is causal controllable generation, which can generate data from many desired interventional distributions of the latent factors. Another useful application of disentangled representations is to use such representations in downstream tasks, leading to better sample complexity~\citep{bengio2013representation,scholkopf2012causal}. Moreover, it is believed that causal disentanglement is invariant and thus robust under distribution shifts~\citep{scholkopf2019causality,arjovsky2019invariant}. In this paper, we demonstrate these conjectures in various downstream prediction tasks for the proposed DEAR method, which has theoretically guaranteed disentanglement property.

We summarize our main contributions as follows:
\begin{itemize}
\item We formally identify a problem with previous disentangled representation learning methods using the independent prior assumption, and prove that they fail to disentangle when the underlying factors of interests are causally related, \rrr{even under supervision of the latents}.
\item We propose a new disentangled learning method, DEAR, which integrates an SCM prior into a bidirectional generative model, trained with a suitable GAN algorithm.
\item We provide theoretical justification on the identifiability\footnote{\rrr{Note that the identifiability in this work differs from that in \citet{khemakhem2020variational} in terms of goals and assumptions. See more discussions in the related work and below Proposition~\ref{thm:identify}.}} of the proposed formulation and the asymptotic convergence of our algorithm.
\item Extensive experiments are conducted on both synthesized and real data to demonstrate the effectiveness of DEAR in causal controllable generation, and the benefits of the learned representations for downstream tasks in terms of sample efficiency and distributional robustness.
\end{itemize}

\medskip
\noindent
\textbf{Notation} \ \ 
Throughout the paper, all distributions are assumed to be absolutely continuous with respect to Lebesgue measure unless indicated otherwise. For a vector $x$, let $[x]_i$ denote the $i$-th component of $x$. 
For a scalar function $h(x,y)$, let $\nabla_x h(x,y)$ denote its gradient with respect to $x$ and $\nabla^2_x h(x,y)$ denote its Hessian matrix with respect to $x$. For a vector function $g(x,y)$, let $\nabla_x g(x,y)$ denote its Jacobian matrix with respect to $x$. 
Without ambiguity, $\nabla_x$ is denoted by $\nabla$ for simplicity.
Notation $\|\cdot\|$ stands for the Euclidean norm. 
\begin{definition}[Smoothness]\label{def:smooth}
Consider a function $h(x):\bbR^{d}\to\bbR$. $h(x)$ is $\ell_0$-smooth with respect to $x$ if $h(x)$ is differentiable and its gradient is $\ell_0$-Lipschitz continuous, i.e., we have 
\begin{equation*}
	\|\nabla h(x)-\nabla h(x')\| \leq \ell_0\|x-x'\|,\quad \forall x,x'\in\bbR^{d}.
\end{equation*}
\end{definition}

\begin{definition}[Polyak-\L{ojasiewicz}]\label{def:pl}
For a set $\cS\subseteq\bbR^{d}$, consider a function $h(x):\cS\to\bbR$ and let $h^*=\min_{x\in\cS}h(x)$. Then $h(x)$ satisfies the Polyak-\L{ojasiewicz} (PL) condition if there exists $c>0$ such that for all $x\in\cS$
\[
h(x) - h^* \leq c \|\nabla h(x)\|_2^2.
\]
\end{definition}

\medskip
\noindent
\textbf{Roadmap} \ \ 
In Section~\ref{sec:related}, we discuss the related work. In Section~\ref{sec:prob_setting}, we introduce the problem setting of disentangled generative causal representation learning and identify a problem with previous methods. In Section~\ref{sec:dear_main}, we propose the model, formulation and algorithm of DEAR, and provide theoretical justifications on both identifiability and asymptotic convergence. We then present 
empirical studies concerning causal controllable generation, downstream tasks and structure learning as well as ablation studies in Section~\ref{sec:exp}, and conclude in Section~\ref{sec:conclude}. Detailed proofs of all theorems, propositions and lemmas are deferred to Appendix~\ref{app:proof}.

\section{Related work}\label{sec:related}
\textbf{VAE-based disentanglement methods.} A number of methods have been proposed to enrich the VAE loss by various regularizers to enforce the independence of the latent variables. $\beta$-VAE~\citep{Higgins2017betaVAELB} and Annealed VAE~\citep{Burgess2018UnderstandingDI} introduced extra constraints on the capacity of the latent bottleneck by adjusting the role of the KL term; Factor-VAE~\citep{Kim2018DisentanglingBF} and $\beta$-TCVAE~\citep{Chen2018IsolatingSO} encouraged the aggregated posterior (i.e., the marginal distribution of $E(x)$) to be factorized by penalizing its total correlation; DIP-VAE~\citep{Kumar2017VariationalIO} enforced a factorized aggregated posterior differently by matching its moments with those of a factorized prior. \revise{Going beyond the independence perspective, \citet{suter2019robustly} considered disentangled causal mechanisms, meaning that all the generative factors are conditionally independent given a common confounder.  This is one special case of causal relationship, while we consider more general cases where the factors can have more complex causal relationships, e.g., one factor can be a direct cause of another one.}

Based on the above methods, \citet{locatello2019disentangling} and \citet{locatello2020weakly} further incorporated supervised information on a few labels of the generative factors and pairs of observations which differ by a few factors respectively, where the former is more related to ours which is discussed detailedly in Section~\ref{sec:sup}. \citet{shu2019weakly} proposed several concepts related to disentanglement, based on which they analyzed three forms of weak supervision including restricted labeling, match pairing, and rank pairing.

\rr{Going beyond the independent prior, \citet{khemakhem2020variational} proposed a conditional VAE where the latent variables are assumed to be conditionally independent given some additionally observed variables. Built upon developments of nonlinear ICA, they presented the first principled identifiability theory of latent variable models, in particular VAEs, thus leading to a form of provable disentanglement under suitable conditions. Our work, in contrast, does not aim at achieving general identifiability of latent variable models or general provable disentanglement, but contributes to resolving the failure of existing methods in disentangling causally related factors. With this motivation, we consider more general model assumptions on the latent structure as well as generating transformations than those in \citet{khemakhem2020variational} which apply more suitably to real-world data. To achieve disentanglement of causal factors, we need to adopt a more direct and somehow stronger form of supervision than \citet{khemakhem2020variational}, i.e., we require annotated labels of true factors for a possibly small number of samples. See Appendix~\ref{app:sup} for a discussion on the two forms of supervision. The model in \citet{khemakhem2020variational}, however, has not yet been applied with the most advanced network architecture for image generation such as StyleGAN \citep{karras2019style}, nor can their conditional independent prior models the causal structure of true factors. 
Therefore, their model and theory do not apply here and our work should be regarded complementary.} 

\revise{To avoid the unidentifiability of the standard Gaussian prior caused by rotation transformations, \citet{stuhmer2020independent} proposed hierarchical non-Gaussian priors for unsupervised disentanglement, which is not rotationally invariant. However, there remains other kinds of mixing transformations that leave these priors invariant, leading to unidentifiability. Besides, their proposed priors cannot model the causal relationships.} 

\revise{
Recently, a concurrent work by \citet{trauble2021disentangled} conducted a large-scale empirical study to investigate the behavior of the most prominent disentanglement approaches on correlated data. In particular, they considered the case where the ground-truth factors exhibit pairwise correlation. Although pairwise correlation largely generalizes the independence assumption, it is less general than the causal correlation that we consider. For example, a parental node with multiple children immediately goes beyond pairwise correlation. Moreover, \citet{trauble2021disentangled} focused on verifying the problem that existing methods fail to learn disentangled representations for strongly correlated factors, while we identify the problem as a motivation to propose a method to resolve it and learn disentangled representations under the causal case.} 

\medskip
\noindent
\textbf{GAN-based disentanglement methods.} Existing GAN-based methods, including InfoGAN~\citep{chen2016infogan} and InfoGAN-CR~\citep{lin2020infogan}, differed from our proposed formulation mainly in two folds. First, they still assumed an independent prior for latent variables, so suffered from the same problem with the previous VAE-based methods mentioned above. Besides, the idea of InfoGAN-CR was to encourage each latent code to make changes that are easy to detect, which only applies well when the underlying factors are independent. Second, as a bidirectional generative modeling method, InfoGAN further required variational approximation apart from adversarial training, which is inferior to the principled formulation in BiGAN and AGES~\citep{shen2020bidirectional} that we adopt.

\medskip
\noindent
\textbf{Generative modeling involving causal models in the latent space.} CausalGAN~\citep{kocaoglu2017causalgan} and a concurrent work~\citep{moraffah2020can} of ours, were unidirectional generative models (i.e., a generative model that learns a single mapping from the latent variable to data) that build upon a cGAN~\citep{mirza2014conditional}. They assigned an SCM to the conditional attributes while leaving the latent variables as independent Gaussian noises. The limit of a cGAN is that it always requires full supervision on attributes to apply conditional adversarial training. Also, the ground-truth factors were directly fed into the generator as the conditional attributes, without any extra effort to align the dimensions between the latent variables and the underlying factors, so their models had nothing to do with disentanglement learning.
Moreover, their unidirectional nature made them unable to learn representations. 
Besides, they only considered binary factors, so the consequent semantic interpolations appear non-smooth, as shown in Appendix~\ref{app:more}. 

CausalVAE~\citep{yang2020causalvae} assigned the SCM directly on the latent variables, while built upon iVAE~\citep{khemakhem2020variational}, it adopted a conditional prior given the ground-truth factors so was also limited to a fully supervised setting. 

\revise{
GraphVAE~\citep{he2018graphvae} generalized the chain-structured latent space proposed in Ladder VAE~\citep{sonderby2016ladder} and imposed an SCM into the latent space of VAE. The motivation behind GraphVAE is to improve the expressive capacity of VAE rather than to disentangle the underlying causal factors as ours. Purely from observational data and without any supervision on the underlying factors, the impossibility result from \citet{Locatelloetal19} indicated that a VAE model cannot identify the true factors. Therefore, the representations learned by GraphVAE were not guaranteed to disentangle the generative factors, and consequently the learned SCM did not reflect the true causal structure in principle. 
Moreover, their adopted VAE loss (ELBO) required an explicit form of KL divergence between the prior and the posterior, which limited the model choice for the SCM. Specifically, GraphVAE used an additive noise model with Gaussian noises. In contrast, our method does not require the distribution induced by the SCM to be explicitly expressed and in principle allows any SCMs that can be reparametrized as a generative model (i.e., given the exogenous noises, one can generate all the variables by ancestral sampling).  
For comparison, in our experiments, we include a baseline which extends the original GraphVAE method to incorporate the same amount of supervision as ours.}

\medskip
\noindent
 \textbf{Generative modeling involving other structured latent spaces.} VLAE~\citep{zhao2017learning} decomposed the latent space into separate chunks each of which is processed at different levels of the encoder and decoder. VQ-VAE-2~\citep{razavi2019generating} used a two-level latent space along with a multi-stage generation mechanism to capture both high and low level information of data. 
\revise{SAE~\citep{leeb2020structural} encouraged a hierarchical structure in the latent space through the structural architecture of the decoder. 
 These methods essentially adopted implicit probabilistic or architectural hierarchies, in contrast to the causal structure that we impose to the latent space, and thus cannot achieve the goal of causal disentanglement. For example, the hierarchy in SAE represents the level of abstraction, in the sense that more high-level, abstract features are processed deeper in the decoder and low-level, linear features are treated towards the end of the network. Such hierarchy differs essentially from the causal structure that we consider.} 

\medskip
\revise{Other works considered inferring the latent causal structure from visual data in the reinforcement learning setting \citep{dasgupta2019causal,nair2019causal}. In particular, \citet{nair2019causal} developed learning-based approaches to induce causal knowledge in the form of directed acyclic graphs, which was then utilized in learning goal-conditioned policies. The interactive environment enables the agent to perform actions and observe their outcomes. Therefore, the resulting data involves various interventions each of which entails an SCM and thus is essentially different from the common setting in the disentanglement literature which is also considered in this paper, where the observed data are independent and identically distributed.}

\section{Problem setting}\label{sec:prob_setting}

In this section, we describe the probabilistic framework of disentanglement learning based on bidirectional generative models (BGMs) with supervision, and formalize the unidentifiablility problem with previous methods.

\subsection{Generative model}\label{sec:gen}
We follow the commonly assumed two-step data generating process that first samples the underlying generative factors, and then conditional on those factors, generates the data~\citep{kingma2013auto}. 
During the generation process, the generator induces the generated conditional $p_{G}(x|z)$ and generated joint distribution $p_{G}(x,z)=p_z(z)p_G(x|z)$. During the inference process, the encoder induces the encoded conditional $q_E(z|x)$ which can be a factorized Gaussian and the encoded joint distribution $q_{E}(x,z)=q_x(x)q_{E}(z|x)$. 

We consider the following objective for generative modeling:
\begin{equation}\label{eq:obj_gen}
L_{\text{gen}}(E,G)=\dkl(q_{E}(x,z),p_{G}(x,z)),
\end{equation} 
where $\dkl(q,p)=\int q(x,z)\log(q(x,z)/p(x,z))dxdz$ is the Kullback-Leibler (KL) divergence between two distributions. Objective (\ref{eq:obj_gen}) is shown to be equivalent to the negative evidence lower bound (ELBO), 
\begin{equation}\label{eq:vae}
	\bbE_{x\sim q_x}[-\bbE_{q_E(z|x)}\log p_G(x|z) + \dkl(q_E(z|x),p_z(z))],
\end{equation}
used in VAEs up to a constant, and ELBO allows a closed form to be optimized easily only with factorized Gaussian prior, encoder and generator~\citep{shen2020bidirectional}.

Since constraints on the latent space are required to enforce disentanglement, it is desirable that the distribution families of $q_{E}(x,z)$ and $p_{G}(x,z)$ should be large enough, especially for complex data like images. As demonstrated in literature on image generation~\citep{karras2019style,mescheder2017adversarial}, implicit distributions, where the randomness is fed into the input or intermediate layers of the network, are favored over factorized Gaussians in terms of expressiveness. Then minimizing (\ref{eq:obj_gen}) requires adversarial training, as discussed detailedly in Section~\ref{sec:alg}.

\subsection{Supervised regularizer} \label{sec:sup}

To guarantee disentanglement, we incorporate supervision when training the BGM. \revise{The first part of supervision consists of a certain number of annotated labels of the ground-truth factors,} following the similar idea in \citet{locatello2019disentangling} but with a different formulation. \revise{We leverage another part of supervision on the graph structure of the factors, which will be discussed in Section~\ref{sec:learn_a}.} 
Specifically, let $\xi\in\bbR^{m}$ be the underlying ground-truth factors of interests of data $x$, following distribution $p_\xi$, and $[y]_i$ be some continuous or discrete annotated observation of the $i$-th underlying factor $[\xi]_i$, satisfying $[\xi]_i=\bbE([y]_i|x)$ for $i=1,\dots,m$.
For example, in the case of human face images, $[y]_1$ can be the binary label indicating whether a person is young or not, and $[\xi]_1=\mathbb{E}([y]_1|x)=\bbP([y]_1=1|x)$ is the probability of being young given one image $x$.

Let $\bar{E}(x)$ be the deterministic part of the stochastic transformation $E(x)$, i.e., $\bar{E}(x)=\bbE(E(x)|x)$ by integrating out the additional randomness injected into the encoder, which is used for representation learning. For instance, consider a Gaussian encoder satisfying $E(x)|x\sim\cN(m(x),\Sigma(x))$ which can be reparametrized by $E(x)=m(x)+\Sigma(x)^\top\epsilon$ with $\epsilon\sim\cN(0,I)$. Then the deterministic part is the mean, i.e., $\bar{E}(x)=m(x)$. 

We consider the following objective:
\begin{equation}\label{eq:obj}
L(E,G)=L_{\text{gen}}(E,G)+\lambda L_{\text{sup}}(E),
\end{equation}
where the supervised regularizer is $L_{\text{sup}}=\bbE_{x,y}[l_s(E;x,y)]$ with $l_s=\sum_{i=1}^m\mathrm{CE}([\bar{E}(x)]_i,[y]_i)$ if $[y]_i$ is the binary or bounded (and normalized to $[0,1]$) continuous label of factor $[\xi]_i$, where $\mathrm{CE}(l,y)=-y\log\sigma(l)-(1-y)\log(1-\sigma(l))$ is the cross-entropy loss with $\sigma(\cdot)$ being the sigmoid function; $l_{s}=\sum_{i=1}^m([\bar{E}(x)]_i-[y]_i)^2$ if $[y]_i$ is the continuous observation of $[\xi]_i$. $\lambda>0$ is the coefficient to balance both terms. Through ablation studies in Section~\ref{sec:ablation}, we empirically find the choice of $\lambda$ insensitive to different tasks and data sets, and hence set $\lambda=5$ in all experiments. 

Note that in the objective (\ref{eq:obj}), the unsupervised generative modeling loss and the supervised regularizer are decoupled in terms of taking expectations, in contrast to the conditional GANs where supervised labels are involved in the GAN loss. This enables one to use two separate samples with different sample sizes to estimate the two terms in (\ref{eq:obj}) during training. Since in practice we may only have access to a limited amount of annotated labels, this property makes the formulation applicable in such semi-supervised settings. \revise{In the experiments, we conduct ablation studies to investigate how our method performs with varying amounts of labeled samples available.}

In addition, \citet{locatello2019disentangling} propose a regularizer $L_{\text{sup}}=\sum_{i=1}^m\bbE_{x,z}(\mathrm{CE}([\bar{E}(x)]_i,\\~[z]_i))$ involving only the latent variable $z$ which is a part of the generative model, without distinguishing the model component $z$ from the ground-truth factor $\xi$ and its observation $y$. Hence they do not establish formal theoretical justification on disentanglement. Moreover, they follow the earlier VAE-based methods to adopt a VAE loss (\ref{eq:vae}) for generative modeling with an independent prior and an additional regularizer to enforce independence of the latent variables, which suffers from the unidentifiability problem described in the next section.

\subsection{Unidentifiability with an independent prior}

Intuitively, the above supervised regularizer aims at ensuring some kind of alignment between the underlying factor $\xi$ and the latent variable $z$ in the model. 
We start with the definition of a disentangled representation following this intuition. 

\begin{definition}[Disentangled representation]\label{def}
Given the underlying factor $\xi\in\bbR^{m}$ of data $x$, a deterministic encoder $E$ is said to learn a disentangled representation with respect to $\xi$ if $\forall i=1,\dots,m$, there exists a 1-1 function $g_i$ such that $[E(x)]_i=g_i([\xi]_i)$. Further, a stochastic encoder $E$ is said to be disentangled with respect to $\xi$ if its deterministic part $\bar{E}(x)$ is disentangled with respect to $\xi$.
\end{definition}

Note that in general, the goal of disentanglement allows for permutations in the ground-truth factors. For example one may expect for all $i$ there exists $j$ which is not necessarily equal to $i$ such that $[E(x)]_i=g_j([\xi]_j)$. However since in our method we supervise each latent dimension by the annotated label of each ground-truth factor, we can expect a component-wise correspondence between $E(x)$ and $\xi$, as justified formally in Proposition~\ref{thm:identify} below. 

As introduced above, we consider the general case where the underlying factors of interests are causally related. Then the goal becomes to disentangle the causal factors. Previous methods mostly use an independent prior for $z$, which contradicts the truth. We make this formal through the following proposition, which indicates that the disentangled representation is generally unidentifiable with an independent prior.

\begin{proposition}\label{prop}
Let $E^*$ be any encoder that is disentangled with respect to $\xi$. Let $b^*=L_{\rm{sup}}(E^*)$, $a=\min_{G}L_{\rm{gen}}(E^*,G)$, and $b=\min_{\{(E,G):L_{\rm{gen}}=0\}}L_{\rm{sup}}(E)$. Assume the elements of $\xi$ are connected by a causal graph whose adjacency matrix $A_0$ is not a zero matrix. Suppose the prior $p_z$ is factorized, i.e., $p_z(z)=\prod_{i=1}^k p_i([z]_i)$. Then we have $a>0$, and either when $b^*\geq b$ or $b^*<b$ and $\lambda<\frac{a}{b-b^*}$, there exists a solution $(E',G')$ so that $E'$ is entangled and for any generator $G$, we have $L(E',G')<L(E^*,G)$. 
\end{proposition}

This proposition directly suggests that minimizing (\ref{eq:obj}) favors an entangled solution $(E',G')$ over the one with a disentangled encoder $E^*$. Thus, with an independent prior we have no way to identify the disentangled solution with $\lambda$ that is not large enough. However, in real applications, it is impossible to estimate the threshold, and too large $\lambda$ makes it difficult to learn the BGM. 
After our work was submitted, we were brought attention to a theoretical result in \citet{trauble2021disentangled} that is similar to our Proposition~\ref{prop}. 
A discussion on the two independently proposed results is given in Appendix~\ref{app:prof_prop} after the proof.
In the following section, we propose a solution to this problem.

\section{Causal disentanglement learning}\label{sec:dear_main}

In this section, we propose the DEAR method for causal disentanglement learning. We start with an introduction to the model structure in Section~\ref{sec:gen_causal}. Then we present the formulation of DEAR as well as its identifiability of disentanglement at a population level in Section~\ref{sec:identif}. The DEAR algorithm is described in Section~\ref{sec:alg} with its consistency results established in Section~\ref{sec:cons}.

\subsection{Generative model with a causal prior}\label{sec:gen_causal}

We introduce the proposed bidirectional generative model with a causal model prior, and discuss the learning of the adjacency matrix. Based on the model we describe the mechanism of causal controllable generation from interventional distributions. We further propose a composite prior to deal with the issue of setting the latent dimension.

\subsubsection{SCM prior}
We propose to use a causal model as the prior $p_z$. Specifically we adopt the general nonlinear Structural Causal Model (SCM) proposed by \citet{yu2019dag} as follows
\begin{equation}\label{eq:sem}
z=f((I-A^\top)^{-1}h(\epsilon)):=F_\beta(\epsilon),
\end{equation}
where $A$ is the weighted adjacency matrix of the directed acyclic graph (DAG) upon the $k$ elements of $z$ (i.e., $A_{ij}\neq0$ if and only if $[z]_i$ is the parent of $[z]_j$), $\epsilon$ denotes the exogenous variables following $\cN(0,I)$, $f$ and $h$ are element-wise transformations that are generally nonlinear, and $\beta=(f,h,A)$ denotes the set of parameters of $f$, $h$ and $A$, with the parameter space $\cB$. Further let $\mathbf{I}_A=\mathbf{I}(A\neq0)$ denote the corresponding binary adjacency matrix, where $\mathbf{I}(\cdot)$ is the element-wise indicator function.

When $f$ is invertible, (\ref{eq:sem}) is equivalent to 
\begin{equation}\label{eq:intervene}
f^{-1}(z)=A^\top f^{-1}(z)+h(\epsilon),
\end{equation}
which indicates that the factors $z$ satisfy a linear SCM after nonlinear transformation $f$, and enables interventions on latent variables as discussed later. 

By combining the above SCM prior and the encoder and generator introduced in Section~\ref{sec:gen}, we end up with the model structure presented in Figure~\ref{fig:model}.
Note that different from our model where $z$ is the latent variable following the prior (\ref{eq:sem}) with the goal of causal disentanglement, \citet{yu2019dag} propose a causal discovery method where variables $z$ in SCM (\ref{eq:sem}) are observed with the aim of learning the causal structure among $z$. 

\begin{figure}
\centering
\includegraphics[width=0.85\textwidth]{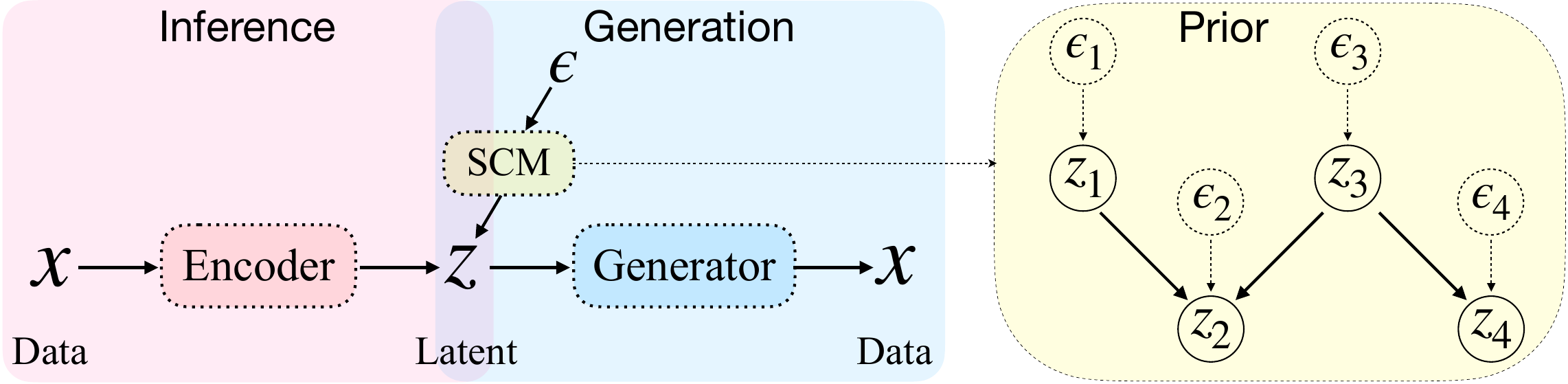}
\caption{Model structure of a BGM (left) with an SCM prior (right).}\label{fig:model}
\end{figure}

\subsubsection{\revise{Learning of $A$}}\label{sec:learn_a}

In causal structure learning, the graph is required to be acyclic. Traditional causal discovery methods such as PC~\citep{spirtes2000causation} or GES~\citep{chickering2002optimal} deal with the combinatorial problem over the discrete space of DAGs. Recently, \citet{zheng2018dags} proposed an equality constraint whose satisfaction ensures acyclicity and solved the problem with the augmented Lagrangian method, which however leads to optimization difficulties~\citep{ng2020role}. 
In addition, identifiability of the causal structure from purely observational data is known as an important issue in causal discovery. Despite a number of results on structure identifiability under various parametric or semi-parametric assumptions~\citep{zhang2012pnl,peters2014equal}, in a general nonparametric setting, however, it cannot be guaranteed. \citet{yu2019dag} did not discuss the identifiability of the SCM (\ref{eq:sem}) under general cases.

In many problems of disentanglement, we have some prior information on the causal structure of the factors of interests based on common knowledge or expertise. In particular, we may know a causal ordering of the factors. In addition to the ordering, for some factors, we may know that one particular factor cannot be a direct cause of another one, which helps us remove some redundant edges in advance. 
Therefore, in this paper with the focus on disentanglement, we utilize such prior information on the graph structure in disentanglement learning and leave incorporating causal discovery from scratch to future work. 
Formally, we assume the super-graph of the true binary graph $\mathbf{I}_{A_0}$ is given, the best case of which is the true graph while the worst is that only the causal ordering is available. Then we learn the weights of the non-zero elements of the prior adjacency matrix that indicate the sign and scale of causal effects, jointly with other parameters of the generative model using the formulation and algorithm described in Sections~\ref{sec:identif} and \ref{sec:alg}. 

As discussed in Section~\ref{sec:identif}, such prior knowledge makes the structure identifiability easy to hold. Moreover, the given super-graph ensures the acyclicity of the adjacency matrix, allowing us to get rid of the additional acyclicity constraint. 
In Section \ref{sec:exp_learnA}, we investigate how our method performs in learning the graph structure and weighted adjacency given various amounts of prior graph information. 
Note that even when a super-graph is available, to our best knowledge, no previous disentanglement method except GraphVAE~\citep{he2018graphvae} can utilize them to disentangle causal factors with guarantee, but we propose one such method and show its effectiveness. In fact, \citet{he2018graphvae} also assumed an ordering over the latent nodes by specifying that the parents of node $z_i,i=1,\dots,k-1$ come from the set $\{z_{i+1},\dots,z_k\}$. Later experiments suggest that GraphVAE shows inferior performance compared with ours. 

\subsubsection{Generation from interventional distributions}\label{sec:intervene_mech}

One immediate application of our proposed model is causal controllable generation from interventional distributions of the latent variables. We now describe the mechanism. To enable intervention under SCM (\ref{eq:intervene}), we require $f$ to be invertible. Then interventions can be formalized as operations that modify a subset of equations in (\ref{eq:intervene})~\citep{pearl2000models}.

Suppose we would like to intervene on the $i$-th dimension of $z$, i.e., Do$([z]_i=c)$, where $c$ is a constant. Once we obtain the latent factors $z$ inferred from data $x$, i.e., $z=E(x)$, or sampled from prior $p_z$, we follow the modified equations in (\ref{eq:intervene}) to obtain $z'$ on the left-hand side using ancestral sampling by performing (\ref{eq:intervene}) iteratively, where $\epsilon$ can be either fixed or resampled from its prior. Then we decode the latent factor $z'$ that follows the given interventional distribution to generate the desired sample $G(z')$. In Section~\ref{sec:contgen} we define the two types of interventions of most interests in applications.
We discuss how our method generalizes to unseen interventions in Appendix~\ref{app:unseen}.

\subsubsection{Latent dimension and composite prior}\label{sec:comp_prior}
Another issue of the model is how to set the latent dimension $k$ of the generative model, to handle which we propose the so-called composite prior. Recall that $m$ is the number of generative factors that we are interested to disentangle, for example, all the semantic concepts related to some  filed, where $m$ tends to be smaller than the total number $M$ of generative factors.
The latent dimension $k$ should be no less than $M$ to allow a sufficient degree of freedom in order to generate or reconstruct data well. Since $M$ is generally unknown in reality, we set a sufficiently large $k$, at least larger than $m$ which is a trivial lower bound of $M$. 

Then we propose to use a prior that is a composition of a causal model for the first $m$ dimensions and another distribution for the other $k-m$ dimensions to capture other factors necessary for generation, like a standard Gaussian. 
In this way the first $m$ dimensions of $z$ aim at learning the disentangled representation of the $m$ factors of interests, while the role of the remaining $k-m$ dimensions is to capture other factors that are necessary for generation whose structure is neither cared nor explicitly modeled.  
Under this model framework, we do not require the availability of annotated labels for all generative factors of data, but only the ones of our interests to disentangle are used in the supervised regularizer in (\ref{eq:obj}), which broadens the applications of our method.

\subsection{DEAR formulation}\label{sec:identif}

In this section, we first present the formulation of DEAR. 
Compared with the BGM described in Section~\ref{sec:gen}, now we have one more module to learn which is the SCM prior. Thus $p_G(x,z)$ becomes $p_{G,F}(x,z)=p_F(z)p_G(x|z)$ where $p_F(z)$ is the  distribution of $F_\beta(\epsilon)$ with $\epsilon\sim\cN(0,I)$. 
We then rewrite the generative model loss as follows
\begin{equation}\label{eq:obj_gen_new}
L_{\text{gen}}(E,G,F) = \dkl(q_{E}(x,z),p_{G,F}(x,z)).
\end{equation}

Then we propose the following formulation to learn disentangled generative causal representations:
\begin{equation}\label{eq:obj_new}
\min_{E,G,F}\ L(E,G,F):=L_{\text{gen}}(E,G,F)+\lambda L_{\text{sup}}(E). 
\end{equation}
%

\rr{Now we show the identifiability of disentanglement of DEAR in contrast to the unidentifiability result in Proposition~\ref{prop}. Proposition~\ref{thm:identify} indicates that under appropriate conditions, the DEAR formulation \eqref{eq:obj_new} at a population level can learn the disentangled representations defined in Definition~\ref{def}. 
Here, Assumption~\ref{assu1} supposes a sufficiently large capacity of the SCM in (\ref{eq:sem}) to contain the underlying distribution $p_\xi$, which is reasonable due to the generalization of the nonlinear SCM. } 


\begin{assu}
The underlying distribution $p_\xi$ belongs to the distribution family $\{p_\beta:\beta\in\cB\}$, i.e., there exits $\beta_0=(f_0,h_0,A_0)$ such that $p_\xi=p_{\beta_0}$.
\label{assu1}
\end{assu}
%


\begin{proposition}[Identifiability]\label{thm:identify}
Assume the infinite capacity of $E$ and $G$ and Assumption~\ref{assu1}. Let $(E^*,G^*,F^*)\in\argmin_{E,G,F}L(E,G,F)$ which is the solution of DEAR formulation (\ref{eq:obj_new}). Then $E^*$ is disentangled with respect to $\xi$ as defined in Definition~\ref{def}.
\end{proposition}


\rr{Note that Proposition~\ref{thm:identify} states the identifiability at the population level, i.e., the loss function is taken the expectation over distributions of both the data and labels of the true factors. Thus we clarify that 
Proposition~\ref{thm:identify} does not obtain general provable disentanglement which should be analyzed with a much weaker form of supervision on the true factors, e.g., as in \citet{khemakhem2020variational}. 
In contrast, the specific identifiability stated in Proposition~\ref{thm:identify} should be interpreted as a counterpart of the unidentifaibility result in Proposition~\ref{prop}. Specifically, Proposition~\ref{prop} shows that the independent prior used by most existing disentanglement methods causes the contradiction between the generative loss $L_{\rm gen}$ and the supervised loss $L_{\rm sup}$ in \eqref{eq:obj}, which makes the whole loss $L$ prefer an entangled model. Therefore, even with the same amount of supervised labels of true factors, those methods cannot learn a generative model with disentangled latent representations. In contrast, Proposition~\ref{thm:identify} formally suggests that due to the introduction of the SCM prior, the two loss terms $L_{\rm gen}$ and $L_{\rm sup}$ in \eqref{eq:obj_new} can be simultaneously minimized and the jointly optimal solution leads to the disentangled model.}



\subsection{Algorithm} \label{sec:alg}

In this section, we propose the algorithm to solve the above formulation (\ref{eq:obj_new}). 
Estimating $L_{\text{gen}}$ requires the unlabeled data set $\{x_1,\dots,x_N\}$ with sample size $N$, while estimating $L_{\text{sup}}$ requires a labeled data set $\{(x_{j},y_{j}):j=1,\dots,N_s\}$, where the sample size $N_s$ can be much smaller than $N$. Without loss of generality, let $S_{\cG}=\{x_1,\dots,x_N,y_1,\dots,y_{N_s}\}$ denote the training data set for the generative model.

We parametrize $E_\phi(x)$ and $G_\theta(z)$ by neural networks. As mentioned in Section~\ref{sec:gen}, to enhance the expressiveness of the generative model, we use an implicit generated conditional $p_G(x|z)$, where we inject Gaussian noises to each convolution layer in the same way as \citet{shen2020bidirectional}. Then the SCM prior $p_F(z)$ and implicit $p_G(x|z)$ make (\ref{eq:obj_gen_new}) lose an analytic form. Hence we adopt a GAN method to adversarially estimate the gradient of (\ref{eq:obj_gen_new}) as in \citet{shen2020bidirectional}. Different from their setting, the prior also involves learnable parameters, that is, the parameters $\beta$ of the SCM. In the following lemma we present the gradient formulas of (\ref{eq:obj_gen_new}). 
\begin{lemma}\label{lem:grad_formula}
Let $D^*(x,z)=\log[q_E(x,z)/p_{G,F}(x,z)]$. Then we have
\begin{equation}\label{eq:grad}
\begin{split}
\nabla_\theta L_{\rm{gen}} &= -\bbE_{z\sim p_\beta(z)}[s(x,z)\nabla_x D^*(x,z)^\top|_{x=G_\theta(z)}\nabla_\theta G_\theta(z)],\\
\nabla_\phi L_{\rm{gen}} &= \bbE_{x\sim q_x}[\nabla_z D^*(x,z)^\top|_{z=E_\phi(x)}\nabla_\phi E_\phi(x)],\\
\nabla_\beta L_{\rm{gen}} &= -\bbE_{\epsilon}[s(x,z)(\nabla_x D^*(x,z)^\top \nabla_\beta G(F_\beta(\epsilon))+\nabla_z D^*(x,z)^\top \nabla_\beta F_\beta(\epsilon))|^{x=G(F_\beta(\epsilon))}_{z=F_\beta(\epsilon)}],
\end{split}
\end{equation}
where $s(x,z)=e^{D^*(x,z)}$ is the scaling factor.
\end{lemma}

Since $D^*$ depends on the unknown densities, which makes the gradients in (\ref{eq:grad}) uncomputable directly from data, we estimate the gradients by training a discriminator $D$ via the empirical logistic regression: 
\begin{equation}\label{eq:logistic}
\min_{D'}\frac{1}{N_d}\bigg[\sum_{i:w_i=1}\log(1+e^{-D'(x_i,z_i)}) + \sum_{i:w_i=0}\log(1+e^{D'(x_i,z_i)})\bigg],
\end{equation}
where the class label $w_i=1$ if $(x_i,z_i)\sim q_E$ and $w_i=0$ if $(x_i,z_i)\sim p_{G,F}$, with $i=1,\dots,N_d$. We parametrize the discriminator using neural networks with parameter $\psi$.

Based on the above, we propose Algorithm~\ref{algo} to learn disentangled generative causal representation.

\begin{algorithm}
\DontPrintSemicolon
\KwInput{training set $S_{\cG}$, initial parameter $\phi,\theta,\beta,\psi$, batch-size $n$, meta-parameter $T$}
\For{$t=1,\dots,T$}{
\For{multiple steps}{
Sample $\{x_1,\ldots,x_n\}$ from the training set, $\{\epsilon_1,\ldots,\epsilon_n\}$ from $\cN(0,I)$\\
Generate from the causal prior $z_i=F_\beta(\epsilon_i),i=1,\dots n$\\
Update $\psi$ by descending the stochastic gradient:
$\frac{1}{n}\sum_{i=1}^n \nabla_\psi\left[\log(1+e^{-D_\psi(x_i,E_\phi(x_i))})+\log(1+e^{D_\psi(G_\theta(z_i),z_i)})\right]$
}
Sample $\{x_1,\ldots,x_n,y_1,\dots,y_{n_s}\}$, $\{\epsilon_1,\ldots,\epsilon_n\}$ as above; generate $z_i=F_\beta(\epsilon_i)$\\
Compute $\theta$-gradient:
$-\frac{1}{n}\sum_{i=1}^n s(G_\theta(z_i),z_i)\nabla_\theta D_\psi(G_\theta(z_i),z_i)$\\
Compute $\phi$-gradient:
$\frac{1}{n}\sum_{i=1}^n \nabla_\phi D_\psi(x_i,E_\phi(x_i))+\frac{\lambda}{n_s}\sum_{i=1}^{n_s} \nabla_\phi l_s(\phi;x_i,y_i)$\\
Compute $\beta$-gradient:
$-\frac{1}{n}\sum_{i=1}^n s(G(z_i),z_i)\nabla_\beta D_\psi(G_\theta(F_\beta(\epsilon_i)),F_\beta(\epsilon_i))$\\
Update parameters $\phi,\theta,\beta$ using the gradients
}
\KwReturn{$\phi,\theta,\beta$}
\caption{Disentangled gEnerative cAusal Representation (DEAR) Learning}
\label{algo}
\end{algorithm}

\subsection{Consistency}\label{sec:cons}

In this section, we show the asymptotic convergence of Algorithm~\ref{algo}.
Let $\btheta=(\theta,\phi,\beta)$ denote the set of parameters of the generative model, where $\theta$, $\phi$ and $\beta$ denote the parameters of the generator, encoder and SCM prior respectively. According to such parametrization, we write the objective function in (\ref{eq:obj_new}) as $L(\btheta)$. 
In this section, we establish the consistency result of empirical estimator $\hat\btheta$, i.e., the output of Algorithm~\ref{algo}, under the parametric setting. 
Given a discriminator $D$, the approximate gradient used in the algorithm is denoted by
\begin{equation*}
h_D(\btheta)=
\begin{bmatrix}
	-\frac{1}{N}\sum_{i=1}^N \left[s(G_\theta(z_i),z_i)\nabla_x  D(G_\theta(z_i),z_i)^\top \nabla_\theta G_\theta(z_i)\right]\\
	\frac{1}{N}\sum_{i=1}^N \nabla_z D(x_i,E_\phi(x_i))^\top\nabla_\phi E_\phi(x_i)+\frac{\lambda}{N_s}\sum_{i=1}^{N_s} \nabla_\phi l_s(\phi;x_i,y_i)\\
	-\frac{1}{N}\sum_{i=1}^N s(x,z)[\nabla_x D(x,z)^\top \nabla_\beta G(F_\beta(\epsilon_i))+\nabla_z D(x,z)^\top \nabla_\beta F_\beta(\epsilon_i)]|^{x=G(F_\beta(\epsilon_i))}_{z=F_\beta(\epsilon_i)}
\end{bmatrix}.
\end{equation*}

We first show in the following lemma that under appropriate conditions the approximate gradient $h_{\hat D}(\btheta)$ based on the solution of (\ref{eq:logistic}) converges uniformly in probability to the true gradient. Recall the definition $D^*(x,z)=\log(q_E(x,z)/p_{G,F}(x,z))$ which depends on $\btheta$. Let $\cD^*=\{D^*_{\btheta}(x,z):\btheta\in\Theta\}$ denote the true discriminator class, and $\cD=\{D(x,z)\}$ denote the modeled discriminator class with the norm $\|D\|_1=\int|D(x,z)|p^*_{\btheta}(x,z)dxdz$, where $p^*_{\btheta}(x,z)=(q_E(x,z)+p_{G,F}(x,z))/2$ which induces the probability measure $\mu^*_{\btheta}$.

\begin{lemma}\label{lem:grad_conv}
Assume the parameter space $\Theta=\{\btheta=(\theta,\phi,\beta)\}$ is compact. 
Assume the following regularity conditions hold:
\begin{enumerate}[label=\textit{C\arabic*}]\vspace{-0.1cm}
\setlength{\itemsep}{2pt}
\setlength{\parskip}{2pt}
\item $D^*_{\btheta}$ is smooth with respect to $\btheta$ over $\Theta$, as defined in Definition~\ref{def:smooth}.\label{ass:D_smooth}
\item The modeled discriminator class $\cD$ is compact, and contains the true class $\cD^*$.\label{ass:d_compact}
\item $\{\mu^*_{\btheta}:\btheta\in\Theta\}$ is uniformly tight, i.e., for any $\epsilon>0$, there exists a compact subset $K_\epsilon$ of $\cX\times\cZ$ such that for all $\btheta\in\Theta$, $\mu^*_{\btheta}(K_\epsilon)\geq 1-\epsilon$.\label{ass:tight}
\item Functions in $\cD$ have uniformly bounded function values, gradients and Hessians so that there exists a positive number $B_0<\infty$ such that $\forall D\in\cD$, $\forall x,z$, we have $|D(x,z)|\leq B_0$, $\|\nabla D(x,z)\|\leq B_0$ and $|tr(\nabla^2 D(x,z))|\leq B_0$.\label{ass:D_bound}
\item $\bar{E}_\phi$, $\nabla G_\theta$, $\nabla E_\phi$ and $\nabla F_\beta$ are uniformly bounded.\label{ass:g}
\item The training set for the discriminator is independent from that for the generative model.\label{ass:indpt_sample}
\end{enumerate}\vspace{-0.1cm}
Then there exists a sequence of $(N,N_s,N_d)\to\infty$ such that 
\begin{equation}\label{eq:unif_grad}
	\sup_{\btheta\in\Theta}\|h_{\hat{D}}(\btheta)-\nabla L(\btheta)\|\pto0,
\end{equation}
 where $\pto$ means converging in probability.
\end{lemma}

Based on the above, we obtain the consistency of DEAR algorithm in the following theorem. It indicates that when the sample sizes grow large enough, with high probability, the DEAR algorithm approximately achieves the minimum of $L(\btheta)$ which leads to the desired disentangled model according to Proposition~\ref{thm:identify}.
%
\begin{theorem}[Consistency]\label{thm:cons}
Suppose the assumptions in Lemma~\ref{lem:grad_conv} hold. Further assume the objective function $L(\btheta)$ in (\ref{eq:obj_new}) is smooth with respect to $\btheta$ and satisfies the Polyak-\L{ojasiewicz} condition in Definition \ref{def:pl}. Let $L^*=\min_{\btheta\in\Theta}L(\btheta)$
Then there exists a sequence of $(N,N_s,N_d)\to\infty$ such that $L(\hat\btheta)\pto L^*$.
\end{theorem}

\noindent
\textbf{Remark}. 
The Polyak-\L{ojasiewicz} (PL) condition \citep{polyak1963gradient} asserts that the suboptimality of a model is upper bounded by the norm of its gradient, which is a weaker condition than assumptions commonly made to ensure convergence, such as (strong) convexity. Recent literature showed that the PL condition holds for many machine learning scenarios including some deep neural networks \citep{charles2018stability,liu2020loss}.

\section{Experiments}\label{sec:exp}

We present the experimental studies in causal controllable generation in Section~\ref{sec:contgen} which demonstrate the effectiveness of DEAR in causal disentanglement and support the theory in Section~\ref{sec:dear_main}. Based on these theoretical and empirical justifications, we then apply the representations learned by DEAR in downstream prediction tasks in Section~\ref{sec:downstream}, and show the benefits of the disentangled causal representations in terms of sample efficiency and distributional robustness. In addition, we investigate the performance of DEAR in learning the causal structure and weighted adjacency of the SCM prior in Section~\ref{sec:exp_learnA}. We also provide ablation studies in terms of varying regularization strength $\lambda$ and various amounts of annotated labels in Section~\ref{sec:ablation}.\footnote{The code and data sets are available at \url{https://github.com/xwshen51/DEAR}.}

We evaluate our methods on two data sets where the ground-truth generative factors are causally related, while most data sets used in previous disentanglement work are assumed or designed to have independent generative factors, for example, in the large scale experimental study by \citet{Locatelloetal19}.  
The first data set that we use is a synthesized data set, Pendulum, similar to the one in \citet{yang2020causalvae}. As shown in Figure~\ref{fig:pend}, each image is generated by four continuous factors: \textit{pendulum\_angle}, \textit{light\_angle}, \textit{shadow\_length} and \textit{shadow\_position} whose underlying structure is given in Figure~\ref{fig:causal_structure}(a) following physical mechanisms. To make the data set realistic, we introduce random noises when generating the two effects from the causes, representing the measurement error. We further introduce 20\% corrupted data whose shadow is randomly generated, mimicking some environmental disturbance. 
The sample sizes for the training, validation and test set are all 6,724.

The second one is a real human face data set, CelebA~\citep{liu2015deep}, with 40 labeled binary attributes. Among them, we consider two groups of causally related factors of interests as shown in Figure~\ref{fig:causal_structure}(b,c). The sample sizes for the training, validation and test set are 162,770, 19,867, and 19,962.  We believe these two data sets are diverse enough to assess our methods because they cover real and synthesized data, with continuous and discrete annotated labels. 
In addition, we test our method on benchmark data sets \citep{gondal2019transfer} where the generative factors are independent. The results are given in Appendix~\ref{app:ind_benchmark}. 
All the details of the experimental setup, network architectures and the synthesized data set are given in Appendix~\ref{app:exp_detail}. Notably, all VAEs and DEAR use the same network architecture for the encoder and decoder (generator). 

\begin{figure}
\centering
\subfigure[Pendulum]{
\includegraphics[width=0.25\textwidth]{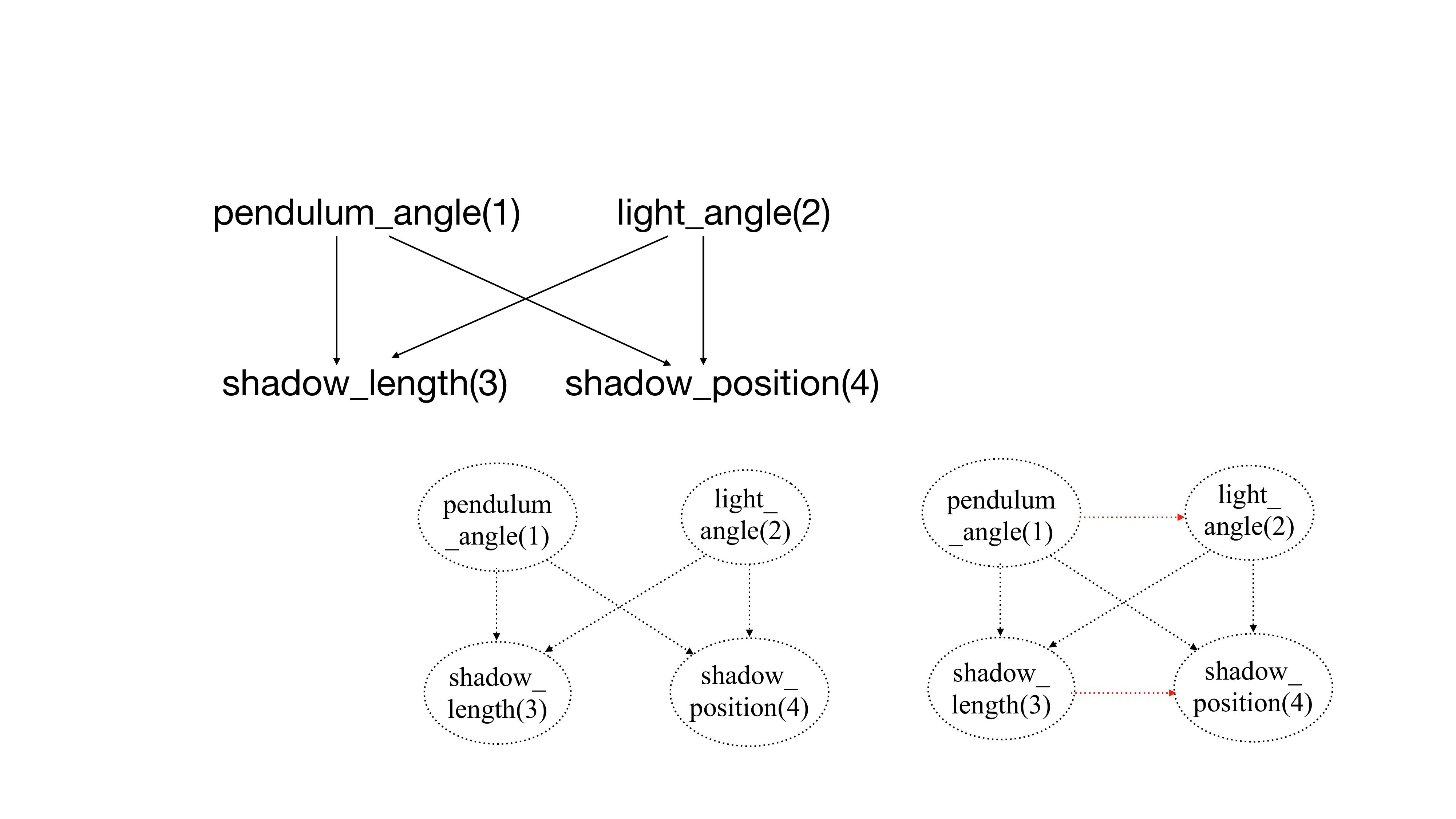}}
\subfigure[CelebA-Smile]{
\includegraphics[width=0.35\textwidth]{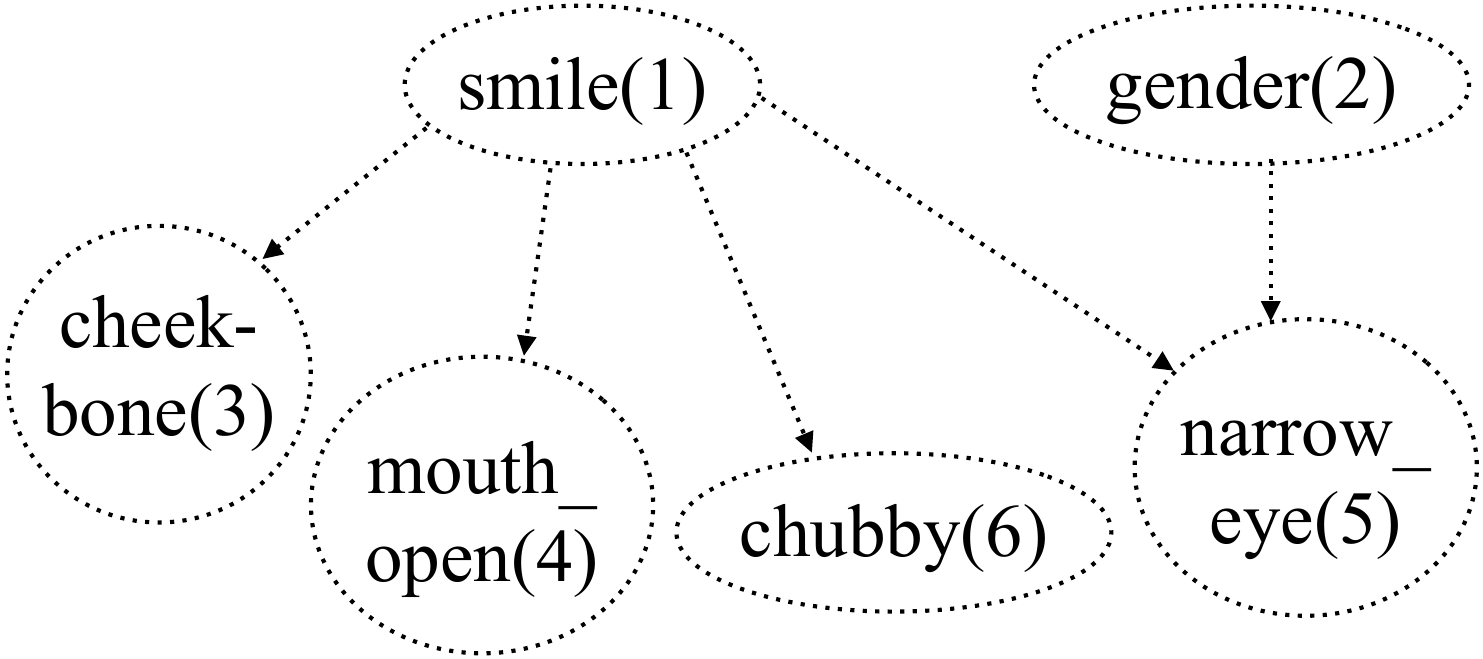}}
\subfigure[CelebA-Attractive]{
\includegraphics[width=0.35\textwidth]{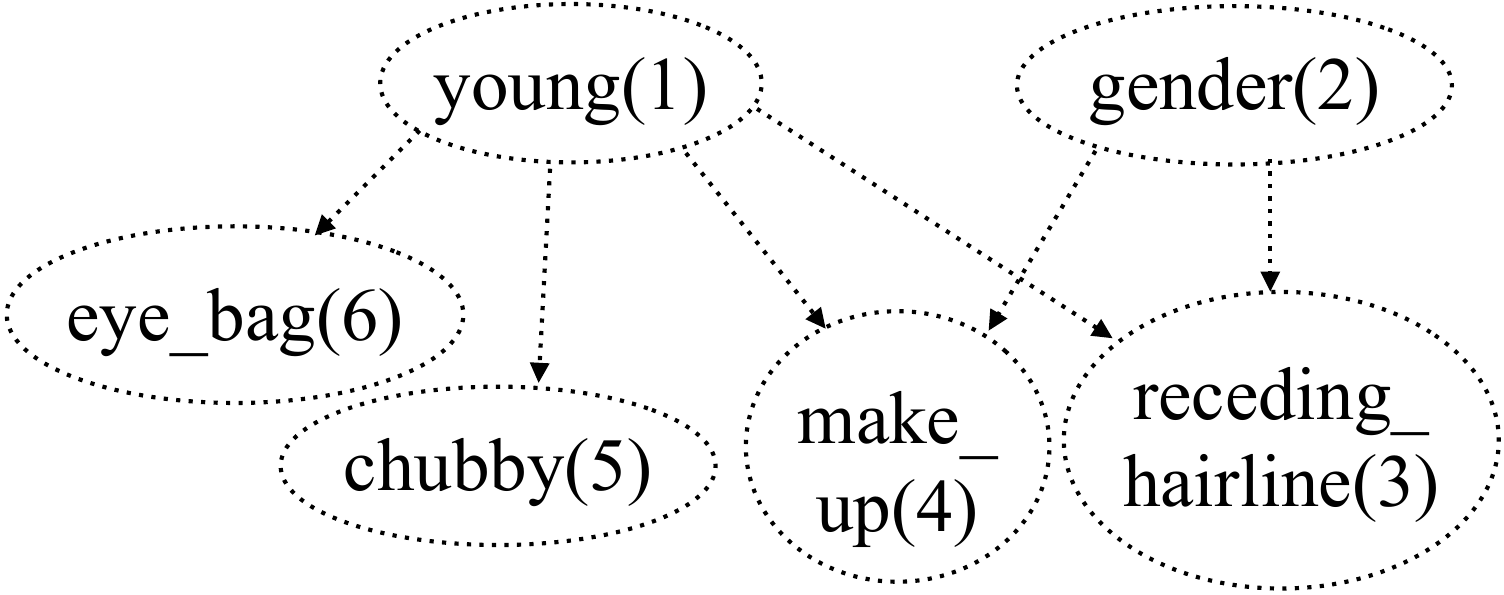}}
\caption{Underlying causal structures.}\label{fig:causal_structure}
\end{figure}

\subsection{Causal controllable generation}\label{sec:contgen}

We first investigate the performance of our methods in disentanglement through applications in causal controllable generation. Traditional controllable generation methods mainly manipulate the independent generative factors~\citep{karras2019style}, while we consider the general case where the factors are causally related. With a learned SCM as the prior, we are able to generate images from many desired interventional distributions of the latent factors. For example, we can manipulate only the cause factor while leaving its effects unchanged. Besides, the bidirectional framework presented in Figure~\ref{fig:model} enables controllable generation either from scratch or a given unlabeled image. 

We consider two types of interventions of most interests in applications. 
First, in traditional traversals, we manipulate one dimension of the latent vector while keeping the others fixed to either their inferred or sampled values~\citep{Higgins2017betaVAELB}. A causal view of such operations is an intervention on all the variables by setting them as constants with only one of them varying. 
Another interesting type of interventional distribution is to intervene on only one latent variable, i.e., $\bbP_{\text{do}([z]_i=c)}(z)$, and to observe how other variables change consequently. The proposed SCM prior enables us to conduct such interventions through the mechanism described in Section~\ref{sec:intervene_mech}. One can naturally generalize it to intervene on more than one variable. For simplicity, we only present the results of intervening on one variable in the paper. 


Figure~\ref{fig:pend}-\ref{fig:smile} illustrate the results of causal controllable generation of the proposed DEAR method and the baseline method with independent priors, S-$\beta$-VAE~\citep{locatello2019disentangling}. Results from other baselines are given in Appendix~\ref{app:more}, including S-TCVAE, S-FactorVAE which essentially make no difference due to the independence assumption, and the unidirectional generative model CausalGAN. 
\revise{In addition, we extend GraphVAE~\citep{he2018graphvae} to a supervised version, named S-GraphVAE by adding the supervised loss in the same way as DEAR and assuming the super-graph of the true graph is known a priori. However, in contrast to the composite prior in DEAR, GraphVAE assigns an SCM over the whole latent space and hence only allows a sufficiently low dimensional latent space. This makes the GraphVAE model less expressive and difficult to be applied to complex data sets with a large number of generative factors like CelebA. The qualitative results of S-GraphVAE in controllable generation are given in Appendix~\ref{app:more}.} 
Note that we do not compare with unsupervised disentanglement methods (e.g., unsupervised $\beta$-VAE, GraphVAE, etc.) because of fairness and their lack of justification.

\begin{figure}
\centering
 \vskip -0.1in
\includegraphics[width=\textwidth]{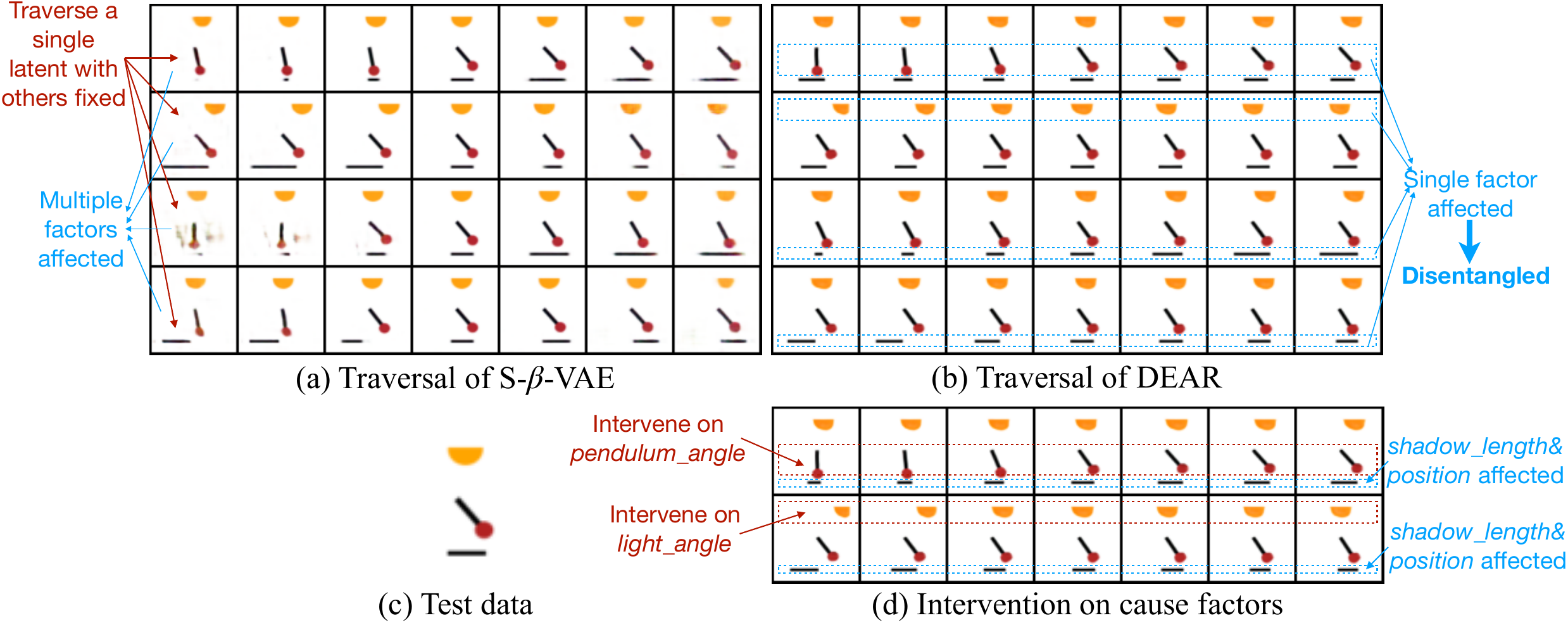}
\vskip -0.1in
\caption{\small Results in causal controllable generation on Pendulum. For example, in line 1 of (a,b) when changing the first dimension $[z]_1$ of $z$ which is supervised with the annotated label of \textit{pendulum\_angle} while keeping the others fixed, we see that the traversals of DEAR vary only in \textit{pendulum\_angle} (disentanglement), while those of S-$\beta$-VAE vary in both \textit{pendulum\_angle} and \textit{shadow\_length} (entanglement); in line 3 when changing $[z]_3$ with the others fixed, only \textit{shadow\_length} is affected with DEAR but both \textit{shadow\_length} and \textit{pendulum\_angle} are affected with S-$\beta$-VAE. In line 1 of (d) we see the intervening on \textit{pendulum\_angle} affects its effects \textit{shadow\_length} and \textit{shadow\_position}, which is consistent with the desired interventional distribution.}
\label{fig:pend}
\vskip -0.1in
\end{figure}

\begin{figure}
\centering
\vskip -0.1in
\includegraphics[width=\textwidth]{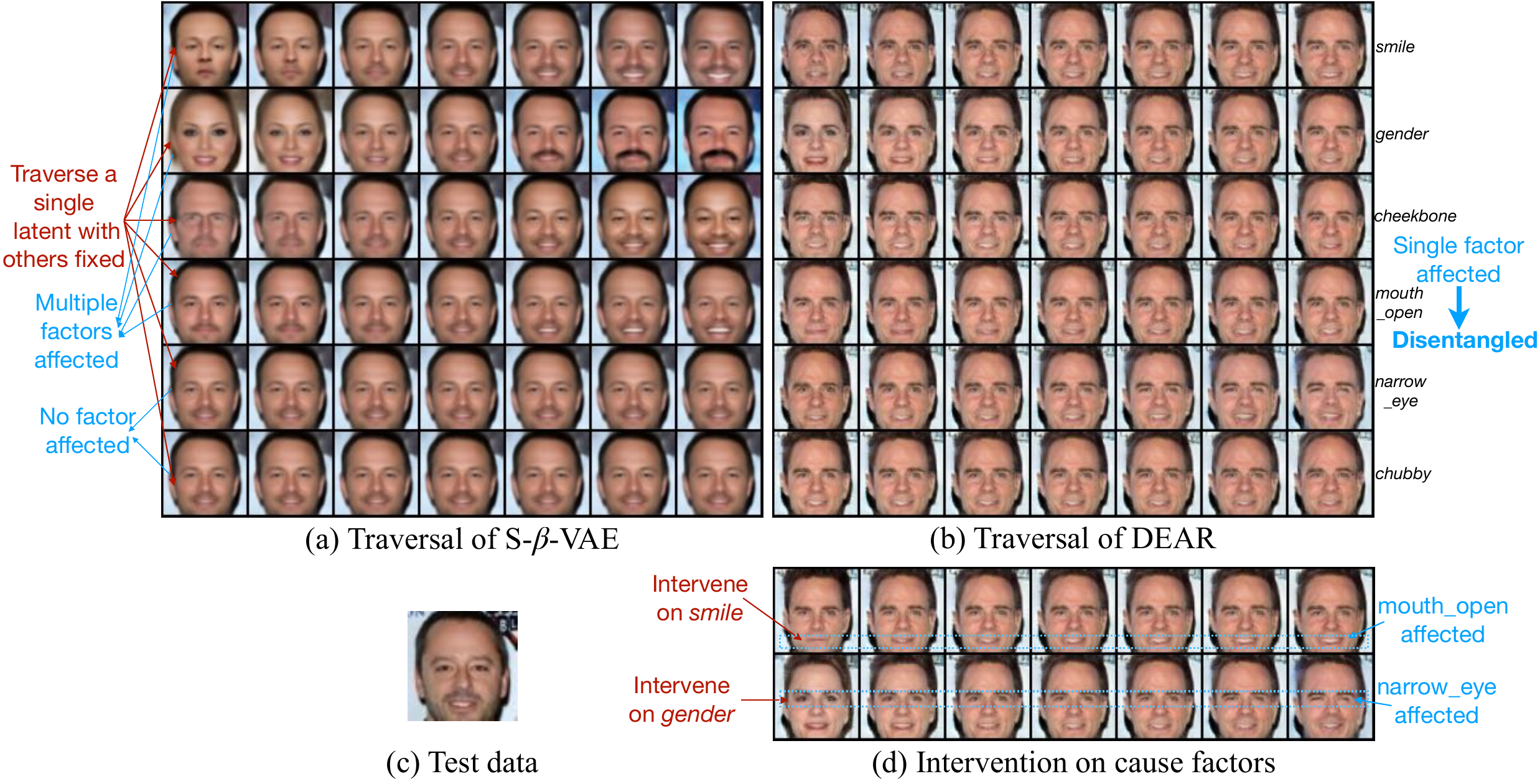}
\vskip -0.1in
\caption{\small Results in causal controllable generation on CelebA. For example, in line 1 of (a,b) when altering $[z]_1$ with the others fixed, we see that the traversals of DEAR vary only in a single factor \textit{smile} with factor \textit{mouth\_open} unaffected, while S-$\beta$-VAE entangles the two factors. In line 5-6 of (a), when changing $[z]_5$ and $[z]_6$ which are supervised with \textit{narrow\_eye} and \textit{chubby}, no factors seem to be affected, indicating that the S-$\beta$-VAE fails to learn the representations of some factors. In line 1 of (d) we see that intervening on \textit{smile} affects its effect \textit{mouth\_open}, which makes sense.}
\label{fig:smile}
\vskip -0.1in
\end{figure}

In each figure, we first infer the latent representations from a test image in block (c). The traditional traversals of the two models are given in blocks (a,b). We see that in each line when manipulating one latent dimension while keeping the others fixed, the generated images of our model vary only in a single factor, indicating that our method can disentangle the causally related factors, while those of S-$\beta$-VAE show multiple factors affected. It is worth pointing out that we are the first to achieve the disentanglement between a cause factor and its effects, while other methods tend to entangle them. One typical example is the disentanglement between \textit{smile} and its effect \textit{mouth\_open} as shown in Figure~\ref{fig:smile}. 
In block (d), we show the results of intervention on the latent variables representing the cause factors, which clearly show that intervening on a cause variable changes its effect variables. Results in Appendix~\ref{app:more} further show that intervening on an effect variable does not influence its cause. Specific examples are given in the captions. Note that without an SCM prior, S-$\beta$-VAE cannot generate data from general interventional distributions. 
More qualitative traversals from DEAR are given in Appendix~\ref{app:more}.


\subsection{Downstream task}\label{sec:downstream}

The previous section verifies the good disentanglement performance of DEAR. In this section, equipped with DEAR, we investigate and demonstrate the benefits of the learned disentangled causal representations for downstream tasks in terms of sample efficiency and distributional robustness. \revise{In Appendix \ref{app:disen_metric}, we propose a quantitative metric for causal disentanglement which is utilized to provide some justifications on the relationship between causal disentanglement and performance in downstream tasks.}

We now introduce the downstream prediction tasks. On CelebA, we consider the structure CelebA-Attractive in Figure~\ref{fig:causal_structure}(c). We artificially create a target label $\tau=1$ if \textit{young}=1, \textit{gender}=0, \textit{receding\_hairline}=0, \textit{make\_up}=1, \textit{chubby}=0, \textit{eye\_bag}=0, and $\tau=0$ otherwise, indicating one kind of attractiveness as a slim young woman with makeup and thick hair.\footnote{Note that the definition of attractiveness here only refers to one kind of attractiveness, which has nothing to do with its linguistic definition.} On the pendulum data set, we regard the label of data corruption as the target $\tau$, that is, $\tau=1$ if the data is corrupted and $\tau=0$ otherwise.
We consider the downstream tasks of predicting the target label. In both cases, the generative factors of interests in Figure~\ref{fig:causal_structure}(a,c) are causally related to $\tau$, which are the features that humans would use to do the task. Hence it is conjectured that a disentangled representation of these causal factors tends to be more data-efficient and invariant to distribution shifts.

\subsubsection{Sample efficiency}\label{sec:eff}


For a BGM including the earlier state-of-the-art supervised disentanglement methods S-VAEs~\citep{locatello2019disentangling}, the modified S-GraphVAE~\citep{he2018graphvae}, and our proposed DEAR, we use the learned encoder to embed the training data to the latent space and train an MLP classifier on top of the representations to predict the target label. All the architectures are the same for various methods with details given in Appendix~\ref{app:exp_detail}. Without an encoder, one normally needs to train a convolutional neural network with raw images as the input. Here we adopt the ResNet50 (named ResNet in Table~\ref{tab:sample_eff}) as the baseline classifier which is the architecture of the BGM encoder. Since the disentanglement methods use additional supervision of the generative factors, we consider another baseline ResNet50 (named ResNet-pretrain) that is pretrained using multi-label classification to predict the factors on the same training set. 
\revise{Unless indicated otherwise, DEAR, S-VAEs, S-GraphVAE, and ResNet-pretrain have access to the annotated labels for all training samples, and DEAR and S-GraphVAE are given the true graph structure. We provide the detailed results when there is less supervised information on labels and the graph structure in Sections~\ref{sec:ablation} and \ref{sec:exp_learnA}.}

To measure the sample efficiency, we use the statistical efficiency score defined as the average test accuracy based on 100 samples divided by the average accuracy based on 10,000/all samples, following \citet{Locatelloetal19}. Note that this metric may be misleading when a method always achieves poor accuracy with small and large training samples. Therefore, we also report the test accuracies with different training sample sizes to provide a comprehensive evaluation. 

Table~\ref{tab:sample_eff} presents the results, showing that DEAR owns the highest sample efficiency and test accuracy on both data sets. 
ResNet with raw data inputs has the lowest efficiency, although multi-label pretraining improves its performance to a limited extent. 
S-VAEs have better efficiency than the ResNet baselines but lower accuracy under the case with more training data. Since the encoders of all S-VAEs and DEAR share the same architecture, we explain the inferior performance of S-VAEs is mainly because the independent prior contradicts with the supervised loss as indicated in Proposition~\ref{prop}, making the learned representations entangled (as shown in the previous section) and less informative. 
\revise{On the Pendulum data with few underlying factors, S-GraphVAE outperforms the S-VAEs when training on a smaller sample, indicating that an SCM latent structure has advantages over the independent structure under the VAE framework. Nevertheless, even with the same amount of supervision (on both annotated labels and the same given graph structure), S-GraphVAE is still inferior to DEAR, potentially due to our better causal modeling and optimization based on a GAN algorithm. On the more complex data set CelebA, S-GraphVAE gives very poor performance, even worse than S-VAEs and ResNet.} 


In addition, we investigate the performance of DEAR under the semi-supervised setting where only 10\% of the labels are available. We find that DEAR with fewer labels has comparable sample efficiency with that in the fully supervised setting, with a sacrifice in the accuracy that is yet still comparable to other baselines which use much more supervision. 
\revise{In Section~\ref{sec:ablation}, we provide ablation studies to show how DEAR behaves in terms of varying amounts of labeled samples and different choices of the regularization strength $\lambda$.} 

\revise{We also study knowing less prior information on the causal graph structure. In the last two lines of Table~\ref{tab:sample_eff}, DEAR-SG stands for the DEAR-LIN model trained with a given super-graph (which is not a full graph) of the true graph and DEAR-O stands for the DEAR-LIN model trained with a known causal ordering. We see that DEAR-SG leads to comparable performance as DEAR with the known graph structure, while DEAR-O is slightly worse but still competitive compared with other baseline methods. 
As we will show later, on Pendulum, DEAR-O can recover the true structure and the performance in downstream tasks is identical to that of DEAR given the true structure, so we skip showing the last two lines in Table~\ref{tab:sample_eff}(b).
In Section~\ref{sec:exp_learnA}, we investigate the performance in learning the SCM and in particular, the causal structure, given various amounts of prior information about the true graph, where more insights are given to explain the comparable performance of DEAR-SG in downstream tasks.}

\begin{table}
\small
\centering
\subtable[CelebA]{
\begin{tabular}{@{}cccc@{}}
\toprule
\bf Method &\bf  100(\%) &\bf  10,000(\%) & \bf Eff(\%)\\\midrule
ResNet & 68.06\tiny{$\pm$0.19}&	79.51\tiny{$\pm$0.31}&	85.59\tiny{$\pm$0.27}\\
ResNet-pretrain& 76.84\tiny{$\pm$2.08}&	83.75\tiny{$\pm$0.93}&	91.74\tiny{$\pm$1.98}\\
S-VAE & 77.07\tiny{$\pm$1.42}&	79.87\tiny{$\pm$1.67}&	96.49\tiny{$\pm$1.68} \\
S-$\beta$-VAE & 71.78\tiny{$\pm$1.99}&	76.63\tiny{$\pm$0.24}&	93.67\tiny{$\pm$2.41}  \\
S-TCVAE & 77.10\tiny{$\pm$2.08}&	81.63\tiny{$\pm$0.20}&	94.45\tiny{$\pm$2.72}  \\
S-GraphVAE & 67.87\tiny{$\pm$1.19} &	72.09\tiny{$\pm$0.51} &	 94.14\tiny{$\pm$1.14}  \\
DEAR-LIN & 83.51\tiny{$\pm$0.77}&	84.92\tiny{$\pm$0.11}&	{98.34}\tiny{$\pm$0.81} \\
DEAR-NL & \textbf{84.44}\tiny{$\pm$0.48}&	\textbf{85.10}\tiny{$\pm$0.09}&	\textbf{99.23}\tiny{$\pm$0.51} \\\midrule
DEAR-LIN-10\% & 78.09\tiny{$\pm$0.59}&	79.54\tiny{$\pm$0.41}&	98.18\tiny{$\pm$0.49} \\
DEAR-NL-10\% & 80.30\tiny{$\pm$0.24}&	80.87\tiny{$\pm$0.12}&	\textbf{99.29}\tiny{$\pm$0.23} \\
DEAR-SG & 83.69\tiny{$\pm$0.63} & 84.91\tiny{$\pm$0.06} & 98.57\tiny{$\pm$0.67} \\
DEAR-O & 82.84\tiny{$\pm$0.68} & 84.42\tiny{$\pm$0.05} & 98.13\tiny{$\pm$0.79} \\
\bottomrule
\end{tabular}}
\hskip 0.05in
\subtable[Pendulum]{
\begin{tabular}{@{}ccc@{}}
\toprule
\bf  100(\%) &\bf  all(\%) &\bf  Eff(\%)\\\midrule
79.71\tiny{$\pm$0.98}&	90.64\tiny{$\pm$1.57}&	87.97\tiny{$\pm$2.11}\\
79.59\tiny{$\pm$0.93}&	89.16\tiny{$\pm$1.60}&	89.28\tiny{$\pm$0.59} \\
84.16\tiny{$\pm$0.69}&	90.89\tiny{$\pm$0.28}&	92.60\tiny{$\pm$0.49}  \\
79.95\tiny{$\pm$1.65}&	87.87\tiny{$\pm$0.52}&	90.98\tiny{$\pm$1.47
} \\
85.36\tiny{$\pm$1.11}&	90.33\tiny{$\pm$0.33}&	94.51\tiny{$\pm$1.31} \\
86.08\tiny{$\pm$1.61}& 91.90\tiny{$\pm$0.53} & 93.65\tiny{$\pm$1.29} \\
90.21\tiny{$\pm$0.94}&	\textbf{93.31}\tiny{$\pm$0.14}&	{96.68}\tiny{$\pm$0.89} \\
\textbf{90.62}\tiny{$\pm$0.32}&	92.57\tiny{$\pm$0.08}&	\textbf{97.93}\tiny{$\pm$0.29} \\\midrule
88.93\tiny{$\pm$1.40}&	93.18\tiny{$\pm$0.18}&	95.43\tiny{$\pm$1.33}\\
87.65\tiny{$\pm$0.46}&	91.27\tiny{$\pm$0.21}&	96.03\tiny{$\pm$0.29}  \\
\bottomrule
\end{tabular}
}
\caption{Sample efficiency and test accuracy with different training sample sizes. DEAR-LIN and -NL denote the DEAR models with linear and nonlinear $f$ respectively.}\label{tab:sample_eff}
\end{table}

\subsubsection{Distributional robustness}\label{sec:dr}

We manipulate the training data to inject spurious correlations---misleading heuristics that work for most training examples but do not always hold~\citep{sagawa2019distributionally}---between the target label and some spurious attributes. On CelebA, we regard \textit{mouth\_open} as the spurious factor; on Pendulum, we choose \textit{background\_color} $\in$ \{blue($+$), white($-$)\}. 
We manipulate the training data such that the target label is more strongly correlated with the spurious attributes. Specifically, the target label and the spurious attribute of 80\% of the examples are both positive or negative, while those of 20\% examples are opposite. 
For instance, in the manipulated training set, 80\% smiling examples in CelebA have an open mouth; 80\% corrupted examples in Pendulum are masked with a blue background. The test sets however do not have such correlations, that is, around half of the examples in the test sets of both CelebA and Pendulum have consistent target and spurious labels, leading to a distribution shift.

Intuitively these spurious attributes are not causally related to the target label, but normal independent and identically distributed (IID) based methods like empirical risk minimization (ERM) tend to exploit such easily learned spurious correlations in prediction, and hence face performance degradation when such correlation no longer exists during testing. In contrast, causal factors are regarded as invariant and thus more robust under such shifts. 

Previous sections justify both theoretically and empirically that DEAR can learn disentangled causal representations well. We then apply those representations by training a classifier upon them to predict the target label, which is conjectured to be invariant and robust. Baseline methods include ERM, multi-label ERM which is trained to predict both target label and the factors considered in disentanglement in order to have the same amount of supervision, S-VAEs that are shown unable to disentangle well in the causal case, and S-GraphVAE.

\begin{table}
\small
\centering
\subtable[CelebA]{
\begin{tabular}{@{}ccc@{}}
\toprule
\bf Method &\bf  WorstAcc(\%) &\bf  AvgAcc(\%)\\\midrule
ERM &59.12\tiny{$\pm$1.78} & 82.12\tiny{$\pm$0.26} \\
ERM-multilabel &59.17\tiny{$\pm$4.02} & 82.05\tiny{$\pm$0.25} \\
S-VAE & 60.54\tiny{$\pm$3.48}&	79.51\tiny{$\pm$0.58} \\
S-$\beta$-VAE & 63.85\tiny{$\pm$2.09}&	80.82\tiny{$\pm$0.19} \\
S-TCVAE & 64.93\tiny{$\pm$3.30} & 81.58\tiny{$\pm$0.14} \\
S-GraphVAE & 50.51\tiny{$\pm$4.43} &	 76.01\tiny{$\pm$1.73}  \\
DEAR-LIN & 76.05\tiny{$\pm$0.70}&	83.56\tiny{$\pm$0.09} \\
DEAR-NL & \textbf{76.98}\tiny{$\pm$0.66}&	\textbf{83.60}\tiny{$\pm$0.04} \\\midrule
DEAR-LIN-10\% & 71.40\tiny{$\pm$0.47}&	81.04\tiny{$\pm$0.14}  \\
DEAR-NL-10\% & 70.44\tiny{$\pm$1.02}&	81.94\tiny{$\pm$0.31} \\
DEAR-SG & 74.95\tiny{$\pm$1.14} & 83.56\tiny{$\pm$0.25} \\
DEAR-O & 74.00\tiny{$\pm$1.47} & 83.45\tiny{$\pm$0.32} \\
\bottomrule
\end{tabular}}
\subtable[Pendulum]{
\begin{tabular}{@{}cc@{}}
\toprule
 \bf WorstAcc(\%) &\bf  AvgAcc(\%)\\\midrule
 60.48\tiny{$\pm$2.73} & 87.40\tiny{$\pm$0.89}  \\
 61.70\tiny{$\pm$4.02} & 87.20\tiny{$\pm$1.00} \\ 
20.78\tiny{$\pm$4.45}&	84.26\tiny{$\pm$1.31} \\
44.12\tiny{$\pm$9.73}&	86.99\tiny{$\pm$1.78} \\
35.50\tiny{$\pm$5.57}&	86.64\tiny{$\pm$1.15} \\
54.42\tiny{$\pm$4.15}& 87.64\tiny{$\pm$2.06}\\
\textbf{75.60}\tiny{$\pm$0.27}&	\textbf{93.58}\tiny{$\pm$0.03}\\
75.39\tiny{$\pm$2.11}&	93.16\tiny{$\pm$0.04} \\\midrule
74.05\tiny{$\pm$1.56}&	92.63\tiny{$\pm$0.07} \\
73.93\tiny{$\pm$1.98}&	92.72\tiny{$\pm$0.03} \\
\bottomrule
\end{tabular}}
\caption{Distributional robustness. The worst-case and average test accuracy.}\label{tab:dr}
\end{table}

Table~\ref{tab:dr} presents the average and worst-case test accuracy to assess both the overall classification performance and distributional robustness. The worst-case~\citep{sagawa2019distributionally} accuracy refers to the following: we group the test set according to the two binary labels, the target one and the spurious attribute, into four cases and regard the group with the worst accuracy as the worst-case, which usually owns the opposite spurious correlation to the training data. 
It can be seen that the classifiers trained upon DEAR representations significantly outperform the baselines in both metrics. Particularly, when comparing the worst-case accuracy with the average one, we observe a slump from around 80 to around 60 for other methods on CelebA, while DEAR enjoys a much smaller decline. 
\revise{As in sample efficiency, S-GraphVAE suffers from a smaller drop in worst-case accuracy than S-VAEs on Pendulum, but remains inferior to DEAR. On CelebA, S-GraphVAE again shows poor performance. }

\revise{Moreover, with fewer annotated samples (i.e., 10\% of the full sample), DEAR-10\% remains competitive against baseline methods which use even more supervised labels. DEAR-SG (given the super-graph) is slightly better than DEAR-O (given the ordering), both of which are comparable to DEAR given the full structure. 
More ablation studies in terms of the labeled proportion as well as the strength of the supervised regularizer are given in Section~\ref{sec:ablation}.}

\subsection{\revise{Learning of the structure $A$}}\label{sec:exp_learnA}

In this section, we take a closer look into the learned causal structure and weighted adjacency matrix $A$ of the SCM prior given various amounts of prior graph information. 
As mentioned in Section~\ref{sec:learn_a}, the DEAR method requires prior knowledge on the super-graph of the true graph over the underlying factors of interests. The experiments shown in previous sections are all based on the given true binary structure $\mathbf{I}_{A_0}$. Here we investigate the performance in learning the causal structure on knowing various amounts of information about the graph, which ranges from the causal ordering to the true structure. 
Note that the adjacency matrices learned by DEAR-LIN and DEAR-NL are consistent up to some scaling, so in this section we only show the results from DEAR-LIN as a representative. 



Figure~\ref{fig:learned_cause} shows the learned weighted adjacency matrices when the true binary structure is given for the three underlying structures shown in Figure~\ref{fig:causal_structure}. It can be seen that the weights exhibit meaningful signs and scalings that are consistent with common knowledge. For example, the factor \textit{smile} and its effect \textit{mouth\_open} are positively correlated, that is, one is more likely to open mouth when smiling. The corresponding element in the weighted adjacency $A_{14}$ of (b) turns out positive, which makes sense. Also \textit{gender} (the indicator of being male) and its effect \textit{make\_up} are negatively correlated, that is, women tend to make up more often than men. Correspondingly, element $A_{24}$ of (c) turns out negative. 
\begin{figure}
\centering
\subfigure[Pendulum]{
\includegraphics[width=0.22\textwidth]{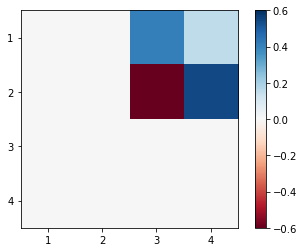}}
\hskip 0.2in
\subfigure[CelebA-Smile]{
\includegraphics[width=0.22\textwidth]{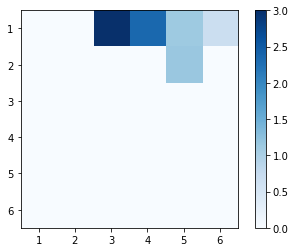}}
\hskip 0.2in
\subfigure[CelebA-Attractive]{
\includegraphics[width=0.22\textwidth]{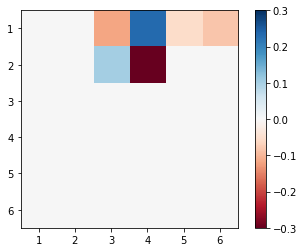}}
\caption{The weighted adjacency matrices learned by DEAR.}
\label{fig:learned_cause}
\end{figure}

\begin{figure}
\centering
\subfigure[Pendulum-O]{
\includegraphics[width=0.25\textwidth]{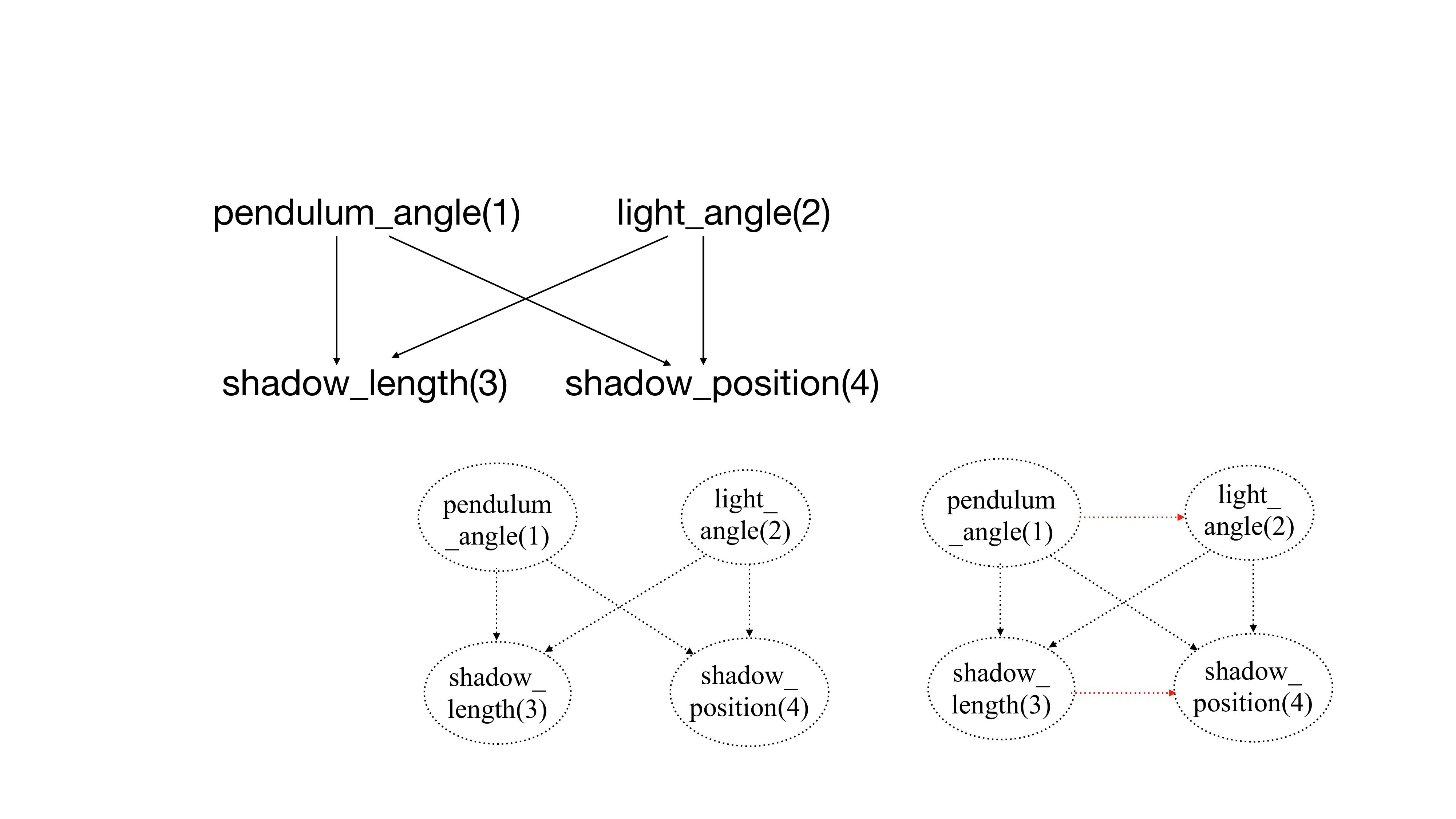}}
\subfigure[CelebA-Attractive-SG]{
\includegraphics[width=0.35\textwidth]{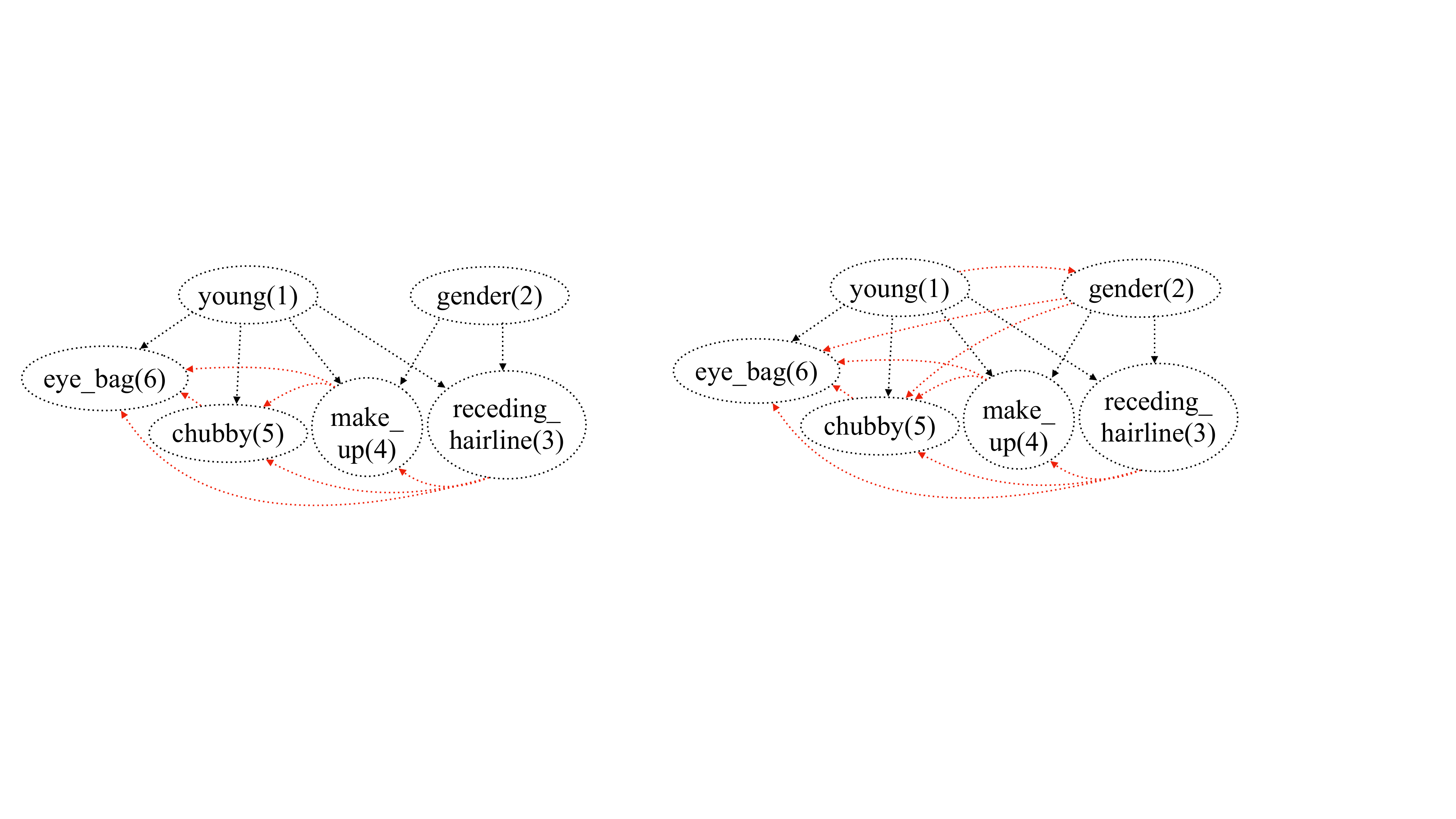}}
\subfigure[CelebA-Attractive-O]{
\includegraphics[width=0.35\textwidth]{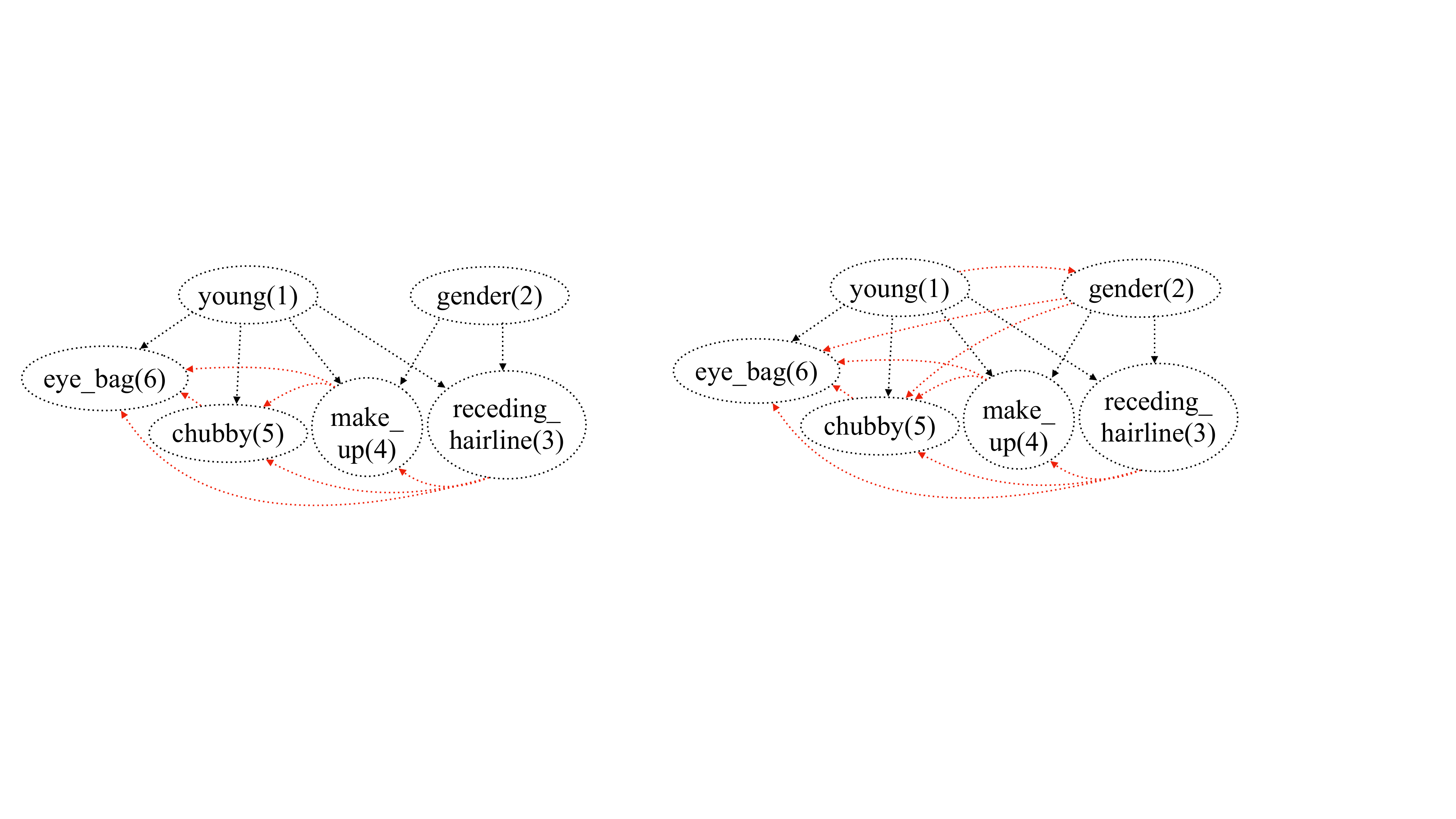}}
\caption{\small The given causal structures. -O and -SG stand for the causal ordering and super-graph. The black edges are true and red edges are in fact redundant. }\label{fig:causal_structure_o}
\end{figure}

Next, we evaluate the performance of DEAR in structure learning with less prior knowledge on the true graph, i.e. knowing a super-graph rather than the exact true graph. 
We first study on the synthetic data set Pendulum whose ground-truth structure is shown in Figure~\ref{fig:causal_structure}(a), where there are fewer causal factors and no hidden confounder. 
Consider the causal ordering \textit{pendulum\_angle}, \textit{light\_angle}, \textit{shadow\_position}, \textit{shadow\_length}, given which we start with a full graph (shown in Figure~\ref{fig:causal_structure_o}(a)) represented by an upper triangular adjacency matrix whose elements are randomly initialized around 0 (shown in Figure~\ref{fig:pend_super}(a)). Figure~\ref{fig:pend_super}(a-d) present the weighted adjacency matrices learned by DEAR at different training epochs. We observe that the weights of the two redundant edges $A_{12}$ and $A_{34}$ vanish gradually and it eventually leads to the weighted adjacency that nearly coincides with the one learned given the true graph shown in Figure~\ref{fig:learned_cause}(a). 
In contrast, Figure~\ref{fig:pend_super}(e) shows the structure learned by S-GraphVAE. Note that GraphVAE learns a binary structure with 0-1 elements and (e) shows the learned probabilities of each element being 1. We see that it learns a redundant edge $A_{12}$ from \textit{pendulum\_angle} to \textit{light\_angle} and misses the edge $A_{23}$ from \textit{light\_angle} to \textit{shadow\_position}. This experiment shows the advantage of DEAR over GraphVAE in learning the latent causal structure. 


\begin{figure}
\centering
\subfigure[Epoch 0]{
\includegraphics[width=0.18\textwidth]{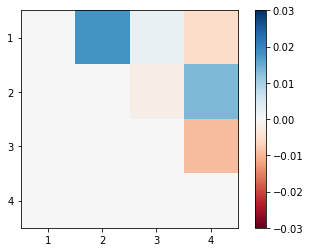}}
\subfigure[Epoch 100]{
\includegraphics[width=0.18\textwidth]{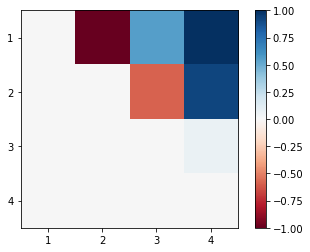}}
\subfigure[Epoch 200]{
\includegraphics[width=0.18\textwidth]{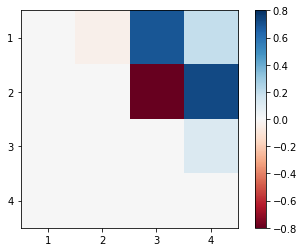}}
\subfigure[Epoch 500]{
\includegraphics[width=0.18\textwidth]{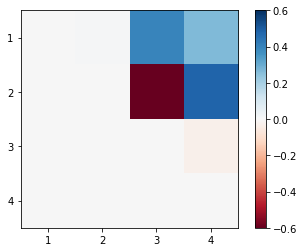}}
\hskip 0.15in
\subfigure[S-GraphVAE]{
\includegraphics[width=0.18\textwidth]{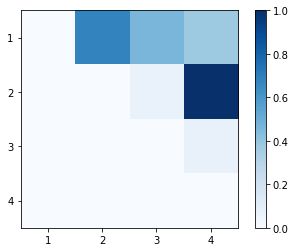}}
\caption{\small Learned weighted adjacency matrices on Pendulum given the causal ordering. (a-d) are the learned matrices from DEAR at different training epochs starting from random initialization around 0, and (e) is the result from S-GraphVAE.}
\label{fig:pend_super}
\end{figure}

\begin{figure}[h]
\centering
\subfigure[Epoch 0]{
\includegraphics[width=0.18\textwidth]{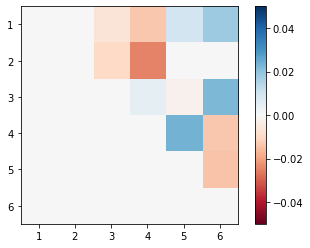}}
\subfigure[Epoch 5]{
\includegraphics[width=0.18\textwidth]{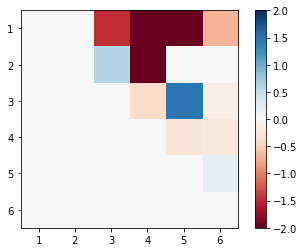}}
\subfigure[Epoch 50]{
\includegraphics[width=0.18\textwidth]{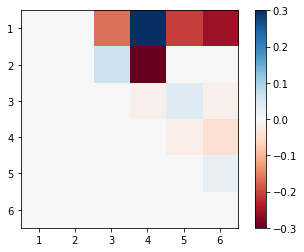}}
\subfigure[Epoch 150]{
\includegraphics[width=0.18\textwidth]{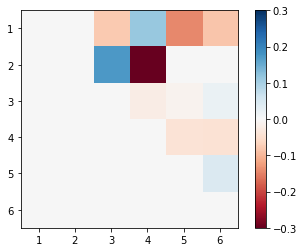}}
\hskip 0.15in
\subfigure[S-GraphVAE]{
\includegraphics[width=0.18\textwidth]{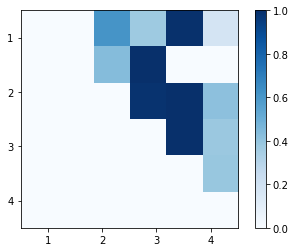}}
\vskip -0.1in
\caption{\small Learned weighted adjacency matrices on CelebA given a super-graph. (a) represents a random initialization around 0 of the weighted adjacency matrix corresponding to the super-graph in Figure~\ref{fig:causal_structure_o}(b); (b-d) are the learned matrices by DEAR at different training epochs; (e) is the result from S-GraphVAE.}
\label{fig:celeba_super}
\end{figure}

\begin{figure}
\centering
\subfigure[Epoch 0]{
\includegraphics[width=0.18\textwidth]{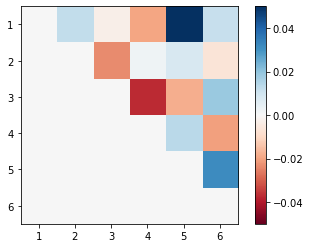}}
\subfigure[Epoch 5]{
\includegraphics[width=0.18\textwidth]{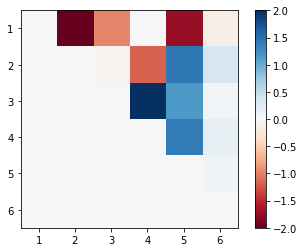}}
\subfigure[Epoch 50]{
\includegraphics[width=0.18\textwidth]{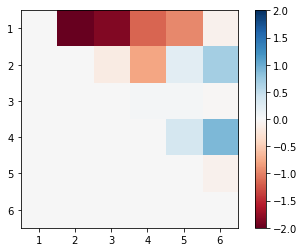}}
\subfigure[Epoch 150]{
\includegraphics[width=0.18\textwidth]{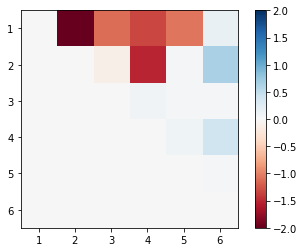}}
\hskip 0.15in
\subfigure[S-GraphVAE]{
\includegraphics[width=0.18\textwidth]{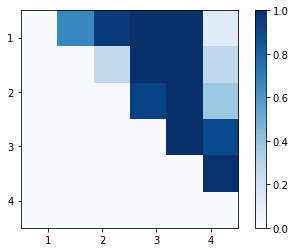}}
\vskip -0.1in
\caption{\small Learned weighted adjacency matrices on CelebA given the causal ordering. (a-d) are the learned matrices by DEAR at different training epochs starting from random initialization around 0; (e) is the result from S-GraphVAE.}
\label{fig:celeba_o}
\end{figure}

The case is more complicated on the real data set CelebA. Although the number of factors of interests, six, is not large, there are much more underlying generative factors. Some of the other factors that we are not interested to disentangle could serve as the hidden confounders of the factors that we are interested in. For example, staying up late may cause a person to have eye bags and look chubby and hence serves as a hidden confounder of the two factors \textit{eye\_bag} and \textit{chubby} in Figure~\ref{fig:causal_structure}(c).
These hidden confounders can be captured in the remaining dimensions of the learned representations through the composite prior introduced in Section~\ref{sec:comp_prior}. However, their existence makes it difficult to identify and learn the structure of the factors of interest. Another complication comes from some biases in the data, potentially caused by selection bias or unknown interventions. Such biases may result in spurious correlations even among the causal variables, bringing trouble to causal structure learning. There are orthogonal works \citep[e.g.,][]{ke2019learning,Bengio2020A,brouillard2020differentiable} focusing on causal discovery under hidden confounders or unknown interventions, which however is beyond the scope of this paper and will be systematically explored in future work. Here we only provide some empirical studies to evaluate our method under this complicated case. 

We conduct two experiments on CelebA. In the first one, we assume knowing a super-graph (Figure~\ref{fig:causal_structure_o}(b)) of the true graph (Figure~\ref{fig:causal_structure}(c)) and randomly initialize its weighted adjacency matrix around 0 as in Figure~\ref{fig:celeba_super}(a). Then Figure~\ref{fig:celeba_super}(a-d) show the weighted adjacency matrices learned by DEAR at different training epochs. Similar to the previous experiment on Pendulum, the weights corresponding to the redundant edges gradually vanish. Eventually, DEAR learns the weighted adjacency matrix that largely agrees with the one learned given the true graph shown in Figure~\ref{fig:learned_cause}(c). After edge pruning, one can essentially recover the true graph structure. This explains why DEAR-SG (the DEAR model given this super-graph) performs competitively with DEAR given the true structure in the downstream tasks in the previous two sections. 
In contrast, the graph learned by GraphVAE shown in Figure~\ref{fig:celeba_super}(e) fails to recover the true structure, although it is given the same known super-graph as DEAR. 

In the second experiment, we only assume knowing the causal ordering which leads to a full graph shown in Figure~\ref{fig:causal_structure_o}(c) with the upper-triangular weighted adjacency matrix randomly initialized in Figure~\ref{fig:celeba_o}(a). We observe that although DEAR can remove most of the redundant edges, it mistakenly learns a large weight on the edge from \textit{young} to \textit{gender}. This may be due to the spurious correlation between the two factors \textit{young} and \textit{gender} potentially caused by the selection bias during data collection. 
In comparison, as shown in Figure~\ref{fig:celeba_o}(e), the graph learned by GraphVAE given the same causal ordering turns out to be farther away from the true graph than DEAR. 
Nevertheless, as discussed in the previous two sections, DEAR-O (the DEAR model given the causal ordering) still achieves reasonably satisfying performance, which indicates the robustness of our DEAR method against the correctness of the learned graph structure. 

In summary, when given the true graph structure, DEAR can learn meaningful weights for each edge. If there is no hidden confounder or spurious correlation among the factors of interests, DEAR can learn the true graph given only the causal ordering. If there exist such biases, DEAR can only recover the true structure given some proper super-graphs and in general cannot learn all edges correctly when only the causal ordering is given. 
In all cases, DEAR outperforms GraphVAE in learning the causal structure.

\subsection{\revise{Ablation study}}\label{sec:ablation}

In this section, we conduct ablation studies to illustrate how DEAR performs when using different choices of the hyperparameter $\lambda$ which determines the weight of the supervised regularizer and varying amounts of labeled samples. 
According to Proposition \ref{thm:identify} and Theorem \ref{thm:cons}, at the population level, i.e., assuming an infinite amount of data, the regularization strength $\lambda$ in the objective (\ref{eq:obj_new}) can be any arbitrary positive value to make the theorems hold. However, in practice with a finite sample, $\lambda$ cannot be arbitrarily small roughly due to the estimation error. Therefore we suggest regarding $\lambda$ as a hyperparameter and investigate its sensitivity across different tasks and data sets. 
Figures \ref{fig:eff_lam}-\ref{fig:dr_lam} plot the metrics in sample efficiency and distributional robustness when using different choices of $\lambda$. We observe that all these results (with $\lambda$ ranging from 0.1 to 10) remain significantly superior to the baseline methods in Tables \ref{tab:sample_eff}-\ref{tab:dr}, which suggests that DEAR can perform reasonably well across a wide range of $\lambda$. As $\lambda$ becomes close to 0, we generally observe a performance decrease.

Next, we study how DEAR, as well as baseline methods, behave as we reduce the number of annotated samples. 
Figures \ref{fig:eff_prop}-\ref{fig:dr_prop} plot the metrics in sample efficiency and distributional robustness when using different amounts of labeled samples. Note that 0.1\% of the CelebA training set corresponds to 162 samples and 1\% of the Pendulum training set corresponds to 67 samples, both of which belong to weakly supervised settings according to \citet{locatello2019disentangling}. Such small numbers of supervised labels belong to weakly supervised settings according to \citet{locatello2019disentangling} and would make manual labeling feasible even if no label is available beforehand. 
Naturally, with fewer labeled samples, all methods basically perform worse. DEAR always outperforms the VAEs. In particular, as shown in Figure \ref{fig:dr_prop}(a), when training with 0.1\%-1\% labels of the CelebA training sample, S-$\beta$-VAE and S-TCVAE completely fail in the worst-case group, meaning that the classifiers trained upon them almost fully rely on the spurious correlation and exhibit no robustness to distribution shifts at all. 
In Figure \ref{fig:eff_prop}(a), when the supervised proportion is lower, although S-$\beta$-VAE and S-TCVAE have higher sample efficiency, they actually perform poorly with both small and large samples, leading to a misleadingly high efficiency score. 

\begin{figure}[H]
\centering
\subfigure[CelebA]{
\includegraphics[width=0.48\textwidth]{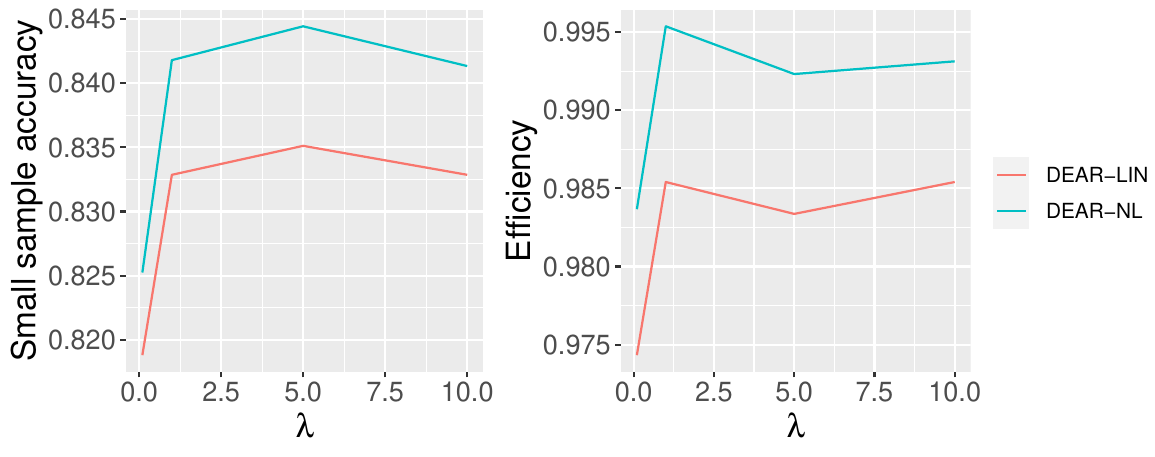}}
\subfigure[Pendulum]{
\includegraphics[width=0.48\textwidth]{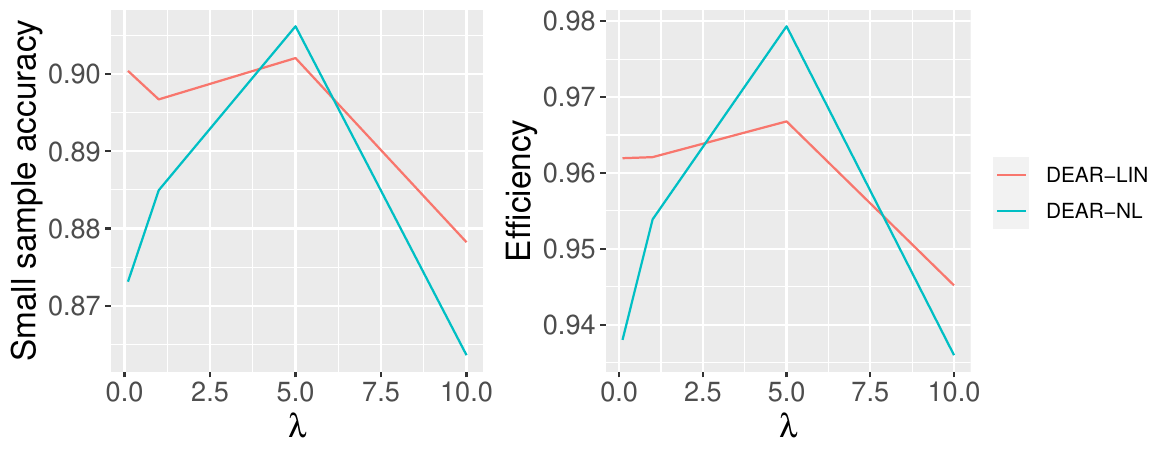}}
\vskip -0.1in
\caption{\small Test accuracy when training on a small sample \& sample efficiency, as defined in Section~\ref{sec:eff}, against four different choices of $\lambda$: 0.1, 1, 5, and 10.}
\label{fig:eff_lam}
\end{figure}

\begin{figure}[h]
\centering
\subfigure[CelebA]{
\includegraphics[width=0.48\textwidth]{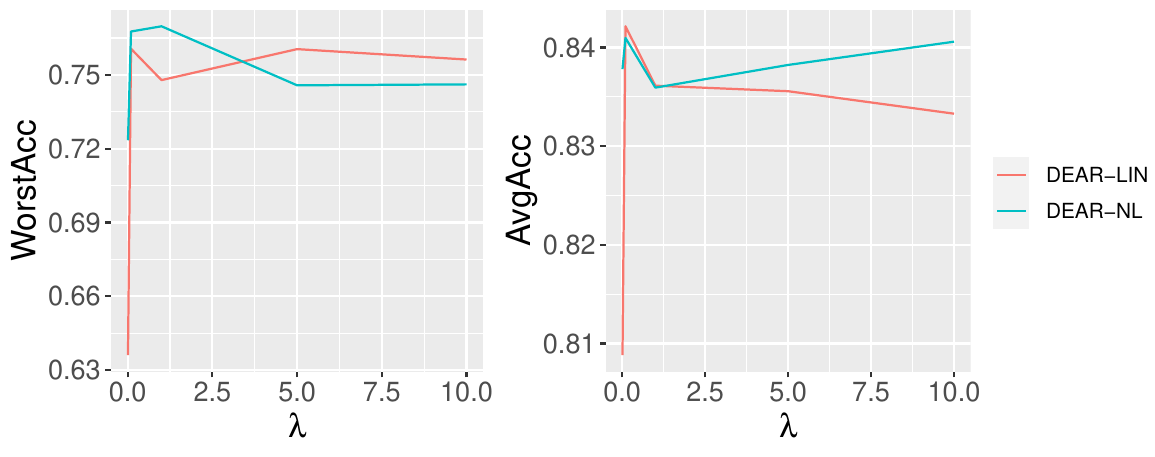}}
\subfigure[Pendulum]{
\includegraphics[width=0.48\textwidth]{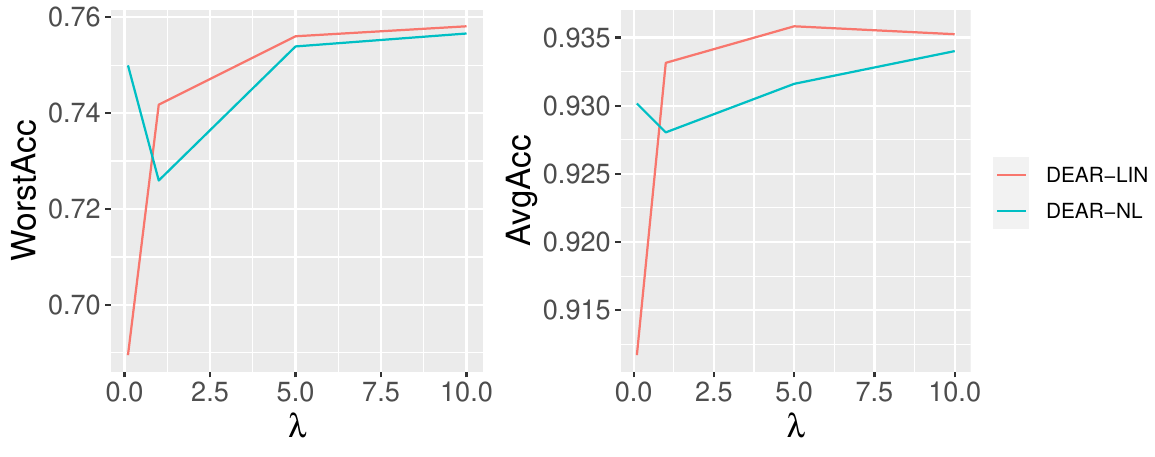}}
\vskip -0.1in
\caption{\small Worst-case and average test accuracy, as defined in Section~\ref{sec:dr}, against different choices of $\lambda$. On Pendulum, we experiment with $\lambda=0.1, 1, 5, 10$; on CelebA, we experiment with $\lambda=0.01,0.1, 1, 5, 10$.}
\label{fig:dr_lam}
\end{figure}

\begin{figure}[h]
\centering
\subfigure[CelebA]{
\includegraphics[width=0.48\textwidth]{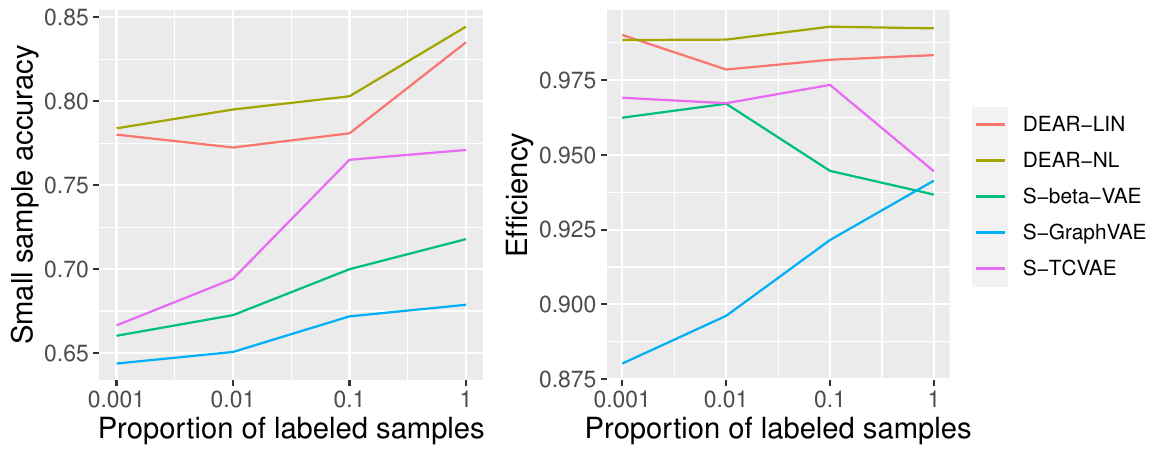}}
\subfigure[Pendulum]{
\includegraphics[width=0.48\textwidth]{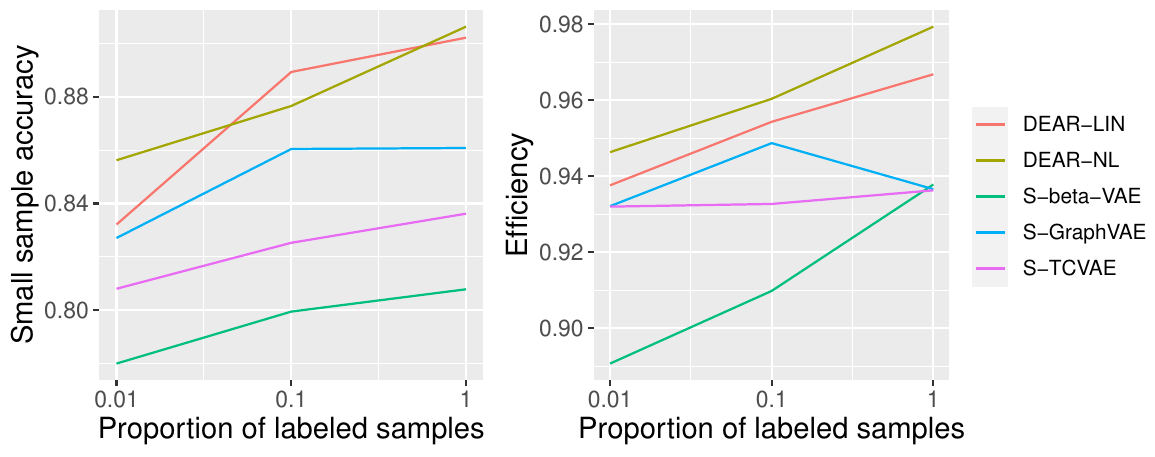}}
\vskip -0.1in
\caption{\small Test accuracy with a small training sample \& sample efficiency against different proportions of labeled samples among full data. On the larger data set CelebA, we consider proportion=0.001, 0.01, 0.1, 1; on the smaller Pendulum data, we consider 0.01, 0.1, 1.}
\label{fig:eff_prop}
\end{figure}

\begin{figure}[h]
\centering
\subfigure[CelebA]{
\includegraphics[width=0.48\textwidth]{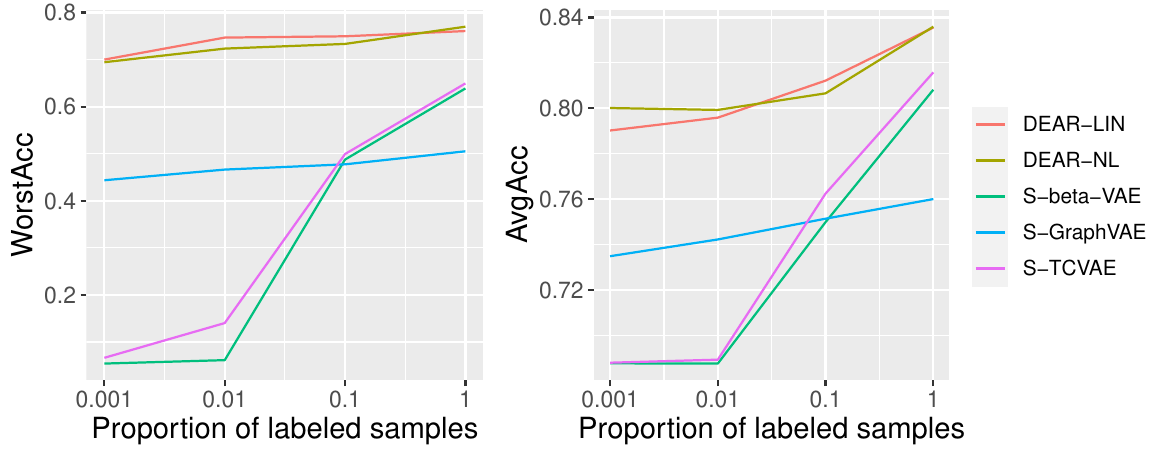}}
\subfigure[Pendulum]{
\includegraphics[width=0.48\textwidth]{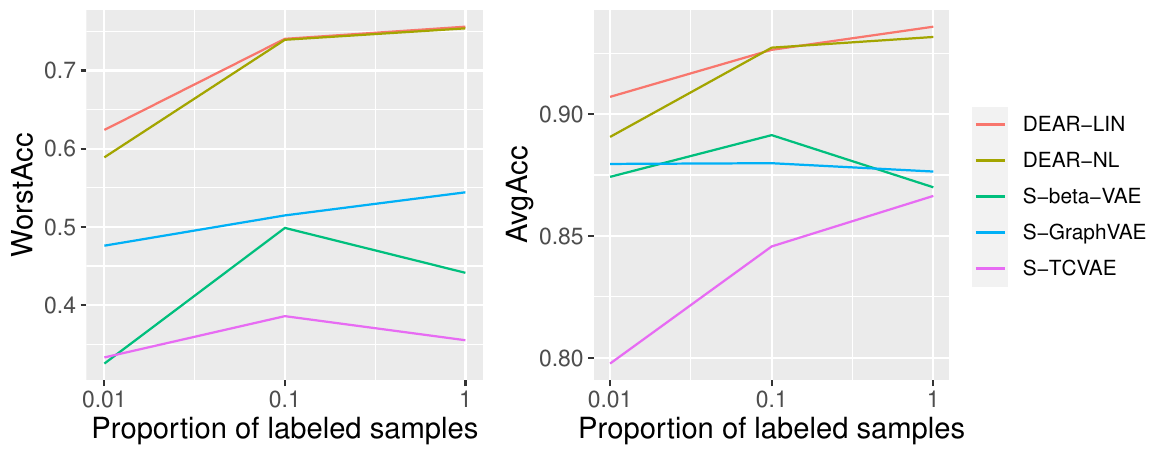}}
\vskip -0.1in
\caption{\small Worst-case and average test accuracy against different proportions of labeled samples among full data.}
\label{fig:dr_prop}
\end{figure}

\section{Conclusion}\label{sec:conclude}

In this paper, we showed that previous methods with the independent latent prior assumption fail to learn disentangled representation when the underlying factors of interests are causally related. We then proposed a new disentangled learning method called DEAR with theoretical guarantees for identifiability and asymptotic consistency. Extensive experiments demonstrated the effectiveness of DEAR in causal controllable generation and structure learning, and the benefits of the learned representations for downstream tasks.


\revise{
Several future directions are worth exploring. Although in our ablation experiments, we demonstrated that DEAR exhibits promising performance in weakly supervised settings in terms of annotated labels and the graph structure, it is worth considering more flexible forms of supervision to make DEAR widely adopted in more real-world applications. On one hand, regarding the annotated labels of the factors of interests, one may consider utilizing other forms of supervision, such as restricted labeling or rank pairing \citep{shu2019weakly}. \rr{Besides, instead of using direct supervision about the true factors, one may consider some additionally observed variables such as class labels or time index \citep{khemakhem2020variational} which serve as auxiliary information to ensure more general identifiability of the true latent factors in the causal case.} 
On the other hand, regarding the graph structure, our experiments in Section~\ref{sec:exp_learnA} indicated the potential of DEAR in latent structure learning. As in many real applications, even the causal ordering may not be available, it is promising to incorporate causal discovery methods in the DEAR framework to learn the latent causal structure from scratch (i.e., without any prior information) with a guarantee of the structure identifiability.}

\revise{
In addition, the proposed method applies to the case where the observational data are IID, as commonly considered in the literature of generative models and disentanglement. It would be interesting to extend the current approach to non-IID settings, in particular, to the scenarios where one can perform interventions during data collection. For example, in reinforcement learning, the interactive environment allows the agent to perform actions and observe their outcomes. The resulting data set that contains a mixture of interventional distributions \citep[e.g.,][]{ke2021systematic} could be leveraged in causal disentanglement learning.
}

\acks{We would like to thank the anonymous reviewers for their valuable comments that were very useful for improving the quality of this work. The work was supported by the General Research Fund (GRF) of Hong Kong (No. 16201320). 
F. Liu's research was supported in part by a Key Research Project of Zhejiang Lab (No. 2022PE0AC04).
}

\newpage
\appendix

\section{Proofs}\label{app:proof}

\subsection{Preliminaries}

This section presents some preliminary notions and lemmas which will be used in proofs.

\begin{definition}[Bracketing covering number~\citep{geer2000empirical}]\label{def:cover_num}
	Consider a function class $\cG=\{g(x)\}$ and a probability measure $\mu$ defined on $\cX$. Given any positive number $\delta>0$. Let $N_{1,B}(\delta,\cG,\mu)$ be the smallest value of $N$ for which there exist pairs of functions $\{[g_j^L,g_j^U]\}_{j=1}^N$ such that $\int |g_j^L(x)-g_j^U(x)|d\mu\leq\delta$ for all $j=1,\dots,N$, and such that for each $g\in\cG$, there is a $j=j(g)\in\{1,\dots,N\}$ such that $g_j^L\leq g\leq g_j^U$. Then $N_{1,B}(\delta,\cG,\mu)$ is called the $\delta$-bracketing covering number of $\cG$.
\end{definition}

\begin{lemma}[Uniform continuous mapping theorem]\label{lem:ucmt}
	Let $X_n$, $X$ be random vectors defined on $\cX$. Let $f:\bbR^d\to\bbR^m$ be uniformly continuous and $T_\theta:\cX\to\bbR^d$ for $\theta\in\Theta$. Suppose $T_\theta(X_n)$ converges uniformly in probability to $T_\theta(X)$ over $\Theta$, i.e., as $n\to\infty$ we have  $\sup_{\theta\in\Theta}\|T_\theta(X_n)-T_\theta(X)\|\pto0$. Then $f(T_\theta(X_n))$ converges uniformly in probability to $f(T_\theta(X))$, i.e., $\sup_{\theta}\|f(T_\theta(X_n))-f(T_\theta(X))\|\pto0$.
\end{lemma}
\begin{proof}
Given any $\epsilon>0$. 
Because $f$ is uniformly continuous, there exists $\delta>0$ such that $\|f(x)-f(y)\|\leq\epsilon$ for all $\|x-y\|\leq\delta$.

We have
\begin{align}
\bbP\Big(\sup_{\theta\in\Theta}\|T_\theta(X_n)-T_\theta(X)\|\leq\delta\Big)&=\bbP\big(\forall\theta\in\Theta:\|T_\theta(X_n)-T_\theta(X)\|\leq\delta\big)\label{eq:ucmt1}\\
&\leq\bbP\big(\forall\theta\in\Theta:\|f(T_\theta(X_n))-f(T_\theta(X))\|\leq\epsilon\big)\nonumber\\
&=\bbP\Big(\sup_{\theta\in\Theta}\|f(T_\theta(X_n))-f(T_\theta(X))\|\leq\epsilon\Big)\label{eq:ucmt2}.
\end{align}
By the uniform convergence of $T_\theta(X_n)$, we know the left-hand side of (\ref{eq:ucmt1}) converges to 1.
Hence (\ref{eq:ucmt2}) goes to 1, which implies the desired result.
\end{proof}

\begin{lemma}\label{lem:conv_density}
	Let $\mu_n$ and $\mu$ be a sequence of measures on probability space $(\cX,\Sigma)$ with densities $p_n(x)$ and $p(x)$. Given any compact subset $K$ of $\cX$. Suppose $p_n$ is uniformly bounded and Lipschitz on $K$ ($*$). If $H^2(\mu_n,\mu)\pto0$, then $\sup_{x\in K}|p_n(x)-p(x)|\pto0$ as $n\to\infty$, where $H(q_1,q_2)=\left(\int(q_1^{1/2}-q_2^{1/2})dxdz/2\right)^{1/2}$ denotes the Hellinger distance between two distributions with densities $q_1$ and $q_2$.
\end{lemma}
\begin{proof}
Note that assumptions in ($*$) satisfy the requirements in the Arzel\`a-Ascoli theorem. Thus, for each subsequence of $p_n$, there is a further subsequence $p_{n_m}$ which converges uniformly on compact set $K$, i.e., for some $p_0$ as $m\to\infty$ we have
\begin{equation*}
	\sup_{x\in K}|p_{n_m}(x)-p_0(x)|\to0.
\end{equation*}

By Scheff\'e's Theorem we have $H(p_{n_m},p_0)\to0$. On the other hand we have $H(p_{n_m},p)\pto0$. By triangle inequality,
\begin{equation*}
	H(p,p_0)\leq H(p_{n_m},p_0)+H(p_{n_m},p)\pto0.
\end{equation*}
Since the inequality holds for all $m$ and the LHS is deterministic, we have $H(p,p_0)=0$, which implies $p=p_0$, a.e. wrt the Lebesgue measure. Hence we have
\begin{equation*}
	\sup_{x\in K}|p_{n_m}(x)-p(x)|\to0,\ a.e.
\end{equation*}
By \citet[Theorem 2.3.2]{durrett2019probability}, we have $\sup_{x\in K}|p_{n}(x)-p(x)|\pto0$ as $n\to\infty$.
\end{proof}

\subsection{Proof of Proposition~\ref{prop}}\label{app:prof_prop}
\begin{proof}
On one hand, by the assumption that  the elements of $\xi$ are connected by a causal graph whose adjacency matrix is not a zero matrix, there exist $i\neq j$ such that $[\xi]_i$ and $[\xi]_j$ are not independent, indicating that the probability density of $\xi$ cannot be factorized. Since $E^*$ is disentangled with respect to $\xi$, by Definition~\ref{def}, $\forall i=1,\dots,m$ there exists $g_i$ such that $[E^*(x)]_i=g_i([\xi]_i)$. This implies that the probability density of $E^*(x)$ is not factorized.

On the other hand, notice that the distribution family of the latent prior is contained in $\{p_z:p_z\text{ is factorized}\}$. Hence the intersection of the marginal distribution families of $z$ and $E^*(x)$ is an empty set. Then the joint distribution families of $(x,E^*(x))$ and $(G(z),z)$ also have an empty intersection. 

We know that $L_{\rm{gen}}(E^*,G)=0$ implies $q_{E^*}(x,z)=p_G(x,z)$ which contradicts the above. Therefore, we have $a=\min_{G}L_{\rm{gen}}(E^*,G)>0$. 

Let $(E',G')$ be the solution of the optimization problem $\min_{\{(E,G):L_{\rm{gen}}=0\}}L_{\rm{sup}}(E)$. From the above we know $E'$ cannot be disentangled with respect to $\xi$. Then we have $L'=L(E',G')=\lambda b$, and $L^*=L(E^*,G)\geq a+\lambda b^*>\lambda b^*$ for any generator $G$. When $b^*\geq b$ we directly have $L'<L^*$. When $b^*< b$ and $\lambda$ is not large enough, i.e., $\lambda<\frac{a}{b-b^*}$, we have $L'<L^*$. 
\end{proof}

\noindent{\revise{\textbf{Discussion on \citet[Proposition 1]{trauble2021disentangled}}}}\medskip

Proposition 1 in \citet{trauble2021disentangled} and our Proposition~\ref{prop} state the same unidentifiability issue from different perspectives. Proposition 1 in \citet{trauble2021disentangled} says that maximum likelihood estimation (MLE) cannot identify the disentangled representation, while our Proposition~\ref{prop} says that the formulation \eqref{eq:obj_new} in our paper cannot identify the disentangled representation. The relationship of the two formulations, MLE and \eqref{eq:obj_new}, is that the first term in \eqref{eq:obj_new} is an upper bound of the negative log-likelihood. Therefore, our Proposition~\ref{prop} is more straightforward in the sense that it directly studies the formulation that is used in disentanglement methods.

\subsection{Proof of Proposition~\ref{thm:identify}}\label{sec:pf_thm}
In this section, we prove a full statement of Proposition~\ref{thm:identify}. Specifically, we add an assumption on structure identifiability and the consequent result in learning the true structure. 
Assumption~\ref{assu_str_iden} states the identifiability of the true causal structure $\mathbf{I}_{A_0}$ of $\xi$, which is applicable given the true causal ordering under the basic Markov and causal minimality conditions~\citep{pearl2014probabilistic,zhang2011intervention}. 
\begin{assu}
For all $\beta=(f,h,A)\in\cB$ with $p_\beta=p_{\beta_0}$, it holds that $\mathbf{I}_{A}=\mathbf{I}_{A_0}$.
\label{assu_str_iden}
\end{assu}

\begin{proposition}[Full statement of Proposition~\ref{thm:identify}]
Assume the infinite capacity of $E$ and $G$. Further under Assumptions~\ref{assu1} and \ref{assu_str_iden}, DEAR formulation (\ref{eq:obj_new}) learns the disentangled encoder $E^*$ and the true causal structure $\mathbf{I}_{A_0}$. 
Specifically, we have $g_i(x)=\sigma^{-1}(x)$ with the CE loss as the supervised regularizer, and $g_i(x)=x$ with the $L_2$ loss.
\end{proposition}

\begin{proof} To simplify the notations in this section, for a vector $x$, let $x_i$ denote the $i$-th element of $x$ instead of $[x]_i$. For a vector function $g(x)$, let $g_i(x)$ denote the $i$-th component function.  

Assume $E$ is deterministic. 

On one hand, for each $i=1,\dots,m$, first consider the cross-entropy loss \
\begin{equation*}
\begin{split}
L_{\text{sup},i}(E)&=\bbE_{(x,y)}[\mathrm{CE}(E_i(x),y_i)]\\
&=-\int q_x(x)p(y_i|x)[y_i\log\sigma(E_i(x))+(1-y_i)\log(1-\sigma(E_i(x)))]dxdy_i,
\end{split}
\end{equation*}
where $p(y_i|x)$ is the probability mass function of the binary label $y_i$ given $x$, characterized by $\bbP(y_i=1|x)=\bbE(y_i|x)$ and $\bbP(y_i=0|x)=1-\bbE(y_i|x)$.
Let
\begin{align*}
\frac{\partial L_{\text{sup},i}}{\partial\sigma(E_i(x))}=\int q_x(x)p(y_i|x)\left(\frac{1}{1-\sigma(E_i)}-y_i\frac{1}{\sigma(E_i)(1-\sigma(E_i))}\right)dxdy_i=0.
\end{align*}
Then we know that $E_i^*(x)=\sigma^{-1}(\bbE(y_i|x))=\sigma^{-1}(\xi_i)$ minimizes $L_{\text{sup},i}$.

Consider the $L_2$ loss 
\begin{equation*}
L_{\text{sup},i}(\phi)=\bbE_{(x,y)}[E_i(x)-y_i]^2=\int q_x(x)p(y_i|x)[E_i(x)-y_i]^2dxdy_i.
\end{equation*}
Let 
\begin{equation*}
\frac{\partial L_{\text{sup},i}}{\partial E_i(x)}=2\int q_x(x)p(y_i|x)(E_i(x)-y_i)dxdy_i=0.
\end{equation*}
Then we know that $E_i^*(x)=\bbE(y_i|x)=\xi_i$ minimizes $L_{\text{sup},i}$ in this case.

On the other hand, by Assumption~\ref{assu1} there exists $\beta_0=(f_0,h_0,A_0)$ such that $p_\xi=p_{\beta_0}$. 
Further due to the infinite capacity of $G$ and Assumption~\ref{assu1}, we have the distribution family of $p_{G,F}(x,z)$ contains $q_{E^*}(x,z)$. Then by minimizing the loss in (\ref{eq:obj_new}) over $G$, we can find $G^*$ and $F^*$ such that $p_{G^*,F^*}(x,z)$ matches $q_{E^*}(x,z)$ and thus $L_{\text{gen}}(E^*,G^*,F^*)$ reaches 0, where $F^*$ corresponds to parameter $\beta^*=(f^*, h^*,A^*)$. 

Note that $p_{G^*,F^*}(x,z)=q_{E^*}(x,z)$ implies that the marginal distributions match, i.e., $p_{F^*}(z)=q_{E^*}(z)$. Generally denote $E_i^*(x)=g_i(\xi_i)$ for $i=1,\dots,m$. Then, for $i=1,\dots,m$, the distributions of $g_i^{-1}(E_i^*(x))=\xi_i$ and $g_i^{-1}(F_i^*(\epsilon))$ are identical. It can be seen that $p_{\beta_0}=p_{\beta^*_0}$ with $\beta^*_0=(g^{-1}\circ f^*,h^*,A^*)$, where $\circ$ denotes elementwise composition. Then according to Assumption~\ref{assu_str_iden}, we have $\mathbf{I}_{A^*}=\mathbf{I}_{A_0}$.

Hence minimizing $L=L_{\text{gen}}+\lambda L_{\text{sup}}$, which is the DEAR formulation (\ref{eq:obj_new}), leads to the solution with $E^*_i(x)=g_i(\xi_i)$ with $g_i(\xi_i)=\sigma^{-1}(\xi_i)$ if CE loss is used, and $g_i(\xi_i)=\xi_i$ if $L_2$ loss is used, and the true binary adjacency matrix $\mathbf{I}_{A_0}$.

For a stochastic encoder, we establish the disentanglement of its deterministic part as above, and follow Definition~\ref{def} to obtain the desired result.
\end{proof}

\subsection{Proof of Lemma~\ref{lem:grad_conv}}

\begin{proof}[Proof of Lemma~\ref{lem:grad_conv}]
The proof proceeds in three steps after an introduction on logistic regression under the scenario of generative models. 

For pair $(x,z)$, let label $w=1$ if $(x,z)\sim q_E$ and $w=0$ if $(x,z)\sim p_G$, which states that $p(x,z|w=1)=q_E(x,z)$ and $p(x,z|w=0)=p_{G,F}(x,z)$. In generative models, the prior is given by $\bbP(w=1)=\bbP(w=0)=1/2.$ Then the marginal distribution of $(x,z)$ is given by $p^*(x,z)=q_E(x,z)/2+p_{G,F}(x,z)/2$ which induces the probability measure $\mu^*$. Note that the analysis below holds for all $\btheta\in\Theta$ in a pointwise manner unless indicated otherwise, so for simplicity of notation, we omit the subscript $\btheta$.  By the Bayes formula we have $\bbP(w=1|x,z)=q_E(x,z)/(q_E(x,z)+p_{G,F}(x,z))$ and $\bbP(w=0|x,z)=p_{G,F}(x,z)/(q_E(x,z)+p_{G,F}(x,z))$ which defines the probability mass function $p^*(w|x,z)$. Let $p^*(x,z,w)=p^*(x,z)p^*(w|x,z)$. 

Recall the definition $D^*(x,z)=\log(q_E(x,z)/p_{G,F}(x,z))$, so we notice $\bbP(w=1|x,z)=1/(1+e^{-D^*(x,z)})$. Consider the family of conditional distributions $\cP=\{P_D(w=1|x,z)=1/(1+e^{-D(x,z)}):D\in\cD\}$. 

Logistic regression maximizes the log-likelihood  
\begin{equation*}
	\bbE_{x,z,w\sim p^*(x,z,w)}[\log p_D(x,z,w)]=\bbE_{p^*(x,z,w)}\log[p^*(x,z)p_D(w|x,z)]
\end{equation*}
over $\cP$ or equivalently over $\cD$.

Given IID samples $(x_i,z_i,w_i),i=1,\dots,N_d$ from $p^*(x,z,w)$, the empirical loss to be minimized is given by
\begin{align*}
	\hat{L}_d(D)&=-\frac{1}{N_d}\sum_{i=1}^{N_d}[\log p_D(w_i|x_i,z_i)+\log p^*(x_i,z_i)]\\
	&=\frac{1}{N_d}\bigg[\sum_{i:w_i=1}\log(1+e^{-D(x_i,z_i)}) + \sum_{i:w_i=0}\log(1+e^{D(x_i,z_i)})\bigg]+c,
\end{align*}
which is equivalent to (\ref{eq:logistic}) up to a constant $c$.
Let $\hat D=\argmin_{D\in\cD} \hat{L}_d(D)$ and $\hat{P}(w=1|x,z)=1/(1+e^{-\hat{D}(x,z)})$.

\bigskip
\noindent
\textbf{Step I} \ \ We now establish the consistency of $\hat{D}(x,z)$ to $D^*(x,z)$ as defined in (\ref{eq:d_conv}) below based on the generalization analysis of maximum likelihood estimation. 

Let the class
\begin{equation*}
	\cG=\left\{g(x,z,w)=\frac{1}{2}\log\frac{p_D(x,z,w)+p^*(x,z,w)}{2p^*(x,z,w)}:D\in\cD\right\}.
\end{equation*}
Note that each element of $\cG$ can be written as
\begin{equation*}
	g(x,z,w)=\frac{1}{2}\log\frac{p_D(w|x,z)+p^*(w|x,z)}{2p^*(w|x,z)}.
\end{equation*}
By the boundedness of $D^*$ in condition~\ref{ass:D_bound}, we know the mass function $p^*(w|x,z)$ is bounded in a closed interval within $(0,1)$, so $g$ is uniformly bounded. Let $g_\infty=\sup_{g\in\cG}|g|$. Then $\bbE_{p^*(x,z,w)}[g_\infty(x,z,w)]<\infty$. Moreover for all $\delta>0$, the compactness of $\cD$ assumed in condition~\ref{ass:d_compact} implies a finite bracketing covering number defined in Definition~\ref{def:cover_num}, i.e., $N_{1,B}(\delta,\cD,\mu^*)<\infty$. Then it follows from \citet[Theorem~4.3]{geer2000empirical} that  
\begin{equation}\label{eq:conv_in_h}
	H(p_{\hat{D}}(x,z,w),p^*(x,z,w))\to0
\end{equation}
almost surely as $N_d\to\infty$, where $H$ denotes the Hellinger distance.

Consider any compact subset $K$ of $\cX\times\cZ$. We know that for all $D\in\cD$, $D(x,z)$ is continuous and thus is bounded and Lipschitz on $K$. Also from the boundedness of $D^*$, we know that $p^*(x,z)$ is bounded away from 0 on $K$. Then it follows from (\ref{eq:conv_in_h}) and Lemma~\ref{lem:conv_density} that
\begin{equation*}
	\sup_{(x,z)\in K}|\hat{P}(w=1|x,z)-\bbP(w=1|x,z)|\pto0.
\end{equation*}
Then by continuous mapping theorem (Lemma~\ref{lem:ucmt}) and noting that $l(p)=\log(p/(1-p))$ is uniformly continuous on a closed interval within $(0,1)$, we have as $N_d\to\infty$
\begin{equation}\label{eq:d_conv}
	\sup_{(x,z)\in K}|\hat{D}(x,z)-D^*(x,z)|\pto0.
\end{equation}

\medskip
\noindent
\textbf{Step II} \ \  We then prove the pointwise consistency of $\nabla \hat{D}(x,z)$ to $\nabla D^*(x,z)$ as defined in (\ref{eq:d_grad_conv}).

Construct an arbitrary probability measure $\mu$ on $\cX\times\cZ$ that satisfies the following (e.g., a Gaussian measure):
\begin{itemize}\vspace{-0.1cm}
\setlength{\itemsep}{0pt}
\item $\mu$ is absolutely continuous with respect to Lebesgue measure with a density $\rho$.
\item $\mu$ is tight, i.e., for any $\epsilon>0$, there is a compact subset $K_\epsilon$ of $\cX\times\cZ$ such that $\mu(K_\epsilon)\geq1-\epsilon$.
\item $\nabla\log\rho$ is uniformly bounded, i.e., there exists $B_1>0$ such that $\|\nabla\log\rho(x,z)\|\leq B_1$ for all $(x,z)\in \cX\times\cZ$.
\item $\rho$ vanishes at infinity rapidly enough, i.e., $\rho(x,z)=o(r^{-d-k})$ with $r=\sqrt{\|x\|^2+\|z\|^2}$. 
\end{itemize}

For a function $u$ that is uniformly bounded on $\cX\times\cZ$, we have from integration by parts and Cauchy-Schwartz inequality that 
\begin{align}
	\int\|\nabla u\|^2d\mu &= -\int u\ tr(\nabla^2u)d\mu-\int u\nabla u^\top \nabla\log\rho d\mu\nonumber\\
	&\leq \sqrt{\int|u|^2d\mu\int [tr(\nabla^2u)]^2d\mu} + \sqrt{\int|u|^2d\mu\int(\nabla u^\top \nabla\log\rho)^2d\mu}.\label{eq:ibp1}
\end{align}

%

Recall from condition~\ref{ass:D_bound} that there exists a positive number $B_0<\infty$ such that $\forall x,z$, $\forall D\in\cD$, we have $|D(x,z)|\leq B_0$, $\|\nabla D(x,z)\|\leq B_0$ and $|tr(\nabla^2 D(x,z))|\leq B_0$.

Given arbitrary $\epsilon>0$, we know from the tightness of $\mu$ that there exits a compact subset $K_\epsilon$ of $\cX\times\cZ$ such that $\mu(K_\epsilon)\geq1-\epsilon$.
Let $B=\max\{B_0,B_1\}$.
Then we have for all $\btheta\in\Theta$ that
\begin{equation}
\begin{split}\label{eq:ibp}
	&\int_{\cX\times\cZ} \|\nabla\hat{D}(x,z)-\nabla D^*(x,z)\|^2d\mu \\
	\leq\ & 2B\sqrt{\int_{\cX\times\cZ}|\hat{D}(x,z)-D^*(x,z)|^2d\mu} + \sqrt{2}B^2\sqrt{\int_{\cX\times\cZ}|\hat{D}(x,z)-D^*(x,z)|^2d\mu}\\
	=\ & (2B+\sqrt{2}B^2)\sqrt{\int_{K_\epsilon}|\hat{D}(x,z)-D^*(x,z)|^2d\mu + \int_{K_\epsilon^c}|\hat{D}(x,z)-D^*(x,z)|^2d\mu}\\
	\leq\ & (2B+\sqrt{2}B^2)\sqrt{\int_{K_\epsilon}|\hat{D}(x,z)-D^*(x,z)|^2d\mu + 2B^2\epsilon}, 
\end{split}
\end{equation}
where $K_\epsilon^c=(\cX\times\cZ)\setminus K_\epsilon$ is the complement, the first inequality is an application of (\ref{eq:ibp1}) and further due to the boundedness of $\nabla\log\rho$ and gradients and Hessians of functions in $\cD$, and the second inequality comes from the boundedness of functions in $\cD$ and the tightness of $\mu$.

By the uniform convergence in (\ref{eq:d_conv}) over $K_\epsilon$, we have for all $(x,z)\in K_\epsilon$, there exist a sequence $a_{N_d}=o_p(1)$ which is free of $(x,z)$ such that $|\hat{D}(x,z)-D(x,z)|^2\leq a_{N_d}$. Then we have
\begin{equation*}
	\int_{K_\epsilon}|\hat{D}(x,z)-D^*(x,z)|^2d\mu \leq \int_{K_\epsilon}a_{N_d}d\mu = a_{N_d}\mu(K_\epsilon)=o_p(1),
\end{equation*}
by noting that $\mu$ is finite. Further by the arbitrariness of $\epsilon$, we let $\epsilon\to0$ and obtain from (\ref{eq:ibp}) that 
\begin{equation*}
	\int_{\cX\times\cZ} \|\nabla\hat{D}(x,z)-\nabla D^*(x,z)\|^2d\mu\pto0.
\end{equation*}
Recall the arbitrariness of $\mu$. For all $(x,z)\in\cX\times\cZ$, construct $\mu$ such that $\rho(x,z)>0$. 
Let $v(x,z)=\|\nabla\hat{D}(x,z)-\nabla D^*(x,z)\|^2\rho(x,z)$.
By the converse of the mean value theorem, if $(x,z)$ is not an extremum of $v$, then there exists a bounded subset $S(x,z)\subseteq\cX\times\cZ$ such that 
\begin{align*}
	\|\nabla\hat{D}(x,z)-\nabla D^*(x,z)\|^2\rho(x,z)&= \frac{1}{\nu(S(x,z))}\int_{S(x,z)}v(x',z')dx'dz' \\
	&\leq \frac{1}{\nu(S(x,z))}\int_{\cX\times\cZ} v(x',z')dx'dz'\pto0
\end{align*}
where $\nu$ denotes the Lebesgue measure. Since $\rho(x,z)>0$, this implies $\|\nabla\hat{D}(x,z)-\nabla D^*(x,z)\|\pto0$ for all non-extrema. By Lipschitz continuity of $v$ on any compact set, we have $\|\nabla\hat{D}(x,z)-\nabla D^*(x,z)\|\pto0$ for all extrema. 

Up to now we have shown that for all $\btheta\in\Theta$ and $(x,z)\in\cX\times\cZ$, we have $\|\nabla\hat{D}(x,z)-\nabla D^*(x,z)\|\pto0$ as $N_d\to\infty$. 
Further from the smoothness in condition~\ref{ass:D_smooth} and the compactness of $\Theta$, we have  $\forall x,z$,  as $N_d\to\infty$
\begin{equation}\label{eq:d_grad_conv}
	\sup_{\btheta\in\Theta}\|\nabla\hat{D}(x,z)-\nabla D^*(x,z)\|\pto0.
\end{equation}

\medskip
\noindent
\textbf{Step III} \ \  Based on the convergence statements established above, we proceed to show the consistency of the approximate gradient $h_{\hat{D}}(\btheta)$ and complete the proof.

%

By condition~\ref{ass:tight}, $\{\mu^*\}$ is uniformly tight. For arbitrary $\epsilon>0$, there exists a compact subset $K_\epsilon$ of $\cX\times\cZ$ such that $\mu^*(K_\epsilon^c)<\epsilon$. Because $\nabla D(x,z)$ is Lipschitz continuous with respect to $(x,z)$ on $K_\epsilon$, we have as  $N_d\to\infty$
\begin{equation}\label{eq:unif_theta_xz}
	\sup_{\btheta\in\Theta,(x,z)\in K_\epsilon}\|\nabla\hat{D}(x,z)-\nabla D^*(x,z)\|\pto0.
\end{equation}

Given any $S_N=\{(x_i,z_i)\sim\mu^*,y_j:i=1,\dots,N,j=1,\dots,N_s\}$ and $\delta>0$. Define events $A_{N_d}=\{\sup_{\btheta}\|h_{\hat{D}}(\btheta)-h_{D^*}(\btheta)\|\leq\delta\}$ and $B_{N,\epsilon}=\{\forall i:(x_i,z_i)\in K_\epsilon\}$. We have from the tightness of $\mu^*$ that $\bbP(B_{N,\epsilon})\geq(1-\epsilon)^N$. We know from (\ref{eq:unif_theta_xz}) and the continuous mapping theorem (Lemma~\ref{lem:ucmt}) that for any $S_N$ and $\epsilon>0$, as $N_d\to\infty$ (free of $S_N$), we have $\bbP(A_{N_d}|B_{N,\epsilon})\to1$. Then as $N_d\to\infty$, we have $$\bbP(A_{N_d})\geq\bbP(A_{N_d}\cap B_{N,\epsilon})=\bbP(A_{N_d}|B_{N,\epsilon})\bbP(B_{N,\epsilon})\geq\bbP(A_{N_d}|B_{N,\epsilon})(1-\epsilon)^N\to (1-\epsilon)^N.$$
By letting $\epsilon\to0$ we have $\bbP(A_{N_d})\to1$ as $N_d\to\infty$. Since $\delta$ is arbitrary, we have that for any $S_N$, $\sup_{\btheta\in\Theta}\|h_{\hat{D}}(\btheta)-h_{D^*}(\btheta)\|\pto0$ as $N_d\to\infty$.

On the other hand, by condition~\ref{ass:g} and the boundedness of $D^*$, and according to the gradient formulas in Lemma~\ref{lem:grad_formula}, it follows from the uniform law of large numbers that $\|h_{D^*}(\btheta)-\nabla L(\btheta)\|\pto0$ uniformly over $\Theta$ as $N,N_s\to\infty$.

By triangle inequality we have 
\begin{equation*}
	\sup_{\btheta\in\Theta}\|h_{\hat D}(\btheta)-\nabla L(\btheta)\|\leq \sup_{\btheta\in\Theta}\|h_{\hat D}(\btheta)-h_{D^*}(\btheta)\| + \sup_{\btheta\in\Theta}\|h_{D^*}(\btheta)-\nabla L(\btheta)\|.
\end{equation*}

Therefore there exists a sequence of $(N,N_s,N_d)\to\infty$ such that
\begin{equation*}
	\sup_{\btheta\in\Theta}\|h_{\hat D}(\btheta)-\nabla L(\btheta)\|\pto0
\end{equation*}
which completes the proof.
\end{proof}

\subsection{Proof of Theorem~\ref{thm:cons}}\label{app:pf_cons}

\begin{proof}[Proof of Theorem~\ref{thm:cons}]
Consider the gradient descent step based on the approximate gradient
\begin{equation*}
	\btheta_t = \btheta_{t-1} - \eta h_{\hat D}(\btheta_{t-1}),
\end{equation*}
where $\eta$ is the learning rate.

Suppose $L(\btheta)$ is $\ell_0$-smooth. Then we have
\begin{equation*}
	L(\btheta_t) \leq L(\btheta_{t-1}) - \eta h_{\hat D}(\btheta_{t-1})^\top\nabla L(\theta_{t-1}) + \frac{\eta^2\ell_0}{2}h_{\hat D}(\btheta_{t-1})^\top h_{\hat D}(\btheta_{t-1}).
\end{equation*}

Let $\hat\epsilon(\btheta)=\nabla L(\btheta)-h_{\hat{D}}(\btheta)$. By Lemma~\ref{lem:grad_conv}, there exists a sequence of $(N,N_s,N_d)\to\infty$ such that $\hat\epsilon=\sup_{\btheta}\|\hat\epsilon(\btheta)\|\pto0$. Then we have
\begin{align*}
	-\eta h_{\hat D}(\btheta_{t-1})^\top\nabla L(\theta_{t-1})&=-\eta h_{\hat D}(\btheta_{t-1})^\top\left(h_{\hat D}(\btheta_{t-1})+\hat\epsilon(\btheta_{t-1})\right)\\
	&\leq \eta \left(-\|h_{\hat D}(\btheta_{t-1})\|^2 +(\|h_{\hat D}(\btheta_{t-1})\|^2+\|\hat\epsilon\|^2)/2\right)\\
	&=-\frac{\eta}{2}\left(\|h_{\hat D}(\btheta_{t-1})\|^2-\hat\epsilon^2\right)\\
	&\leq -\frac{\eta}{4}\|h_{\hat D}(\btheta_{t-1})\|^2,
\end{align*}
under the case where $\|h_{\hat D}(\btheta_{t-1})\|^2\geq 2\hat\epsilon^2$. We note that
\begin{align*}
	L(\btheta_t) &\leq L(\btheta_{t-1}) - \frac{\eta}{4}\|h_{\hat D}(\btheta_{t-1})\|^2 + \frac{\eta^2\ell_0}{2}\|h_{\hat D}(\btheta_{t-1})\|^2\\
	&\leq L(\btheta_{t-1}) - \frac{\eta}{8}\|h_{\hat D}(\btheta_{t-1})\|^2,
\end{align*}
when $\eta<1/4\ell_0$ which can be satisfied with a sufficiently small learning rate.

By summing over $t=1,\dots,T$, we have  
\begin{equation*}
	L(\btheta_T) \leq L(\btheta_0) - 0.125 \eta \sum_{t=1}^T \|h_{\hat D}(\btheta_{t-1})\|^2.
\end{equation*}
Note that $L(\btheta)$ is lower bounded by 0. Then we have $\sum_t \|h_{\hat D}(\btheta_{t-1})\|^2 = O(1)$. Thus there exists $t$ in $\{0,\dots,T\}$ such that $\|h_{\hat D}(\btheta_{t-1})\|^2 = O(1/T)$. 

Otherwise there exists $t$ such that $\|h_{\hat D}(\btheta_{t-1})\|<\sqrt{2}\hat\epsilon=o_p(1)$.

Therefore we have the empirical estimator $\|h_{\hat D}(\hat\btheta)\|\pto0$.

%
By the uniform convergence (\ref{eq:unif_grad}) from Lemma~\ref{lem:grad_conv}, we have $\|\nabla L(\hat\btheta)\|=0$.  
Then by the PL condition, there exists a sequence of $(N,N_s,N_d)\to\infty$ such that
\begin{equation*}
	L(\hat\theta) - L^* \pto 0,
\end{equation*}
which leads to the desired result.
\end{proof}

\subsection{Proof of Lemma~\ref{lem:grad_formula}}

We follow the same proof scheme as in \citet{shen2020bidirectional} where the only difference lies in the gradient with respect to the prior parameter $\beta$. To make this paper self-contained, we restate some proof steps here using our notations.

Let $\|\cdot\|$ denote the vector 2-norm. For a scalar function $h(x,y)$, let $\nabla_x h(x,y)$ denote its gradient with respect to $x$. For a vector function $g(x,y)$, let $\nabla_x g(x,y)$ denote its Jacobi matrix with respect to $x$.  Given a differentiable vector function $g(x):\mathbb{R}^k\to\mathbb{R}^k$, we use $\nabla\cdot g(x)$ to denote its divergence, defined as 
$$\nabla\cdot g(x):=\sum_{j=1}^k\frac{\partial[g(x)]_j}{\partial[x]_j},$$ 
where $[x]_j$ denotes the $j$-th component of $x$. We know that 
\begin{eqnarray*}
\int\nabla\cdot g(x)dx=0
\end{eqnarray*}
for all vector function $g(x)$ such that $g(\infty) = 0$. Given a matrix function $w(x)=(w_1(x),\dots,w_l(x)):\mathbb{R}^k\to\mathbb{R}^{k\times l}$ where each $w_i(x),i=1\dots,l$ is a $k$-dimensional differentiable vector function, its divergence is defined as $\nabla\cdot w(x)=(\nabla\cdot w_1(x),\dots,\nabla\cdot w_l(x))$.

To prove Lemma~\ref{lem:grad_formula}, we need the following lemma which specifies the dynamics of the generator joint distribution $p_g(x,z)$ and the encoder joint distribution $p_e(x,z)$, denoted by $p_\theta(x,z)$ and $p_\phi(x,z)$ here.

\begin{lemma}\label{lemma:lemma}
Using the definitions and notations in Lemma~\ref{lem:grad_formula}, we have 
\begin{align}
\nabla_\theta p_{\theta,\beta}(x,z) &= -\nabla_x p_{\theta,\beta}(x,z)^\top g_\theta(x) - p_{\theta,\beta}(x,z)\nabla\cdot g_\theta(x),\label{eq:grad_pgen}\\
\nabla_\phi q_\phi(x,z) &= -\nabla_z q_\phi(x,z)^\top e_\phi(z) - q_\phi(x,z)\nabla\cdot e_\phi(z),\label{eq:grad_penc}\\
\nabla_\beta p_{\theta,\beta}(x,z) &= \nabla_x p_{\theta,\beta}(x,z)^\top \tilde{f}_\beta(x) - \nabla_z p_{\theta,\beta}(x,z)^\top f_\beta(z) - p_{\theta,\beta}(x,z)\nabla\cdot\binom{\tilde{f}_\beta(x)}{f_\beta(z)},\label{eq:grad_prior}
\end{align}
for all data $x$ and latent variable $z$, where $g_\theta(G_\theta(z,\epsilon))=\nabla_\theta G_\theta(z,\epsilon)$, $e_\phi(E_\phi(x,\epsilon))=\nabla_\phi E_\phi(x,\epsilon)$, $f_\beta(F_\beta(\epsilon))=\nabla_\beta F_\beta(\epsilon)$, and $\tilde{f}_\beta(G(F_\beta(\epsilon)))=\nabla_\beta G(F_\beta(\epsilon))$.
\end{lemma}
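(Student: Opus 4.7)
The plan is to prove each of the three identities by the same weak-formulation argument, essentially a continuity (Liouville) equation in parameter space: pair the density against an arbitrary smooth, compactly supported test function $\psi(x,z)$, rewrite the resulting integral as an expectation via the reparametrization that defines the pushforward, differentiate under the expectation with respect to the parameter, and integrate by parts in $(x,z)$. Since $\psi$ is arbitrary and all densities and vector fields involved are continuous, the two integrands must agree pointwise, yielding the claimed identity.

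For the first identity, I would start from
\begin{equation*}
\int p_{\theta,\beta}(x,z)\,\psi(x,z)\,dx\,dz \;=\; \bbE_{\epsilon,\,z\sim p_\beta}\bigl[\psi\bigl(G_\theta(z,\epsilon),\,z\bigr)\bigr],
\end{equation*}
differentiate in $\theta$ (holding $\epsilon$ and $z$ fixed, and observing that the $z$-coordinate carries no $\theta$-velocity), and apply the chain rule to bring out a weight $\nabla_x\psi^\top\,g_\theta(G_\theta(z,\epsilon))$. Re-expressing the resulting expectation as an integral against $p_{\theta,\beta}$ and invoking the divergence identity $\int p\,\nabla_x\psi^\top g\,dx\,dz = -\int\psi\,\nabla_x\!\cdot(pg)\,dx\,dz$, together with $\nabla_x\!\cdot(pg)=\nabla_x p^\top g + p\,\nabla_x\!\cdot g$, yields the stated formula. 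The second identity is obtained by the same template applied to the encoder pushforward $z=E_\phi(x,\epsilon)$ over $x\sim q_x$, so the velocity now lives purely in the $z$-coordinate.

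The third identity is where both coordinates move simultaneously: $z=F_\beta(\epsilon)$ and $x=G(F_\beta(\epsilon))$ both depend on $\beta$, so differentiating $\psi\bigl(G(F_\beta(\epsilon)),F_\beta(\epsilon)\bigr)$ with $\epsilon$ held fixed produces two chain-rule terms $\nabla_x\psi^\top\tilde{f}_\beta(x) + \nabla_z\psi^\top f_\beta(z)$. After rewriting as an integral against $p_{\theta,\beta}$, I would apply integration by parts simultaneously in $x$ and $z$, using the combined divergence $\nabla\cdot\binom{\tilde{f}_\beta}{f_\beta}=\nabla_x\!\cdot\tilde{f}_\beta+\nabla_z\!\cdot f_\beta$, to recover the stated right-hand side.

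The main obstacle is bookkeeping rather than analysis. One has to track carefully what is held fixed at each differentiation step (namely $(z,\epsilon)$ for $\nabla_\theta$, $(x,\epsilon)$ for $\nabla_\phi$, and $\epsilon$ alone for $\nabla_\beta$), and correctly re-interpret the velocity fields $g_\theta,\,e_\phi,\,f_\beta,\,\tilde{f}_\beta$ as functions of the output coordinate $x$ or $z$ via the defining pushforward (as the notation $g_\theta(G_\theta(z,\epsilon))=\nabla_\theta G_\theta(z,\epsilon)$ already encodes). The remaining technical points — interchanging differentiation and expectation, and the vanishing of the boundary terms that arise in the divergence theorem — are routine under smoothness of the generator, encoder, and prior together with mild decay of the densities at infinity, which is the standing assumption throughout this paper.
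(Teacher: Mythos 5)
Your proof is correct, but it takes a genuinely different route from the paper's. The paper argues in a ``Lagrangian'' fashion: it perturbs the parameter by $\Delta=\delta e_i$, writes the perturbed sample as $Y'=Y+\binom{\tilde f_\beta(X)}{f_\beta(Z)}\Delta+o(\delta)$, applies the change-of-variables formula $p'(y')=p(y)\,|\det(dy'/dy)|^{-1}$, expands the Jacobian determinant to first order in $\delta$, and lets $\delta\to 0$ to read off the pointwise identity directly. You instead derive the same continuity equation in weak form: test against an arbitrary smooth compactly supported $\psi$, differentiate the reparametrized expectation under the integral sign, and integrate by parts, concluding pointwise equality from the arbitrariness of $\psi$. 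Your route avoids the determinant expansion and the implicit first-order invertibility of the perturbed flow map, at the price of needing enough regularity of the densities to integrate by parts and to upgrade the weak identity to a pointwise one (which you correctly flag). Both arguments share the same tacit assumption that the velocity fields $g_\theta$, $e_\phi$, $f_\beta$, $\tilde f_\beta$ are well defined as functions of the output coordinate, so neither is more demanding there. One small remark: your derivation for the third identity yields $-\nabla_x p^\top\tilde f_\beta-\nabla_z p^\top f_\beta-p\,\nabla\cdot(\cdot)$, with a minus sign on the first term; this is the version consistent with the continuity equation, with the paper's own intermediate computation, and with the downstream gradient formula in Lemma \ref{lemma}, so the ``$+$'' on the first term of \eqref{eq:grad_prior} as displayed appears to be a typographical slip rather than a discrepancy in your argument.
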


\begin{proof}[Proof of Lemma~\ref{lemma:lemma}]
We only prove (\ref{eq:grad_prior}) which is the distinct part from \citet{shen2020bidirectional}.

Let $l$ be the dimension of parameter $\beta$.
To simplify notation, let random vector $Z=F_\beta(\epsilon)$ and $X=G(Z)\in\mathbb{R}^d$ and $Y=(X,Z)\in\mathbb{R}^{d+k}$, and let $p$ be the probability density of $Y$. For each $i=1,\dots,l$, let $\Delta=\delta e_i$ where $e_i$ is a $l$-dimensional unit vector whose $i$-th component is one and all the others are zero, and $\delta$ is a small scalar. Let $Z'=F_{\beta+\delta}(\epsilon)$, $X'=G(Z')$ and $Y'=(X',Z')$ so that $Y'$ is a random variable transformed from $Y$ by $$Y'=Y+ \binom{\tilde{f}_\beta(X)}{f_\beta(Z)} \Delta + o(\delta).$$ Let $p'$ be the probability density of $Y'$. For an arbitrary $y'=(x',z')\in\mathbb{R}^{d+k}$, let $y'=y+\binom{\tilde{f}_\beta(x)}{f_\beta(z)}\Delta+o(\delta)$ and $y=(x,z)$. Then we have
\begin{align*}
p'(y')&=p(y)|\det(dy'/dy)|^{-1}\nonumber\\
&=p(y)|\det(I_d+(\nabla\tilde{f}_\beta(x),\nabla f_\beta(z))^\top\Delta+o(\delta))|^{-1}\\
&=p(y)(1+\Delta^\top\nabla\cdot (\tilde{f}_\beta(x),f_\beta(z))^\top+o(\delta))^{-1}\\ 
&=p(y)(1-\Delta^\top\nabla\cdot (\tilde{f}_\beta(x),f_\beta(z))^\top+o(\delta))\\
&=p(y)-\Delta^\top p(y')\nabla\cdot (\tilde{f}_\beta(x'),f_\beta(z'))^\top +o(\delta)\\
&=p(y')-\Delta^\top (\tilde{f}_\beta(x'),f_\beta(z'))\cdot\nabla_{x'} p(x',z) - \Delta^\top p(y') (\nabla\cdot\tilde{f}_\beta(x'),\nabla\cdot f_\beta(z'))^\top +o(\delta).
\end{align*}
Since $y'$ is arbitrary, above implies that
\begin{align*}
p'(x,z)&= p(x,z)-\Delta^\top (\tilde{f}_\beta(x),f_\beta(z))\cdot(\nabla_x p(x,z),\nabla_z p(x,z))^\top\cdot\nabla_x p(x,z)\\
&- \Delta^\top p(x,z)(\nabla\cdot\tilde{f}_\beta(x'),\nabla\cdot f_\beta(z'))^\top + o(\delta)
\end{align*}
for all $x\in\mathbb{R}^d,z\in\mathbb{R}^k$ and $i=1,\dots,l$, leading to (\ref{eq:grad_prior}) by taking $\delta\to0$, and noting that $p=p_\beta$ and $p'=p_{\beta+\Delta}$. Similarly we can obtain (\ref{eq:grad_pgen}) and (\ref{eq:grad_penc}).
\end{proof}

\begin{proof}[Proof of Lemma~\ref{lem:grad_formula}]
Recall the objective $\dkl(q,p)=\int q(x,z)\log(p(x,z)/q(x,z))dxdz$. Denote its integrand by $\ell(q,p)$. Let $\ell'_2(q,p)=\partial\ell(q,p)/\partial p$. We have
\begin{align*}
\nabla_\beta\ell(q(x,z),p(x,z))&=\ell'_2(q(x,z),p(x,z))\nabla_\beta p_{\theta,\beta}(x,z)
\end{align*}
where $\nabla_\beta p_{\theta,\beta}(x,z)$ is computed in Lemma~\ref{lemma:lemma}.

Besides, we have
\begin{align*}
\nabla_x\cdot[\ell'_2(q,p)p(x,z)\tilde f_\beta(x)]&=\ell'_2(q,p)p(x,z)\nabla\cdot\tilde f_\beta(x)\\
&\quad+\ell'_2(q,p)\nabla_x p(x,z)\cdot\tilde f_\beta(x)\\
&\quad+\nabla_x\ell'_2(q,p) p(x,z)\tilde f_\beta(x),\\
\nabla_z\cdot[\ell'_2(q,p)p(x,z) f_\beta(z)]&=\ell'_2(q,p)p(x,z)\nabla\cdot f_\beta(z)\\
&\quad+\ell'_2(q,p)\nabla p(x,z)\cdot f_\beta(z)\\
&\quad+\nabla\ell'_2(q,p) p(x,z) f_\beta(z).
\end{align*}

Thus,
\begin{equation*}
\nabla_\beta L_{\rm{gen}} = \int\nabla_\beta\ell(q(x,z),p(x,z))dxdz=\int p(x,z)[\nabla_x\ell'_2(q,p)\tilde f_\beta(x)+\nabla_z\ell'_2(q,p)f_\beta(z)]
\end{equation*}
where we have $\nabla_x\ell'_2(q,p)=s(x,z)\nabla_x D^*(x,z)$ and $\nabla_x\ell'_2(q,p)=s(x,z)\nabla_z D^*(x,z)$. 

Hence
\begin{align*}
\nabla_\beta L_{\rm{gen}} &= -\bbE_{(x,z)\sim p(x,z)}\left[s(x,z)(\nabla_x D^*(x,z)^\top 
\tilde f_\beta(x)+\nabla_z D^*(x,z)^\top f_\beta(z))\right]\\
&= -\bbE_{\epsilon}\left[s(x,z)(\nabla_x D^*(x,z)^\top 
\nabla_\beta G(F_\beta(\epsilon))+\nabla_z D^*(x,z)^\top \nabla_\beta F_\beta(\epsilon))|^{x=G(F_\beta(\epsilon))}_{z=F_\beta(\epsilon)}\right].
\end{align*}
where the second equality follows from reparametrization.
\end{proof}

\section{\revise{Causal disentanglement and downstream tasks}}\label{app:disen_metric}

In the main text, we first demonstrate the good performance of DEAR in causal disentanglement through causal controllable generation in Section~\ref{sec:contgen}, and then show the advantages of the DEAR representations in downstream tasks in terms of sample efficiency (Section~\ref{sec:eff}) and distributional robustness (Section~\ref{sec:dr}). In comparison with previous methods, majorly the VAE-based disentanglement methods, we adopt the same network architectures for the encoder and decoder, and use the same amount of annotated labels. In addition, for GraphVAE, we also assume the same prior information on the graph structure as DEAR. Therefore, we conclude that the superior performance of DEAR is due to better modeling.  
To further justify whether such advantages come from the disentanglement of the learned representations, in this section, we propose a metric for causal disentanglement based on the FactorVAE metric, and investigate the correlation between the disentanglement metric and the metrics for downstream tasks. 

\subsection{Metric for causal disentanglement}

Many existing disentanglement papers also propose their metrics for disentanglement, including the $\beta$-VAE metric~\citep{Higgins2017betaVAELB}, the FactorVAE metric~\citep{Kim2018DisentanglingBF}, the Mutual Information Gap (MIG)~\citep{Chen2018IsolatingSO}, the Separated Attribute Predictability (SAP) score~\citep{Kumar2017VariationalIO}, etc. We refer the reader to \citet{Locatelloetal19} for a comprehensive introduction and discussion on these metrics. 

However, all of these metrics only apply to the case where the ground-truth generative factors are mutually independent and do not apply when the factors are correlated. For example, the MIG score measures for each factor the normalized gap in mutual information with the highest and second highest coordinate in $E(x)$. Suppose a factor $\xi_1$ is correlated with $\xi_2$ and a disentangled representation $E(x)$ so that there exists 1-1 functions $g_1$ and $g_2$ such that $E_1(x)=g_1(\xi_1)$ and $E_2(x)=g_2(\xi_2)$. Then the mutual information of $\xi_1$ with (supposedly the highest coordinate) $E_1(x)$ and (supposedly the second highest coordinate) $E_2(x)$ will both be large, and then their difference will be small. As such, a disentangled representation in this case will not correspond to a large MIG score as expected. 

To this end, we propose a metric for causal disentanglement (i.e., disentanglement of causally related ground-truth factors) based on the FactorVAE metric. Suppose there are $m$ generative factors of interest $\xi_1,\dots,\xi_m$ which are causally related following the true SCM $\mathfrak{C}$ which is available. The procedure to compute the metric is presented in Algorithm~\ref{alg:metric}. 
These steps largely follow those of the FactorVAE metric with the distinct parts tailored for causal disentanglement which we explain below the algorithm. 


\vskip 0.1in
\begin{algorithm}[H]
\DontPrintSemicolon
\KwInput{Encoder $E$, meta-parameters $M,N$}
\For{$k=1,\dots,m$}{
\For{$i=1,\dots,M$}{
Fix $\xi_k$ to a randomly sampled value.\\
Randomly sample other factors $\xi_{-k}$ from $\mathfrak{C}$ conditioning on $\xi_k$ for $N$ times. \\
Generate data with the $N$ factors.\\
Obtain their representations using the learned encoder. \\
Normalize each dimension by its empirical standard deviation over the full data (or a large enough random subset).\\
Compute the empirical variance in each dimension of these normalized representations.\\
Take the index of the dimension with the lowest variance.\\
If the index matches $k$, it counts as a correct sample. 
}
Let $C_k$ be the total number of correct samples among the $M$ samples.\\
}
Obtain score $S=\sum_{k=1}^m C_k/(KM)$.\\
\KwReturn{$S$}
\caption{Metric for causal disentanglement}
\label{alg:metric}
\end{algorithm}

\begin{itemize}
\setlength{\itemsep}{2pt}
\setlength{\parskip}{2pt}
\item Line 4: FactorVAE metric samples all factors independently from Uniform distributions, which does not match (and can be far away) from the true distribution of the causal factors. Instead, we sample the factors following the true SCM and hence respect the data distribution. 
\item Lines 10-12: FactorVAE metric uses the error rate of the majority-vote classifier as the metric, because in an unsupervised setting, one does not know which factor each representation captures. In contrast, the weakly-supervised setting can guarantee the alignment between each representation and a particular factor. Thus, we do not need the majority-vote classifier to identify this correspondence. Instead, we directly check whether the dimension with the lowest empirical variance matches the given index $k$.
\end{itemize}

As we notice, this metric is limited in that it not only requires the ground-truth factors of data for sufficient coverage of the data distribution as previous metrics do, but also requires the ground-truth SCM, which only happens in synthetic data. Nevertheless, in this work, we only use such a metric to provide evaluations and justification on the relationship between causal disentanglement and performance in downstream tasks. We leave a widely-applied quantitive metric for causal disentanglement to future work. 

\subsection{Experimental results}

Figure~\ref{fig:disen_down} shows the scatter plots of the metrics that we considered in downstream tasks (Section~\ref{sec:downstream}) and the metric for causal disentanglement (with $M=200$ and $N=50$). Each metric is used to evaluate seven disentanglement models, including S-$\beta$-VAE, S-TCVAE, S-GraphVAE, and multiple DEAR-LIN models with $\lambda=0.1,1,5,10$. All models are trained using fully supervised labels and GraphVAE and DEAR are given the true graph structure. The network architectures for the encoders and decoders are all the same. We observe a positive correlation between causal disentanglement and performance in downstream tasks, which indicates that the learned representations with a higher disentanglement score tend to perform better in terms of sample efficiency and distributional robustness in downstream tasks. In particular, we notice that the small sample accuracy and worst-case accuracy benefit the most from better causal disentanglement for the corresponding fitted lines have the largest scope. 

\begin{figure}[h]
\centering
\subfigure[Sample efficiency]{
\includegraphics[width=0.48\textwidth]{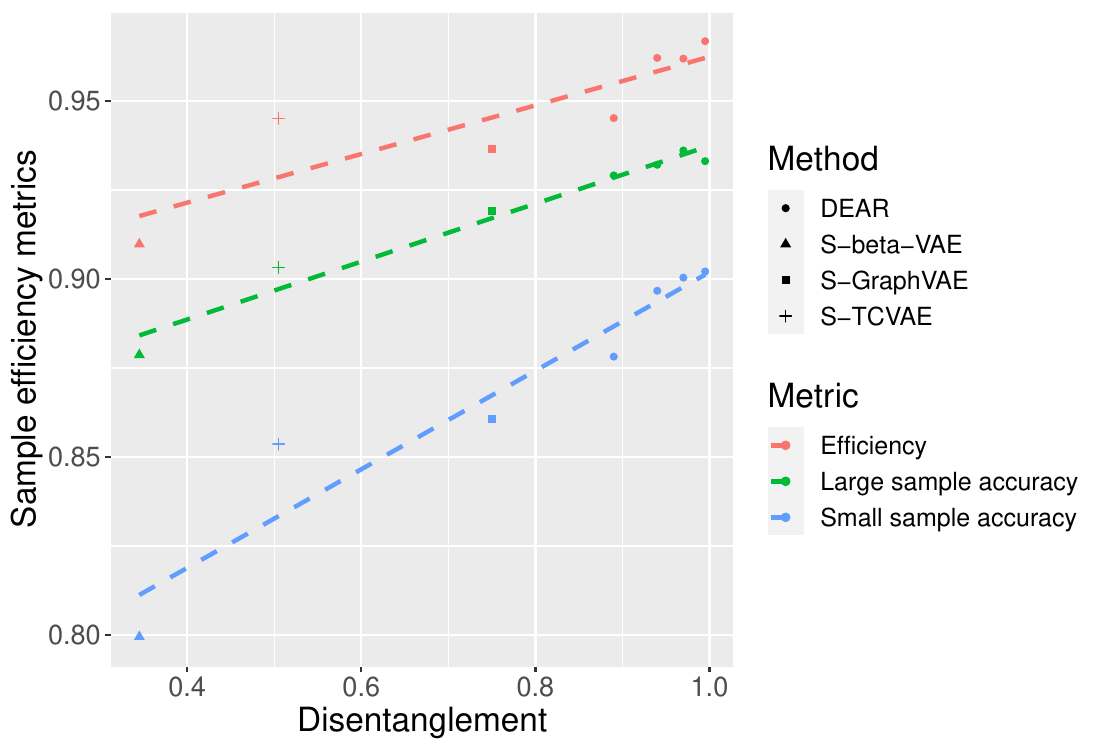}}
\subfigure[Distributional robustness]{
\includegraphics[width=0.48\textwidth]{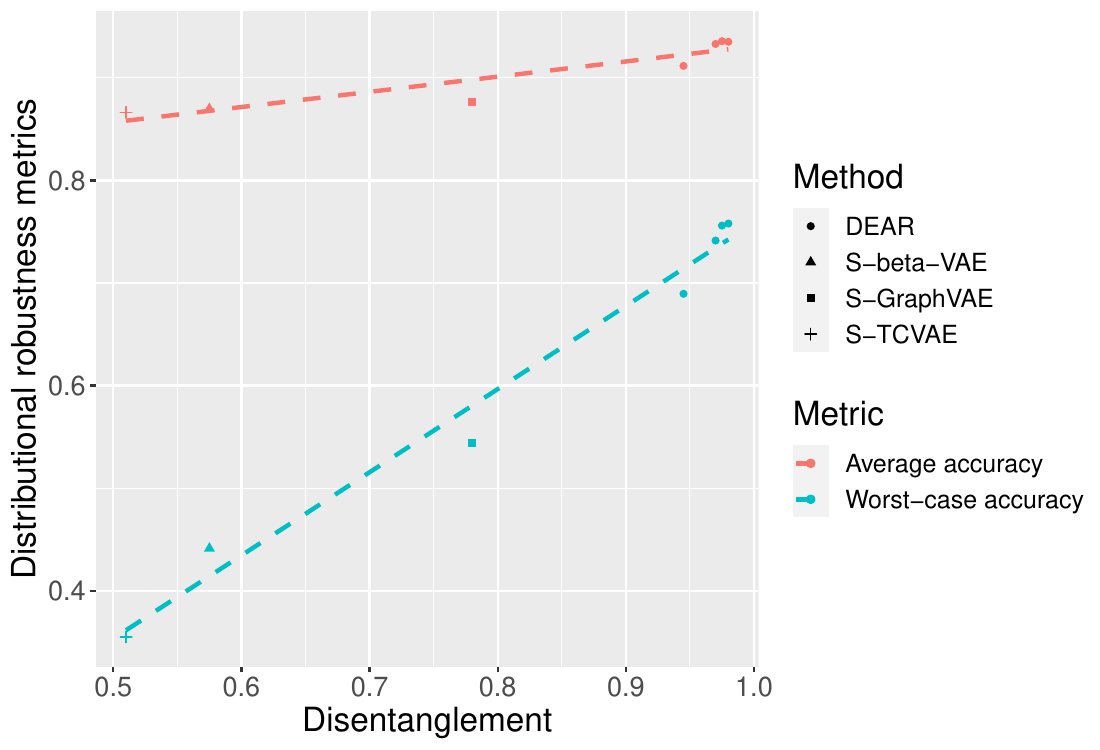}}
\caption{\small Relationship between causal disentanglement and performance in downstream tasks.}
\label{fig:disen_down}
\end{figure}

%

\section{\rr{Discussion: supervision for disentanglement learning}}\label{app:sup}

We comment on the two forms of supervision that may be available and commonly considered in literature for the task of disentangled representation learning.
\begin{itemize}
\item Form 1 (direct and few labels): in some scenarios, we may have some conceptual knowledge about the data in the sense that we know the concepts of the underlying generative factors of data, especially those concepts that we are interested in. In such cases, a weakly supervised setting is feasible where only a few samples have annotated labels of the factors, since at least manual labeling of a few examples is practical. A representative work uses this form of supervision is \citet{locatello2019disentangling}. 
\item Form 2 (auxiliary information of a full sample): in some other scenarios, we have no prior knowledge on what the ground-truth concepts are and thus cannot get the direct annotated labels of them. Auxiliary information is then needed for all samples with some assumptions on the variability of such side information as well as its correlation with the true generative factors. A representative work along this line is \citet{khemakhem2020variational}. 
\end{itemize}

Both settings have some real applications and limitations which make them complementary. 
On one hand, Form 2 in general tends to require ``weaker" supervision than Form 1 in the sense that it does not require direct annotations of the true factors themselves.  Thus, efforts towards general provable disentanglement should be put in studying along Form 2. 
However, in fact, the auxiliary observed variables in Form 2 also require certain knowledge on the true factors in order to verify the mathematical assumptions required in identifiability, e.g. the variability condition in \citet{khemakhem2020variational}. Intuitively, the auxiliary variables which can guarantee disentanglement should have enough variability and correlation with the true factors. In addition, current identifiability theory with Form 2 still assumes relatively strong and limited structure assumption on the true factors, e.g., conditional independence in \citet{khemakhem2020variational}.

On the other hand, current research on disentanglement mostly focuses on the scenarios where we indeed have some conceptual knowledge on the true factors, which makes Form 1 at least a feasible and practical setting. For simple structures of true factors (e.g., independence or conditional independence, as assumed in most previous work), existing methods with Form 1 can achieve disentanglement, which is much more straightforward compared to provable disentanglement with supervision of Form 2. However, for more complex structures (e.g., a causal graph, as considered in our paper), existing methods using independent or conditionally independent priors generally cannot identify disentanglement even with supervision in Form 1, as shown in our Proposition 4. In particular, existing formulations (e.g., \citet{locatello2019disentangling}) in general cannot even reach the optimum of the supervised loss, so they cannot disentangle. To this end, our paper proposes a bidirectional generative model with an SCM prior trained using a GAN-type algorithm, which resolves this problem under the clearly stated setup and assumptions.

\section{\revise{Discussion: generalization to unseen interventions}}\label{app:unseen}

We recall that DEAR is trained on observational data, that is, the training data is IID sampled from the data distribution $q_x$ and the latent variables follow IID a joint distribution $p_z$, e.g., induced by a SCM, without a mixture with interventional distributions. When the generative model is perfectly learned, we have $q_x(x)=\int p_{G^*}(x|z)p_z(z)dz$.
Then an interesting question would be how our method generalizes to unseen interventions. Specifically let $p_z^I(z)$ be an interventional distribution. The consequent data distribution $q_x^I(x)=\int p_{G^*}(x|z)p_z^I(z)dz$ does not match the observational distribution $q_x$ and model trained on an IID sample from $q_x$ have not seen $q_x^I$.

Now we give some insights on how given the true graph structure, DEAR trained on observational data can sample from an interventional distributions $q_x^I$. We start with the general definition of SCM.
A structural causal model (SCM) over variables $Z_i,i=1\dots,m$ can be generally expressed as
\begin{equation}\label{eq:scm}
	Z_i=f_i(Pa(Z_i;A),\epsilon_i),\ i=1,\dots,m,
\end{equation}
where $A$ denotes the adjacency matrix, $Pa(Z_i;A)$ denotes the set of parents of node $Z_i$, and $\epsilon_i$ is the exogenous noise. 
Learning of an SCM consists of structure learning of $A$ and parameter estimation of all the assignments $f_i, i=1,\dots,m,$ in the SCM, i.e., how each node is generated given its parents and exogenous noise. 
When given the underlying causal structure, standard parameter estimation methods like maximum likelihood estimation can yield a consistent estimator of the true SCM assignments from the observational data: 
\begin{equation}\label{eq:scm_est}
	Z_i=\hat{f}_i(Pa(Z_i;A),\epsilon_i),\ i=1,\dots,m.
\end{equation}

Note that an intervention can be defined as operations that modify a subset of assignments in \eqref{eq:scm}, e.g., changing $\epsilon_i$, or setting $f_i$ (and thus $Z_i$) to a constant~\citep{pearl2000models,scholkopf2019causality}. Therefore, with the estimated SCM \eqref{eq:scm_est} at hand, we can sample from any interventional distributions. 

We illustrate this through some experimental results shown in Figure~\ref{fig:counterfactual}. In (a), we intervene on the two factors \textit{bald} and \textit{gender}. In each line, we keep \textit{gender} = female and gradually increase the probability of them being bald. Particularly in the red box, we obtain images of bald female faces which have never been seen from the observational data. In (b), we intervene on \textit{beard} and \textit{gender} to generate images of female with beard which are shown in the red box. In (c), we show some generated samples that gradually wear (sun)glasses, while in the training data, there are only images with or without glasses but no intermediate states. In (d), we intervene on all four factors. In each line, the image in the middle follows the true SCM (described later in Appendix~\ref{app:exp_detail}) so that the factors satisfy the projection law. Then we change the value of only one factor while keeping others fixed, which leads to samples not satisfying the projection law. 
In summary, we see that although these interventions are not appearing in the observational data, DEAR is able to generate samples from such interventional distributions, suggesting its generalizability to unseen interventions.

\begin{figure}
\centering
\subfigure[Bald female]{\begin{tabular}{c}
	\includegraphics[width=0.45\textwidth]{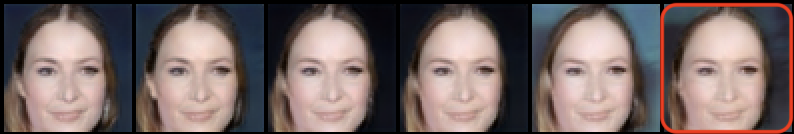}\\
	\includegraphics[width=0.45\textwidth]{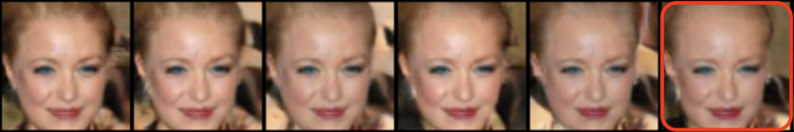}\\
	\includegraphics[width=0.45\textwidth]{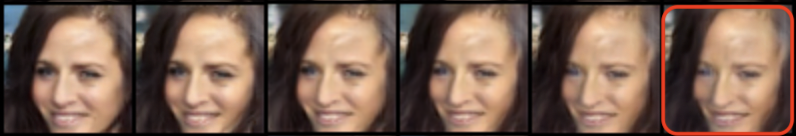}
\end{tabular}}
\subfigure[Female with beard]{\begin{tabular}{c}
	\includegraphics[width=0.45\textwidth]{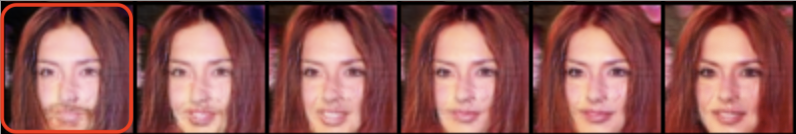}\\
	\includegraphics[width=0.45\textwidth]{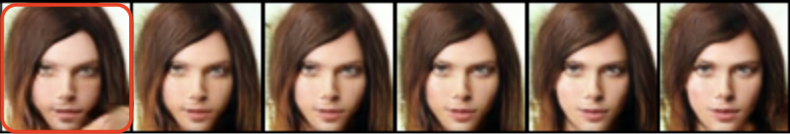}\\
	\includegraphics[width=0.45\textwidth]{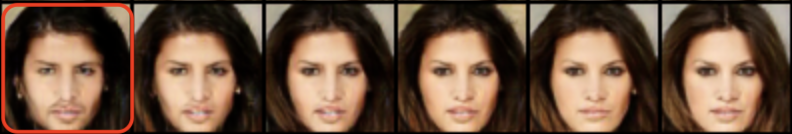}
\end{tabular}}
\subfigure[\textit{glasses}: gradually wearing (sun)glasses]{
\begin{tabular}{c}
	\includegraphics[width=0.45\textwidth]{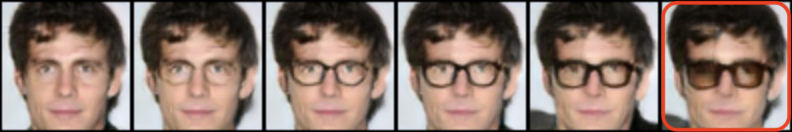}\\
	\includegraphics[width=0.45\textwidth]{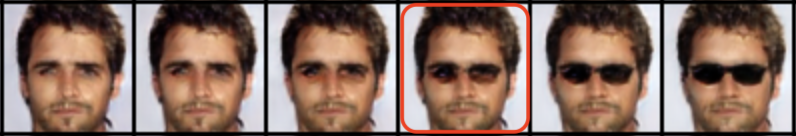}\\
	\includegraphics[width=0.45\textwidth]{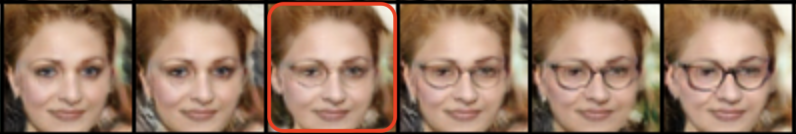}
\end{tabular}}
\subfigure[Images not following the projection law]{\begin{tabular}{c}
\includegraphics[width=0.45\textwidth]{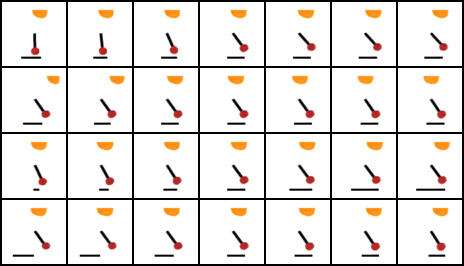}
\end{tabular}}
\caption{Samples from unseen interventional distributions.}
\label{fig:counterfactual}
\end{figure}

More systematic analysis on the out-of-distribution generalizability of the encoder is to be explored in future work. One potential direction is to utilize the generalizability of the generator to unseen interventions to improve the OOD performance of the encoder. Along this direction, for example, \citet{sauer2021counterfactual} recently combined disentangled generative models and out-of-distribution classification, but adopted a different disentanglement framework. 

%

\section{\revise{Experiments in the independent case}}\label{app:ind_benchmark}

In this section, we test our method on benchmark data sets where the ground-truth generative factors are independent, which is a spacial case of the causal case with no edge in the graph structure. 
\citet{gondal2019transfer} proposed a real-world benchmark data set, MPI3D-real, which consists of over one million images of physical 3D objects with seven independent factors of variation such as object color, shape, size and position. They also provided two simulation data sets. We test a simplified version of DEAR with an independent prior (a standard Gaussian) on real data MPI3D-real and simulated data MPI3D-simu (MPI3D-realistic in \citet{gondal2019transfer}). Both data sets consist of 1,036,800 images with resolution $64\times64\times3$. We assume 0.01\% of the data (around 100 samples) have annotated labels. No prior information on the graph structure is needed since we directly use an independent prior for the latent variable instead of an SCM. 

We are interested in the disentanglement performance on both real and simulated data, as well as the transferability of the representations from simulation to the real-world or reverse. As shown in the experiments by \citet{gondal2019transfer}, most existing VAE-based methods perform similarly in disentanglement and all the metric for disentanglement also gives similar results. Hence, we consider weakly-supervised TCVAE as a representative of the baseline methods and consider FactorVAE metric to measure disentanglement. As we have mentioned, the weakly-supervised setting can guarantee the alignment between each representation and a particular factor. Therefore, when computing the FactorVAE metric, we skip the majority-vote classifier and directly apply lines 10-12 in Algorithm~\ref{alg:metric} to obtain the score. 

As shown in Table~\ref{tab:mpi}, DEAR always significantly outperforms TCVAE in the disentanglement score and is particularly superior when training and testing on the same data set. In the transfer setting where we apply the encoder trained on one data set to another data set, both methods suffer from a performance decline. This is consistent with the discovery in \citet{gondal2019transfer} who found that direct transfer of learned representations from simulated to real data seems to work rather poorly. 
To sum up, this section suggests that DEAR can achieve state-of-the-art performance in data whose underlying factors are independent, though it is developed to handle the causal case. 

\begin{table}[h]
\centering
\begin{tabular}{cccc}
\toprule
Train & Test & Method & Disentangle\\\midrule
\multirow{2}{*}{MPI3D-simu} & \multirow{2}{*}{MPI3D-simu} & DEAR & 0.9543 \\
&& TCVAE & 0.5800 \\\midrule 
\multirow{2}{*}{MPI3D-real} & \multirow{2}{*}{MPI3D-real} & DEAR & 0.9579 \\
&& TCVAE & 0.5793 \\\midrule
\multirow{2}{*}{MPI3D-simu} & \multirow{2}{*}{MPI3D-real} & DEAR & 0.4879 \\
&& TCVAE & 0.3614 \\\midrule
\multirow{2}{*}{MPI3D-real} & \multirow{2}{*}{MPI3D-simu} & DEAR & 0.5571 \\
&& TCVAE & 0.3443 \\
\bottomrule
\end{tabular}
\caption{Results on MPI3D data.}\label{tab:mpi}
\end{table}

\section{Implementation details}\label{app:exp_detail}

In this section, we provide the details of the experimental setup and the network architectures used for all experiments, followed by a description of the synthesized Pendulum data set.

\medskip
\noindent\textbf{Preprocessing and hyperparameters.} \ 
We pre-process the images by taking center crops of $128\times128$ for CelebA and resizing all images in CelebA and Pendulum to the $64\times64$ resolution. We adopt Adam with $\beta_1=0$, $\beta_2=0.999$, and a learning rate of $1\times10^{-4}$ for $D$, $5\times10^{-5}$ for $E$, $G$ and $F$, and $1\times10^{-3}$ for the weighted adjacency matrix $A$. We use a mini-batch size of 128. For adversarial training in Algorithm~\ref{algo}, we train the $D$ once on each mini-batch. The coefficient $\lambda$ of the supervised regularizer is set to 5 unless indicated otherwise. We use CE supervised loss for both CelebA with binary observations of the underlying factors and Pendulum with bounded continuous observations. Note that $L_2$ loss works comparable to CE loss on Pendulum. The results of DEAR and baseline methods in controllable generation presented in Section~\ref{sec:contgen} and Appendix~\ref{app:more} use full supervision of underlying generative factors, i.e., $N_s=N$, since the qualitative results with 10\% labels have no big difference.

In downstream tasks, for BGMs with an encoder, we train a two-level MLP classifier with 100 hidden nodes using Adam with a learning rate of $1\times10^{-2}$ and a mini-batch size of 128. Models were trained for around 150 epochs on CelebA, 600 epochs on Pendulum, and 50 epochs on MPI3D on NVIDIA RTX 2080 Ti.

\medskip
\noindent\textbf{\revise{Description of the Pendulum data set.}} \ 
In Figure~\ref{fig:pend_detail}, we illustrate the generative factors of the synthesized Pendulum data set, following \citet{yang2020causalvae}. Given the $\PA (\xi_1)$ and $\LA (\xi_2)$, following the projection law, one can determine the $\SL(\xi_3)$ and $\SP(\xi_4)$. Note that we consider the parallel light in our simulator. Specifically, define some constants: $c_x=10,c_y=10.5$ are the axis's of the center (pendulum origin); $l_p=9.5$ be the pendulum length (including the red ball); the bottom line of a single plot corresponds to $y=b$ with base $b=-0.5$. Then the ground-truth structural causal model is expressed as follows. 
\begin{align*}
	\xi_1&\sim\cU(\pi/4,\pi/2)\\
	\xi_2&\sim\cU(0,\pi/4)\\
	\xi_3&=\left(c_x+l_p\sin\xi_1-\tfrac{c_y-l_p\cos\xi_1-b}{\tan\xi_2}\right)-\left(c_x-\tfrac{c_y-b}{\tan\xi_2}\right)\\
	\xi_4&=\left(c_x+l_p\sin\xi_1-\tfrac{c_y-l_p\cos\xi_1-b}{\tan\xi_2}+c_x-\tfrac{c_y-b}{\tan\xi_2}\right)/2.
\end{align*}
where $\cU(a,b)$ denotes the uniform distribution on interval $(a,b)$.

\begin{figure}
\centering
\includegraphics[width=0.35\textwidth]{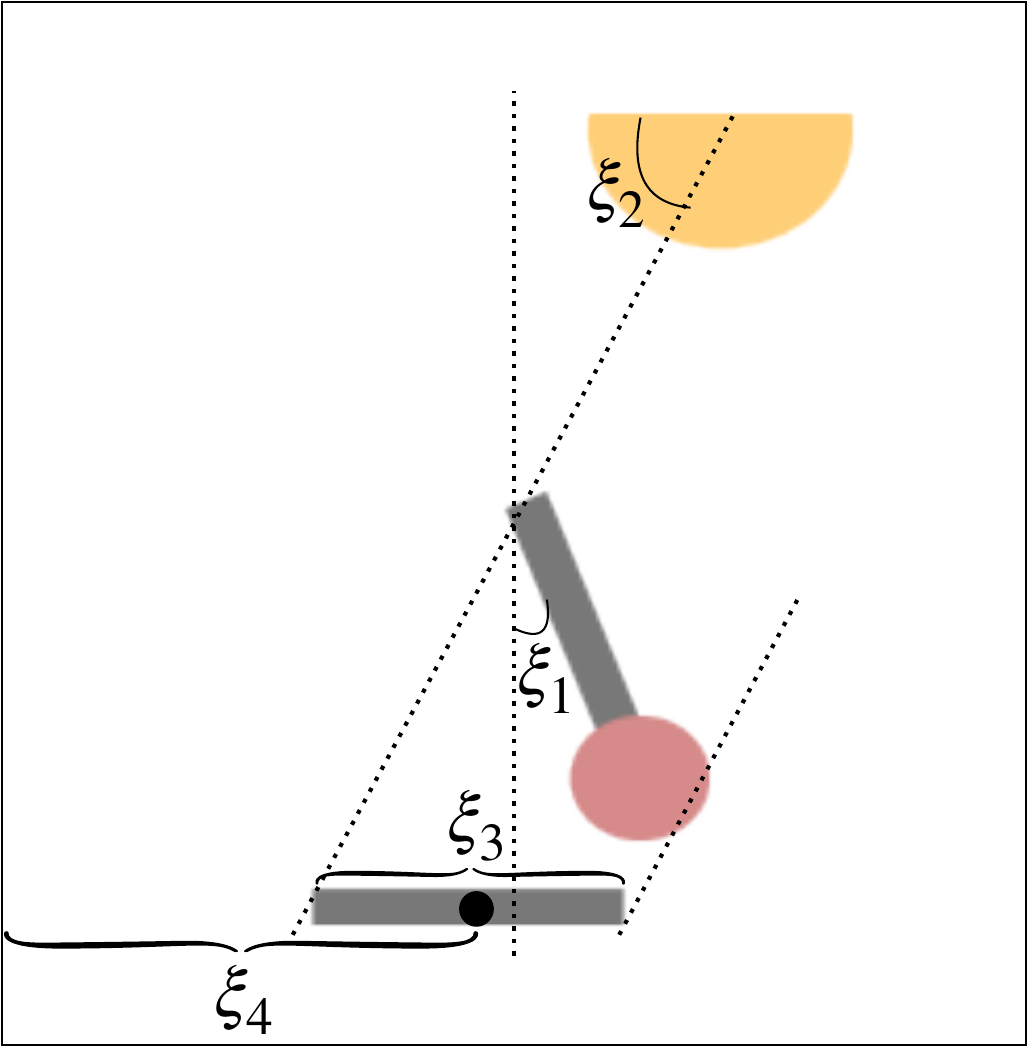}
\caption{Generative factors of the Pendulum data set. $\xi_1$: \textit{pendulum\_angle}, $\xi_2$: \textit{light\_angle}, $\xi_3$: \textit{shadow\_length}, $\xi_4$: \textit{shadow\_position}.}\label{fig:pend_detail}
\end{figure}

\medskip
\noindent\textbf{Implementation of the SCM.} \ 
Recall the nonlinear SCM as the prior $$z=f((I-A^\top)^{-1}h(\epsilon)):=F_\beta(\epsilon).
$$
We find Gaussians are expressive enough as unexplained noises, so we set $h$ as the identity mapping. As mentioned in Section~\ref{sec:gen_causal} we require the invertibility of $f$. We implement both linear and nonlinear ones. For a linear $f$, we formally refer to
$
f(z) = Wz + b,
$
where $W$ and $b$ are learnable weights and biases. Note that $W$ is a diagonal matrix to model the element-wise transformation. Its inverse function can be easily computed by 
$
f^{-1}(z) = W^{-1}(z - b).
$

For a non-linear $f$, we use piece-wise linear functions defined by
$$
[f([z]_{i})]_i = [w_0]_i[z]_i +\sum_{t=1}^{N_a} [w_t]_i([z]_i-a_i) \mathbf{I}([z]_i \geq a_i) + [b]_i
$$
where $a_0<a_1<\cdots<a_{N_a}$ are the points of division, $\mathbf{I}(\cdot)$ is the indicator function, and $\{b,w_t:t=0,\dots,N_a\}$ is the set of learnable parameters. According to the denseness of piecewise linear functions in $C[0,1]$~\citep{shekhtman1982piecewise}, the family of  such piece-wise linear functions is expressive enough to model general element-wise non-linear invertible transformations.

\medskip
\noindent\textbf{Network architectures.} \ 
We follow the architectures used in \citet{shen2020bidirectional}. Specifically, for such realistic data,  we adopt the SAGAN~\citep{zhang2019self} architecture for $D$ and $G$. The $D$ network consists of three modules as shown in Figure~\ref{fig:arch_sagan}(a) and detailed described in \citet{shen2020bidirectional}. Architectures for network $G$ and $D_x$ are given in Figure~\ref{fig:arch_sagan}(b-c) and Table~\ref{tab:sagan}.
The encoder architecture is the ResNet50~\citep{he2016deep} followed by a 4-layer MLP of size 1024 after ResNet's global average pooling layer. 

\begin{figure}[h]
\centering
\subfigure[]{
\includegraphics[width=0.6\textwidth]{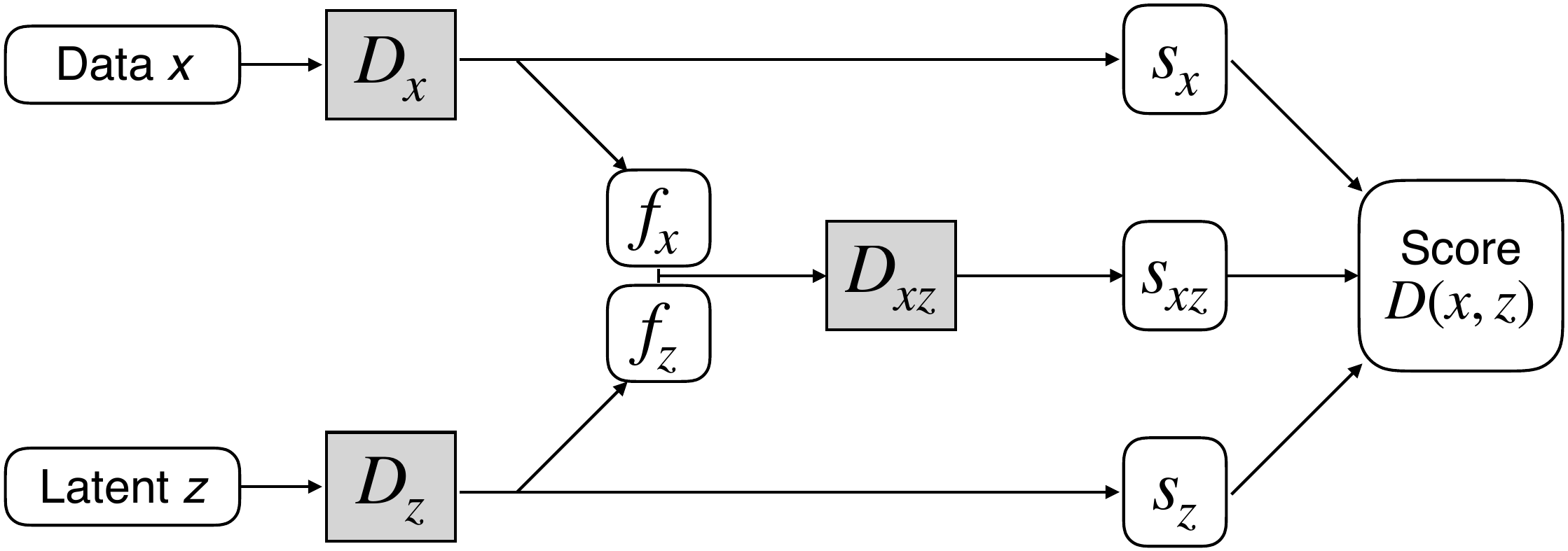}}\hspace{.1in}\\
\subfigure[]{
\includegraphics[width=0.23\textwidth]{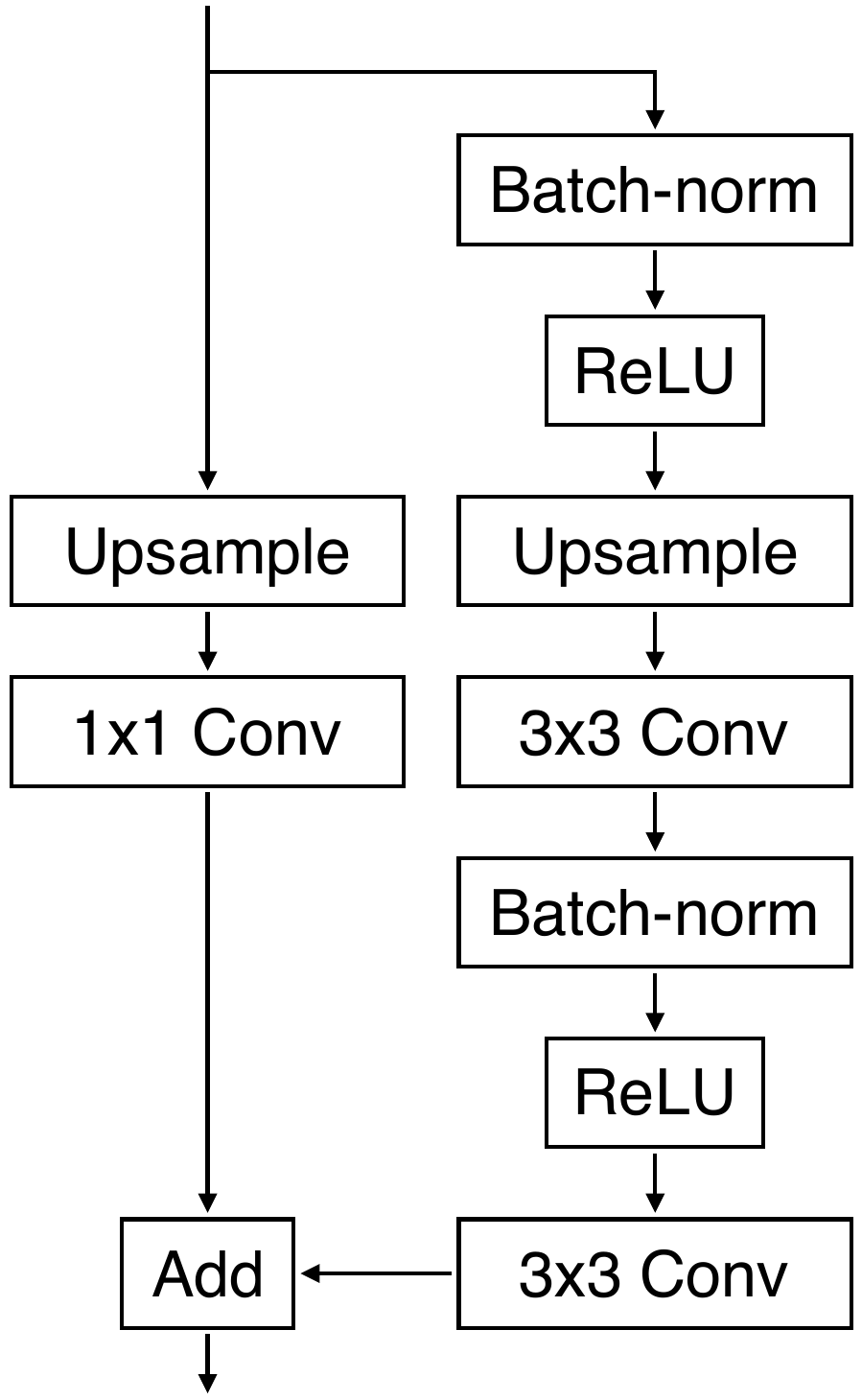}}\hspace{.5in}
\subfigure[]{
\includegraphics[width=0.23\textwidth]{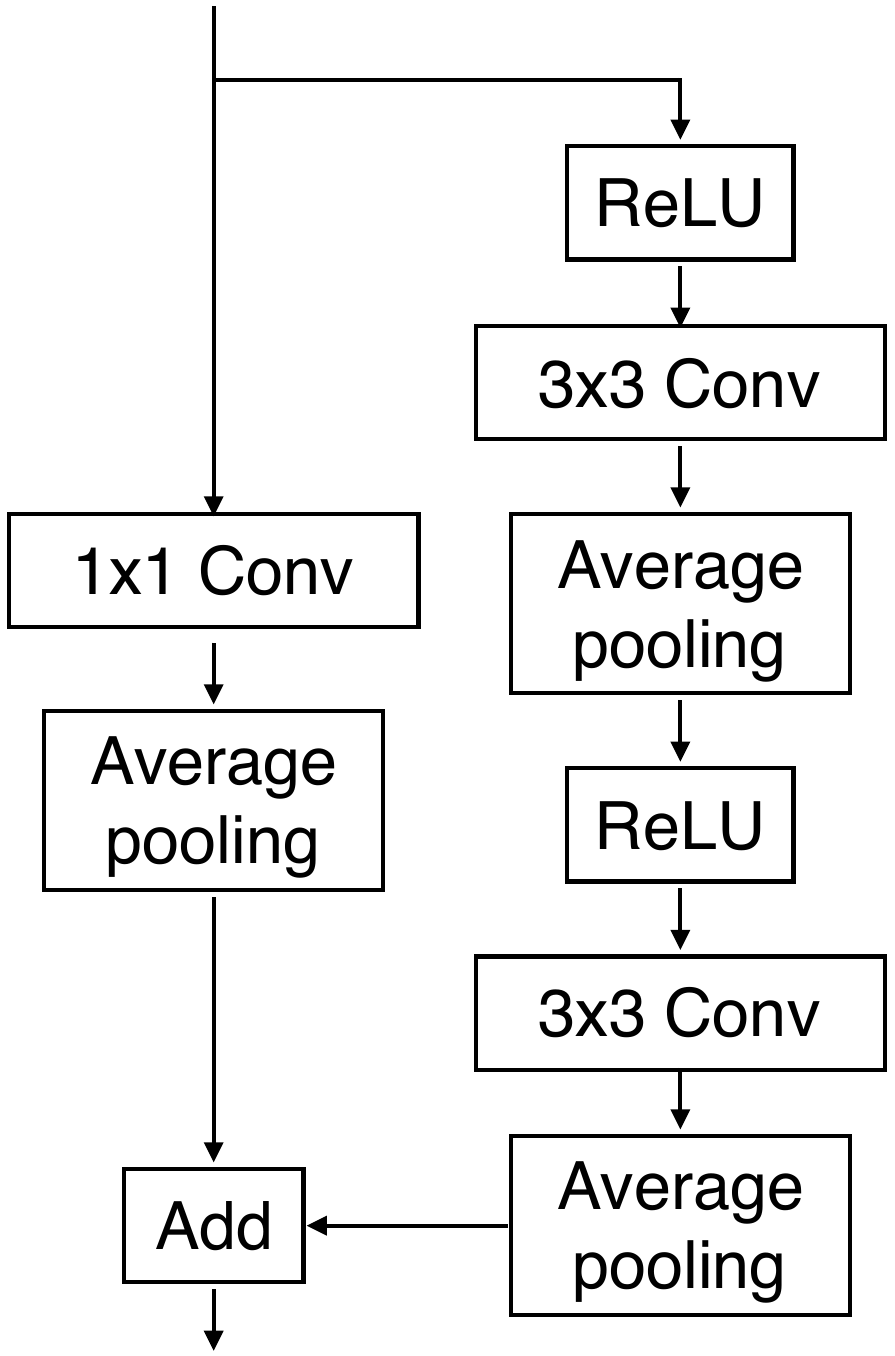}}
\caption{(a) Architecture of the discriminator $D(x,z)$; (b) A residual block (up scale) in the SAGAN generator where we use nearest neighbor interpolation for \emph{Upsampling}; (c) A residual block (down scale) in the SAGAN discriminator.}
\label{fig:arch_sagan}
\end{figure}

\begin{table}[h]
\centering\small
\caption{SAGAN architecture ($k=100$ for CelebA and $k=6$ for Pendulum and $ch=32$).}
\subtable[Generator]{
\begin{tabular}{c}
\toprule
Input: $z\in\mathbb{R}^k\sim p_z$\\\midrule
Linear $\to4\times4\times16ch$\\\midrule
ResBlock up $16ch\to16ch$\\\midrule
ResBlock up $16ch\to8ch$\\\midrule
ResBlock up $8ch\to4ch$\\\midrule
Non-Local Block $(64\times64)$ \\\midrule
ResBlock up $4ch\to2ch$\\\midrule
BN, ReLU, $3\times3$ Conv $2ch\to3$\\\midrule
Tanh\\\bottomrule
\end{tabular}
}
\hspace{1cm}
\subtable[Discriminator module $D_x$]{
\begin{tabular}{c}
\toprule
Input: RGB image $x\in\mathbb{R}^{64\times64\times3}$\\\midrule
ResBlock down $ch\to2ch$\\\midrule
Non-Local Block $(64\times64)$ \\\midrule
ResBlock down $2ch\to4ch$\\\midrule
ResBlock down $4ch\to8ch$\\\midrule
ResBlock down $8ch\to16ch$\\\midrule
ResBlock $16ch\to16ch$\\\midrule
ReLU, Global average pooling ($f_x$)\\\midrule
Linear $\to1$ ($s_x$)\\\bottomrule
\end{tabular}
}
\label{tab:sagan}
\end{table}

\medskip
\noindent\textbf{Experimental details for baseline methods.} \ 
We reproduce the S-VAEs including S-VAE, S-$\beta$-VAE and S-TCVAE using $E$ and $G$ with the same architectures as DEAR's and adopt the same optimization algorithm with same hyperparameters for training. The coefficient for the independence regularizer is set to 4 since we notice that setting a larger independence regularizer hurts disentanglement in the correlated case. We implement GraphVAE by ourselves using the same architectures (for the encoder and decoder) and optimizer as DEAR. 
The latent dependencies of GraphVAE consists of
a bottom-up network (approximate $z|x$):
\begin{center}
{\textsf{
    nn.Linear(latent\_dim, 32),
    nn.BatchNorm1d(32),
    nn.ELU(),\\
    nn.Linear(32, node\_dim),
    nn.Linear(node\_dim, 2*node\_dim)}}
      \end{center}   
and a top-down network  (approximate $z|\text{parent}$):
\begin{center}
{ \textsf{   nn.Linear(n\_parent\_nodes *node\_dim, 32), 
    nn.BatchNorm1d(32),
    nn.ELU(),\\
    nn.Linear(32, node\_dim),
    nn.Linear(node\_dim, 2*node\_dim)}}.
  \end{center}              
Note that this implementation follows the original one: $z|x, \text{parent}$ is obtained by
precision-weighted fusion in \citet{he2018graphvae}.
Since our factor dependency are explicit, we use 32 latent dimension for more efficient optimization.

For the supervised regularizer, we use $\lambda=1000$ for a balance of generative modeling and supervised regularizer. The ERM ResNet is trained using the same optimizer with a learning rate of $1\times10^{-4}$. We run the public source code from \url{https://github.com/mkocaoglu/CausalGAN} to produce the results of CausalGAN.

\section{Additional results in causal controllable generation}\label{app:more}
In this section, we present more qualitative results in causal controllable generation on two data sets using DEAR and baseline methods, including S-VAEs~\citep{locatello2019disentangling}, GraphVAE~\citep{he2018graphvae}, and CausalGAN~\citep{kocaoglu2017causalgan}.
We consider three underlying structures on two data sets: Pendulum in Figure~\ref{fig:causal_structure}(a), CelebA-Smile in Figure~\ref{fig:causal_structure}(b), and CelebA-Attractive in Figure~\ref{fig:causal_structure}(c). Note that the ordering of the rows in the traversals below matches the indices in Figure~\ref{fig:causal_structure}.

\begin{figure}[H]
\centering
\subfigure[Traversal (CelebA-Smile)]{
\includegraphics[width=0.4\textwidth]{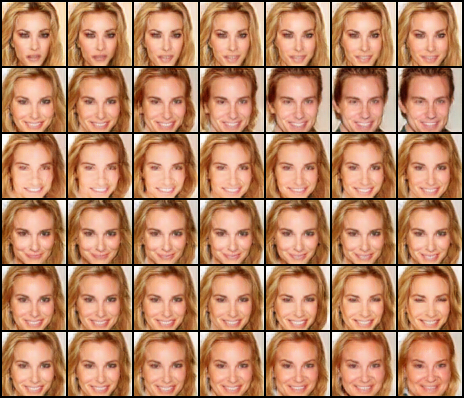}}
\subfigure[Intervention (CelebA-Smile)]{
\includegraphics[width=0.4\textwidth]{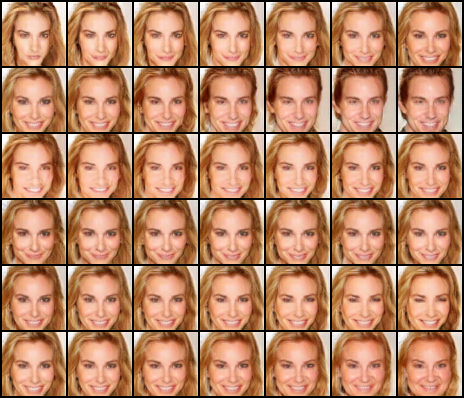}}

\subfigure[Traversal (CelebA-Attractive)]{
\includegraphics[width=0.4\textwidth]{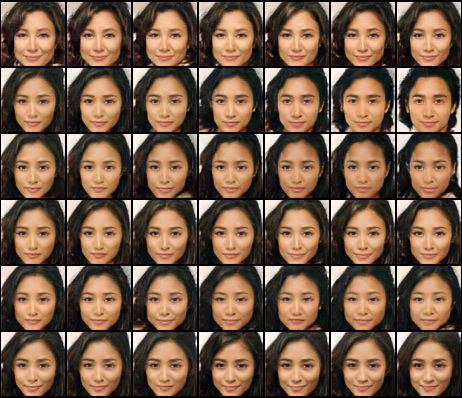}}
\subfigure[Intervention (CelebA-Attractive)]{
\includegraphics[width=0.4\textwidth]{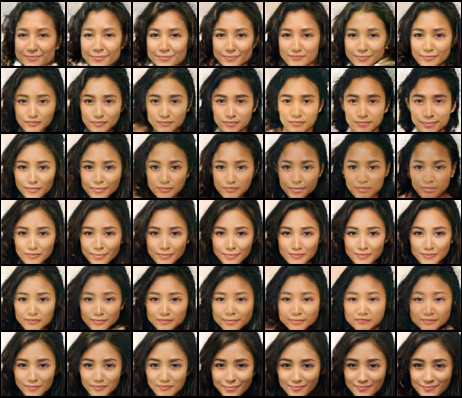}}

\subfigure[Traversal (Pendulum)]{
\includegraphics[width=0.4\textwidth]{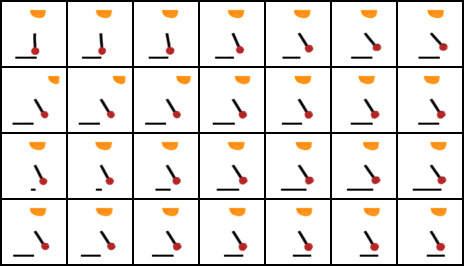}}
\subfigure[Intervention (Pendulum)]{
\includegraphics[width=0.4\textwidth]{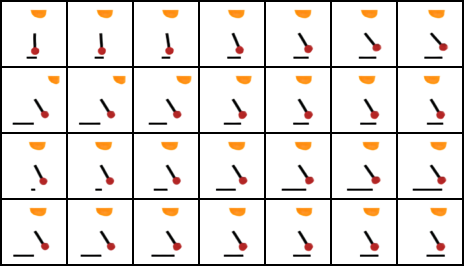}}
\caption{\small Results of DEAR. On the left we present the traditional latent traversals (the first type of intervention stated in Section~\ref{sec:contgen}) which show the disentanglement. On the right we show the results of intervening on one latent variable from which we see the consequent changes of the others (the second type of intervention). Specifically intervening on the cause variable influences the effect variables while intervening on effect variables makes no difference to the causes.}
\end{figure}

\begin{figure}[H]
\centering
\subfigure[S-TCVAE (CelebA-Smile)]{
\includegraphics[width=0.4\textwidth]{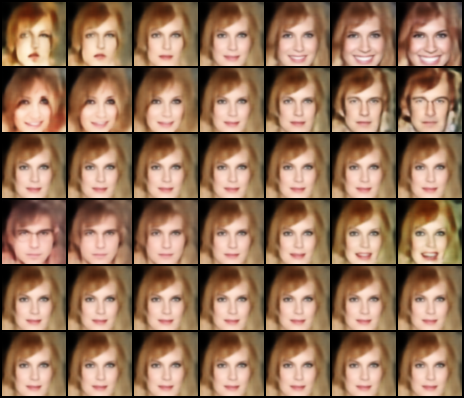}}
\subfigure[S-TCVAE (CelebA-Attractive)]{
\includegraphics[width=0.4\textwidth]{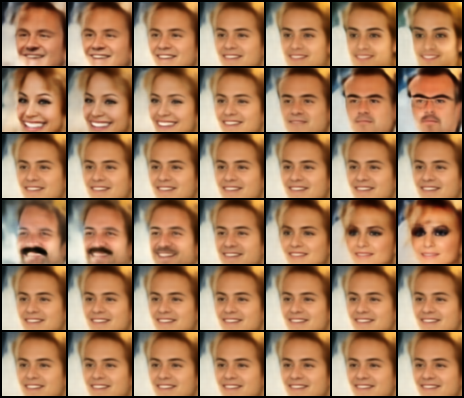}}

\subfigure[S-FactorVAE (CelebA-Smile)]{
\includegraphics[width=0.4\textwidth]{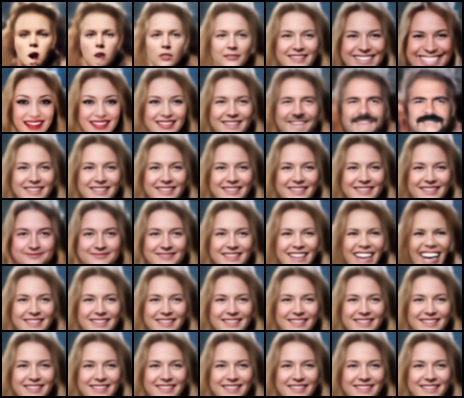}}
\subfigure[S-FactorVAE (CelebA-Attractive)]{
\includegraphics[width=0.4\textwidth]{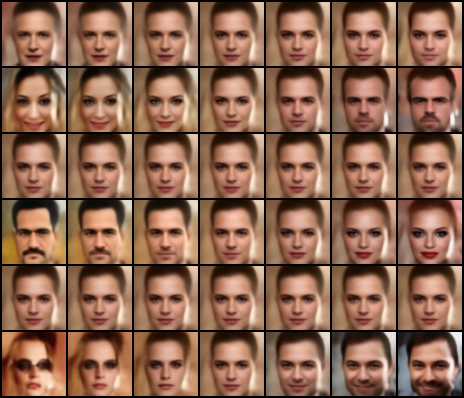}}

\subfigure[S-$\beta$-VAE (CelebA-Smile)]{
\includegraphics[width=0.4\textwidth]{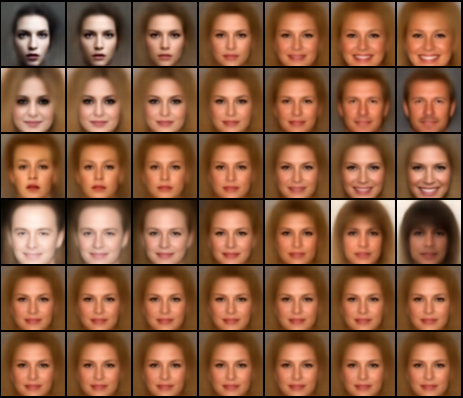}}
\subfigure[S-$\beta$-VAE (CelebA-Attractive)]{
\includegraphics[width=0.4\textwidth]{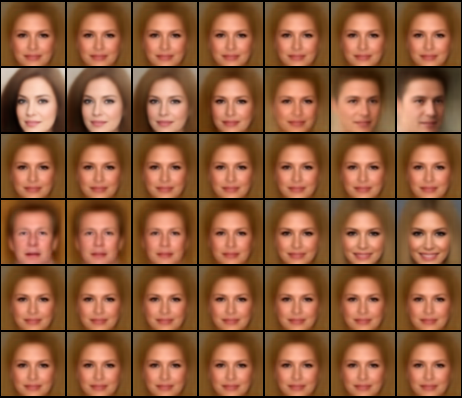}}
\caption{Traversal results of baseline methods. We see that entanglement occurs and some factors are not captured by the generative models (traversing on some dimensions of the latent vector makes no difference in the decoded images.) Besides, the generated images from VAEs are blurry.}\label{fig:vae_traversal}
\end{figure}

\begin{figure}[H]
\centering
\subfigure[CausalGAN (CelebA-Smile)]{
\includegraphics[width=0.4\textwidth]{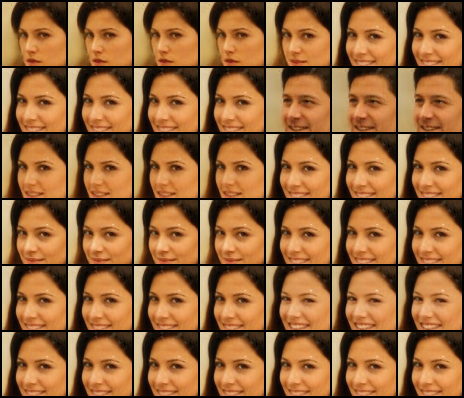}}
\subfigure[CausalGAN (CelebA-Attractive)]{
\includegraphics[width=0.4\textwidth]{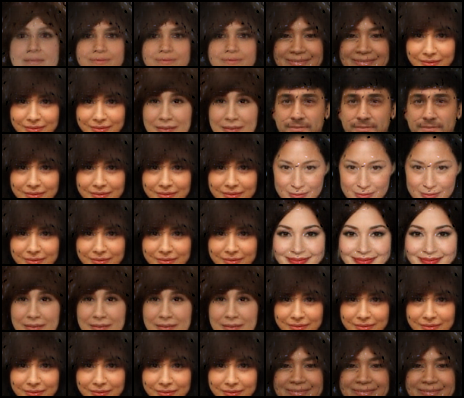}}

\subfigure[S-TCVAE (Pendulum)]{
\includegraphics[width=0.4\textwidth]{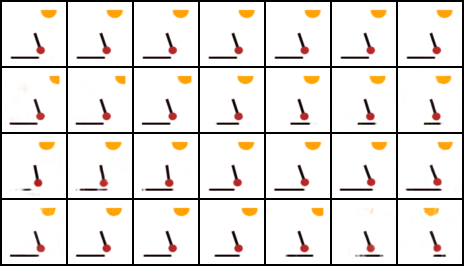}}
\subfigure[S-FactorVAE (Pendulum)]{
\includegraphics[width=0.4\textwidth]{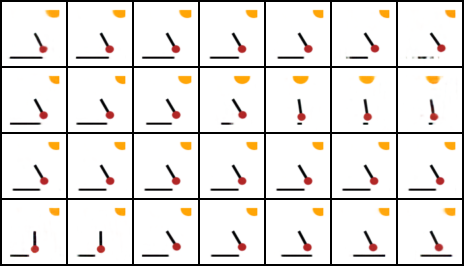}}

\subfigure[S-GraphVAE (CelebA-Attractive)]{
\includegraphics[width=0.4\textwidth]{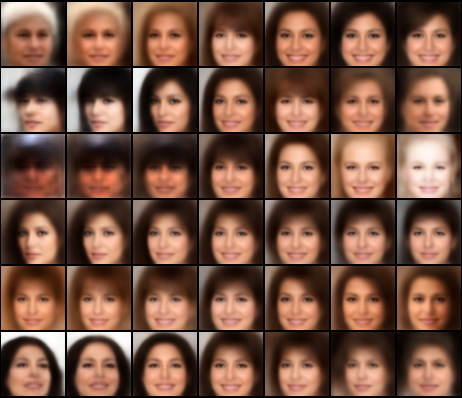}}
\subfigure[S-GraphVAE (Pendulum)]{
\includegraphics[width=0.4\textwidth]{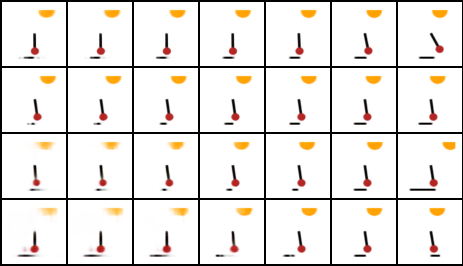}}
\caption{Traversal results of baseline methods. CausalGAN uses the binary factors as the conditional attributes, so the traversals (a-b) appear some sudden changes. In contrast, we regard the continuous logit of binary labels as the underlying factors and hence enjoy smooth manipulations. In addition, the controllability of CausalGAN is also limited, since entanglement still exists. Results of S-VAEs are explained in Figure~\ref{fig:vae_traversal}. The traversal of S-GraphVAE on Pendulum looks better than those of S-VAEs, especially in the first two factors, while the performance on CelebA is poor. Besides, S-GraphVAE has poor generation quality.}
\end{figure}

\vskip 0.2in
\nocite{*} 
\bibliography{main}

\end{document}